\begin{document}

\title{Consistent Robust Adversarial Prediction \\for General Multiclass Classification}

\author{\name Rizal Fathony \email rfathony@cs.cmu.edu\\ 
\addr School of Computer Science, Carnegie Mellon University
\AND 
\name Kaiser Asif \email kasif@conversantmedia.com\\ 
\addr Conversant LLC
\AND 
\name Anqi Liu \email anqiliu@caltech.edu\\ 
\addr Department of Computing and Mathematical Sciences, California Institute of Technology
\AND 
\name Mohammad Ali Bashiri \email mbashi4@uic.edu\\ 
\addr Department of Computer Science, University of Illinois at Chicago
\AND 
\name Wei Xing \email wxing3@uic.edu\\ 
\addr Department of Computer Science, University of Illinois at Chicago
\AND 
\name Sima Behpour \email sbehpour@seas.upenn.edu\\ 
\addr Department of Computer and Information Science, University of Pennsylvania
\AND 
\name Xinhua Zhang \email zhangx@uic.edu\\ 
\addr Department of Computer Science, University of Illinois at Chicago
\AND 
\name Brian D. Ziebart \email bziebart@uic.edu\\ 
\addr Department of Computer Science, University of Illinois at Chicago
}

\editor{}

\maketitle

\begin{abstract}%   <- trailing '%' for backward compatibility of .sty file
We propose a \emph{robust adversarial prediction framework} for general multiclass classification.
Our method seeks predictive distributions that robustly optimize non-convex and non-continuous multiclass loss metrics against the worst-case conditional label distributions (the adversarial distributions) that (approximately) match 
the statistics of the training data.
Although the optimized loss metrics are non-convex and non-continuous, the dual formulation of the framework is a convex optimization problem that can be recast as 
a risk minimization model with a prescribed convex surrogate loss we call \emph{the adversarial surrogate loss}.
We show that the adversarial surrogate losses 
fill an existing gap in surrogate loss construction for general multiclass classification problems, by simultaneously aligning better with the original multiclass loss, guaranteeing Fisher consistency, enabling a way to incorporate rich feature spaces via the kernel trick, and providing competitive performance in practice.
\end{abstract}

\begin{keywords}
adversarial prediction, multiclass classification, surrogate loss, Fisher consistency, robust distribution.
\end{keywords}

\section{Introduction}

Multiclass classification is a canonical machine learning task in which a predictor chooses a predicted label from a finite number of possible class labels. For many application domains, the penalty for 
making an incorrect prediction is defined by a loss function that depends on the value of the predicted label and the true label. 
Some example of the task are the
zero-one loss classification where the predictor suffers a loss of one when making incorrect prediction and zero otherwise as well as the ordinal classification (also known as ordinal regression) where the predictor suffers a loss that increases as the prediction moves away from the true label.

Empirical risk minimization (ERM) \citep{vapnik1992principles} is a standard approach for solving general multiclass classification problems by finding the classifier that minimizes a loss metric over the training data. However, since directly minimizing this loss over training data within the
ERM framework is generally 
NP-hard \citep{Steinwart2008SVM}, convex surrogate losses that can be efficiently optimized are
employed to approximate the loss.
Constructing surrogate losses for binary classification has been well studied, resulting in surrogate losses that enjoy desirable theoretical properties and good performance in practice. Among the popular examples are the logarithmic loss, which is minimized by the logistic regression classifier 
\citep{mccullagh1989generalized}, and the hinge loss, which is minimized by the support 
vector machine (SVM) 
\citep{boser1992training,cortes1995support}. 
Both of these surrogate losses are Fisher consistent \citep{lin2002support,bartlett05} for binary 
classification, 
meaning they minimize the zero-one loss and yield the Bayes optimal decision 
when they learn from any true distribution of data using a sufficiently rich feature 
representation.
SVMs provide the additional advantage that 
when combined with kernel methods, extremely rich feature representations
can be efficiently incorporated.

Unfortunately, generalizing the hinge loss to 
multiclass
classification tasks with more than two labels
in a theoretically-sound manner
is challenging.
In the case of multiclass zero-one loss for example,
existing extensions of the hinge loss to multiclass convex surrogates 
\citep{crammer2002algorithmic,weston1999support,lee2004multicategory}
tend to lose their Fisher consistency guarantees 
\citep{tewari2007consistency,liu2007fisher} or produce low 
accuracy predictions in practice \citep{dogan2016unified}.
In the case of multiclass ordinal classification,
surrogate losses are usually constructed by transforming the binary hinge loss to take into account the different penalties of the ordinal regression problem using  thresholding methods \citep{shashua2003ranking,chu2005new,lin2006large,rennie2005lossfunctions,li2007ordinal}, or sample re-weighting methods \citep{li2007ordinal}.
Many methods for other general multiclass problems also rely on similar transformations of the binary hinge loss to construct convex surrogates \citep{binder2012taxonomies,ramaswamy2018consistent,lin2014reduction}. 
Empirical evaluations have compared the appropriateness of different 
surrogate losses for general multiclass classification,
but these still leave
the possibility of undiscovered surrogates 
that align better with the original multiclass classification loss.

To address these limitations, we propose a \emph{robust adversarial prediction framework} that seeks the most robust \citep{grunwald2004game,delage2010distributionally} prediction distribution
that minimizes the loss metric in the worst-case given statistical summaries of the empirical distributions. 
We replace the empirical training data for evaluating our predictor with an adversary that is free to choose an evaluating distribution from the set of distributions that (approximately) match the statistical summaries of empirical training data via moment matching constraints of the features.
Although the optimized loss metrics are non-convex and non-continuous, we show that the dual formulation of the framework is a convex empirical risk minimization model with a prescribed convex surrogate loss that we call the \emph{adversarial surrogate loss}.

We develop algorithms to compute the adversarial surrogate losses efficiently: linear time for ordinal classification with the absolute loss metric, quasilinear time for the zero-one loss metric, and linear program-based algorithm for more general loss metrics.
We show that the adversarial surrogate losses 
fill the existing gap in surrogate loss construction for general multiclass classification problems
by simultaneously: (1) aligning better with the original multiclass loss metric, since optimizing the surrogate loss is equivalent with optimizing the original loss metric in the primal adversarial prediction formulation; (2) guaranteeing Fisher consistency; (3) enabling computational efficiency in a rich feature representation via the kernel trick; 
and (4) providing competitive performance in practice.

\subsection{Contributions of the Paper}

Some of the contents in this paper have previously appeared in machine learning conferences: the adversarial prediction formulation for general loss matrices \citep{asif2015adversarial}, the adversarial surrogate loss for the multiclass zero-one loss metric \citep{fathony2016adversarial}, the adversarial surrogate loss for ordinal classification with the absolute loss metric \citep{fathony2017adversarial}, and the Fisher consistency proof in the case of symmetric loss metrics \citep{fathony2018efficient}. 
This paper also contains distinct elements to provide a more general view of the adversarial prediction framework for general multiclass classification that have not previously been presented in the conference papers. 
The following is a summary of the new contributions included in this paper:
\begin{enumerate}
    \item A general view of adversarial surrogate losses for general multiclass classification problems (Section 3);
    \item A new proof technique for deriving the corresponding surrogate loss for a given loss metrics to optimize, based on the extreme points enumeration of the convex polytope (proofs in Section 3); 
    \item An extension to the ordinal classification problem using the squared loss rather than the absolute loss (Section 3.3);
    \item An analysis of the adversarial surrogate loss for the weighted loss metrics (Section 3.4);
    \item The loss formulation and prediction scheme of the adversarial surrogate loss for the task of classification with abstention (Section 3.5, Section 4.3);
    \item A Fisher consistency analysis for non-symmetric loss metrics under potential-based prediction schemes (Section 5.1);
    \item A Fisher consistency analysis for the case where the set of the predictor's options are different from the set of ground truth labels (Section 5.2); 
    \item A primal optimization algorithm to incorporate rich feature spaces via the kernel trick based on the PEGASOS algorithm (Section 6.2); and
    \item Additional experiments for the classification with abstention tasks (Section 7.3).
\end{enumerate}

\subsection{Paper Organization}

This article is organized as follows. The next section formulates the general multiclass classification problem, demonstrates some example problems, and discusses related techniques that solve these problems. 
Section 3 presents our adversarial prediction framework formulation, and the adversarial surrogate losses constructed from the dual formulation of the framework for several loss metrics including the zero-one loss, absolute loss, squared loss, and abstention-based loss metrics. 
Section 4 presents two different schemes for making predictions, probabilistic and non-probabilistic schemes.
Section 5 establishes the Fisher consistency property of adversarial surrogate losses. 
Section 6 presents algorithms for optimizing the adversarial surrogate losses as well as the technique to incorporate the kernel trick into the algorithm. 
Finally, Section 7 discusses experimental evaluations and the empirical advantages of the adversarial surrogate losses compared to the state-of-the-art techniques
that can be viewed as risk minimization methods with piece-wise convex surrogates. This includes the generalization of hinge loss and SVM to general multiclass classification problems.

% include other files
\section{Preliminaries and Related Works}

In multiclass classification problems, 
the predictor needs to predict a variable by choosing one class 
% the ground truth label are chosen 
from a finite set of possible class labels. 
The most popular form of multiclass classification uses zero-one loss metric minimization as the objective. This loss metric penalizes all mistakes equally with a loss of one for incorrect predictions and zero loss otherwise. In fact, the term ``multiclass classification'' itself, is widely used to refer to this specific variant that uses the zero-one loss as the objective. We refer to ``general multiclass classification'' as the multiclass classification task that can use any loss metric defined based on the predictor's label prediction and the true label in this work.

\subsection{General Multiclass Classification}

In a general multiclass classification problem, the predictor is provided with training examples that are pairs of training data and labels $\{({\bf x}_1, y_{1}), \hdots , ({\bf x}_n, y_{n}) \}$ drawn i.i.d. from a distribution $D$ on $\mathcal{X} \times \mathcal{Y}$, where $\mathcal{X}$ is the feature space and $\mathcal{Y} = [k] \triangleq \{1,\hdots,k\}$ is a finite set of class labels. 
For a given data point ${\bf x}$, the predictor has to provide a class label prediction $\hat{y} \in \mathcal{T} = [l] \triangleq  \{1,\hdots,l\}$. 
Although the set of prediction labels $\Tcal$ is usually the same as the set of ground truth labels $\Ycal$, we also consider settings in which they differ. 
A multiclass loss metric $\text{loss}(\hat{y}, y) : \mathcal{T} \times \mathcal{Y} \rightarrow [0,\infty)$, denotes the loss incurred by predicting $\hat{y}$ when the true label is $y$. 
The loss metric, $\text{loss}(\hat{y}, y)$, is also commonly written as a loss matrix ${\bf L} \in \mathbb{R}_+^{l \times k}$ (in this case, $\mathbb{R}_+$ refers to $[0,\infty)$), where the value of a matrix cell in $i$-th row and $j$-th column corresponds to the value of $\text{loss}(\hat{y}, y)$ when $\hat{y} = i$ and $y=j$. 
Some examples of the loss metrics for general multiclass classification problems are:
\begin{enumerate}
\item \textbf{Zero-one loss metric}. The predictor suffers one loss if its prediction is not the same as the true label, otherwise it suffers zero loss, $\text{loss}^{\text{0-1}}(\hat{y}, y) = I(\hat{y} \neq y)$. 
\item \textbf{Ordinal classification with absolute loss metric}. The predictor suffers a loss that increases as the prediction moves farther away from the true label. A canonical example for ordinal classification loss metric is the absolute loss, $\text{loss}^{\text{ord}}(\hat{y}, y) = |\hat{y} - y|$.
\item \textbf{Ordinal classification with squared loss metric}. The squared loss metric,
$\text{loss}^{\text{sq}}(\hat{y}, y) = (\hat{y} - y)^2$, is also popular for evaluating ordinal classification predictions.

\item \textbf{Classification with abstention}. In this prediction setting, a standard zero-one loss metric is used. However, the predictor has an additional prediction option to abstain from making a label prediction. Hence, $\Tcal \neq \Ycal$ in this setting. A constant penalty $\alpha$ is incurred whenever the predictor chooses to use the abstain option.

\end{enumerate}

Example loss matrices for these classification problems are shown in Figure \ref{fig:loss-matrix}.
\begin{figure*}[h]
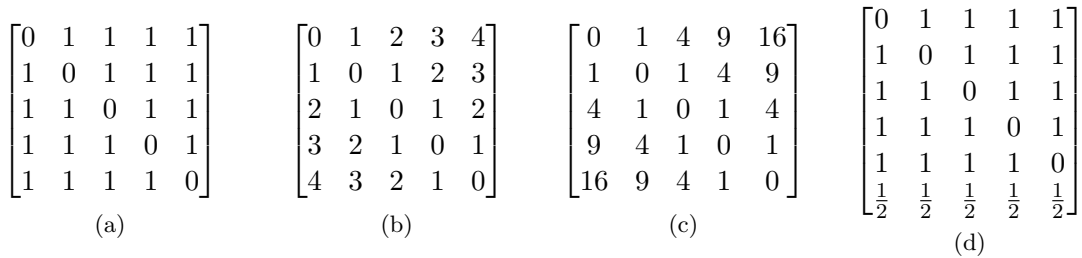

	\begin{minipage}{.25\linewidth}
		\centering
		\subfloat[]{
        \label{fig:lm-zo}
        $
        \begin{bmatrix}
        0 & 1 & 1 & 1 & 1 \\
        1 & 0 & 1 & 1 & 1 \\
        1 & 1 & 0 & 1 & 1 \\
        1 & 1 & 1 & 0 & 1 \\
        1 & 1 & 1 & 1 & 0
        \end{bmatrix}
        $
        }
	\end{minipage}%
	\begin{minipage}{.25\linewidth}
		\centering
		\subfloat[]{
        \label{fig:lm-abs}
        $
        \begin{bmatrix}
        0 & 1 & 2 & 3 & 4 \\
        1 & 0 & 1 & 2 & 3 \\
        2 & 1 & 0 & 1 & 2 \\
        3 & 2 & 1 & 0 & 1 \\
        4 & 3 & 2 & 1 & 0
        \end{bmatrix}
        $
        }
	\end{minipage}%
	\begin{minipage}{.25\linewidth}
		\centering
		\subfloat[]{
        \label{fig:lm-sq}
        $
        \begin{bmatrix}
        0 & 1 & 4 & 9 & 16 \\
        1 & 0 & 1 & 4 & 9 \\
        4 & 1 & 0 & 1 & 4 \\
        9 & 4 & 1 & 0 & 1 \\
        16 & 9 & 4 & 1 & 0
        \end{bmatrix}
        $
        }
	\end{minipage}%
	\begin{minipage}{.25\linewidth}
		\centering
		\subfloat[]{
        \label{fig:lm-abstain}
        $
        \begin{bmatrix}
        0 & 1 & 1 & 1 & 1 \\
        1 & 0 & 1 & 1 & 1 \\
        1 & 1 & 0 & 1 & 1 \\
        1 & 1 & 1 & 0 & 1 \\
        1 & 1 & 1 & 1 & 0 \\
        \frac12 & \frac12 & \frac12 & \frac12 & \frac12 
        \end{bmatrix}
        $
        }
	\end{minipage}
    
	\caption{
    Examples of the loss matrices for general multiclass classification when the number of class labels is 5 and the loss metric is: the zero-one loss (a), ordinal regression with the absolute loss (b), ordinal regression with the squared loss (c), and classification with abstention and $\alpha = \frac12$ (d).
		}
	\label{fig:loss-matrix}
\end{figure*}

\subsection{Empirical Risk Minimization and Fisher Consistency}
\label{sec:erm}

A standard approach to parametric classification 
%given a loss function $\text{loss}(\cdot,\cdot)$ 
is to assume some functional
form for the classifier (e.g., a linear discriminant function,
${\yhat}_{\theta}({\bf x}) = \operatornamewithlimits{argmax}_{y} 
\theta^\intercal \phi({\bf x},y)$, where 
$\phi({\bf x},y) \in \mathbb{R}^m$ is a feature
function) 
and then select model parameters $\theta$ that minimize the
empirical risk, 
\begin{align}
\argmin_\theta 
\mathbb{E}_{{\bf X},Y \sim \tilde{P}}\left[\text{loss}\left({\yhat}_{\theta}({\bf X}),Y\right)\right] + \lambda ||\theta||,
\end{align}
with a regularization penalty $\lambda||\theta||$ often added to avoid
overfitting to available training data\footnote{Lowercase non-bold, $x$,
and bold, ${\bf x}$, denote scalar and vector values, and capitals, 
$X$ or ${\bf X}$, denote random variables.}\!\!.
%We denote scalar values 
%and vector values lowercase non-bold, $x$, and bold, ${\bf x}$, 
%non-bold lowercase $x$, multivariate values in bold lowercase ${\bf x}$,
%and random variables in capital $X$ or ${\bf X}$.}.
Unfortunately, many combinations of classification functions,
$\yhat_\theta({\bf x})$, and loss metrics, %$\text{loss}(\cdot,\cdot)$, 
do not
lend themselves to efficient parameter optimization
under the empirical risk minimization (ERM) formulation.  For example, the
zero-one loss measuring the misclassification rate will generally lead to a
non-convex empirical risk minimization problem that is NP-hard to solve
\citep{hoffgen1995robust}.

\begin{figure}[ht]
% \vspace{-1mm}
\centering
\includegraphics[width=0.40\textwidth]{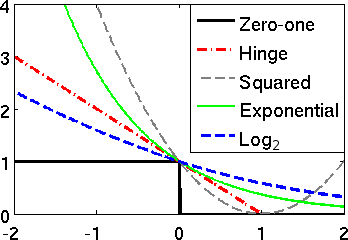}
\caption{Convex surrogates for the zero-one loss.}
\label{fig:losses}
\end{figure}

To avoid these intractabilities,
convex surrogate loss functions %s of the zero-one loss 
(Figure \ref{fig:losses}) that 
serve as upper bounds on the desired loss metric are 
often used to create tractable optimization objectives.
The popular support vector machine (SVM) classifier
\citep{cortes1995support}, for example,
employs the hinge-loss---an upper bound on the zero-one loss---to 
avoid the often intractable empirical risk minimization problem.
The logistic regression classifier \citep{mccullagh1989generalized} performs a probabilistic prediction
by minimizing the logarithmic loss, whereas
Adaboost \citep{freund1997decision} incrementally minimizes the exponential
loss.

There are many ways to construct convex surrogate loss functions for a given loss metric that we want to optimize. An important property for theoretically guaranteeing optimal prediction is Fisher consistency. It requires a learning method to produce Bayes optimal predictions which minimize the expected loss of this distribution, 
$\hat{y} \in \argmax_{y'} \mathbb{E}_{Y\sim P}[\text{loss}({y'},Y)]$ under ideal learning conditions (trained from the true data distribution $P(Y|\Xbf)$ using a fully expressive feature representation). Fisher consistency property guarantees that a learning algorithm (i.e. surrogate loss) reaches the optimal prediction under the original loss metric in the limit.
\citet{tewari2007consistency} presented 
techniques to characterize the Fisher consistency of surrogate losses for the multiclass zero-one loss metric, which then is extended by \citet{ramaswamy2012classification,ramaswamy2016convex}
to general multiclass loss metrics.

\subsection{General Multiclass Classification Methods}

A variety of methods have been proposed to address the general multiclass classification problem. 
Most of the methods can be viewed as optimizing surrogate losses that come from the extension of binary surrogate loss, e.g., hinge loss (used by SVM), logistic loss (used by logistic regression) and exponential loss (used by AdaBoost), 
to general multiclass cases.
We narrow our focus over this broad range of methods found in the related work to those that can be viewed as empirical risk minimization methods with piece-wise convex surrogates (i.e. generalized hinge loss / generalized SVM), which are more closely related to our approach. 

\subsubsection{Multiclass Zero-one Classification}

% The multiclass support vector machine (SVM) seeks class-based potentials 
% $f_y({\bf x})$ for each input vector ${\bf x} \in \boldsymbol{\mathcal{X}}$ 
% and class $y \in \mathcal{Y}$ so that the discriminant function,
% $\hat{y}_{\bf f}({\bf x}) = \argmax_y f_y({\bf x})$, 
% minimizes misclassification errors,
% $\text{loss}_{\bf f}({\bf x}, y) = I(y \neq \hat{y}_{\bf f}({\bf x}))$.  
Many methods have been proposed to generalize SVM to the multiclass  classification setting. 
Apart from the one-vs-all and one-vs-one decomposed formulations 
\citep{deng2012support},
there are three main joint formulations: 
\begin{enumerate}
\item 
The WW model by
\citet{weston1999support}, which incorporates the sum of hinge losses for all 
alternative labels,
\[
\text{loss}_{\text{WW}}({\bf x}, y) = 
\textstyle\sum_{j \neq y} \, [1 + (f_j({\bf x}) - f_{y}({\bf x}))]_+ ;
\]
\item The CS model by \citet{crammer2002algorithmic}, which uses the hinge
loss of only the largest alternative label,
\[
\text{loss}_{\text{CS}}({\bf x}, y) = 
\textstyle\max_{j \neq y} \left[1 + 
\left(f_{j}({\bf x}) - f_{y}({\bf x})\right)\right]_+ ; \text{ and}
\]
\item The LLW model by 
\citet{lee2004multicategory}, which employs an absolute hinge loss,
\[
\text{loss}_{\text{LLW}}({\bf x}, y) = \textstyle\sum_{j \neq y}  \left[1 + f_j({\bf x})\right]_+ ,
\]
and a constraint that $\sum_{j}  f_j({\bf x}) = 0$.
\end{enumerate}
% the WW model by
% \citet{weston1999support}, which incorporates the sum of hinge losses for all 
% alternative labels,
% $ \text{loss}_{\text{WW}}({\bf x}_i, y_i) = 
% \sum_{j \neq y_i} \, [1 - (f_{y_i}({\bf x}_i) - f_j({\bf x}_i))]_+
% $;
% the CS model by \citet{crammer2002algorithmic}, which uses the hinge
% loss of only the largest alternative label,
% $ \text{loss}_{\text{CS}}({\bf x}_i, y_i) = 
% \max_{j \neq y_i} \left[1- 
% \left(f_{y_i}({\bf x}_i) - f_{j}({\bf x}_i)\right)\right]_+
% $; 
% and the LLW model by 
% \citet{lee2004multicategory}, which employs an absolute hinge loss,
% $\text{loss}_{\text{LLW}}({\bf x}_i, y_i) = \sum_{j \neq y_i}  \left[1 + f_j({\bf x}_i)\right]_+$, and a constraint that $\sum_{j}  f_j({\bf x}_i) = 0$.
The former two models (CS and WW) both utilize the pairwise class-based potential differences $f_j({\bf x}) - f_{y}({\bf x})$ and are therefore categorized as relative margin methods. LLW, on the other hand, is an absolute margin method that only relates to $f_j({\bf x})$ \citep{dogan2016unified}.

Fisher consistency, or Bayes consistency  \citep{bartlett05,tewari2007consistency}, guarantees that minimization of a surrogate loss under the true distribution
provides the Bayes-optimal classifier, i.e., minimizes the  zero-one loss. 
%If given any possible distribution of data, 
%a classifier is Bayes-optimal, it is called universally consistent.
Among these methods, only the LLW method is Fisher consistent % and universally consistent
%It has been shown that neither the WW model nor the CS model are Fisher consistent 
\citep{lee2004multicategory,tewari2007consistency,liu2007fisher}.
%The LLW model provides a different approach by construction a loss function with Fisher consistency in mind, .
However, as pointed out by \citet{dogan2016unified}, LLW's use of an absolute 
margin in the loss (rather than the relative margin of WW and CS) often 
causes it to perform poorly for datasets with low dimensional feature spaces.
From the opposite direction, the requirements for Fisher consistency have been 
well-characterized \citep{tewari2007consistency}, yet this has not led to a
multiclass classifier that is Fisher consistent and performs well in
practice.

\subsubsection{Multiclass Ordinal Classification}

Existing techniques for ordinal classification that optimize piece-wise convex surrogates can be categorized into three groups as follows.

\begin{enumerate}

\item \underline{Threshold methods for ordinal classification}. \\
Threshold methods treat the ordinal response variable, $\hat{f} \triangleq {\bf w}\cdot{\bf x}$, as a continuous real-valued variable and introduce $k-1$ thresholds $\eta_1, \eta_2,..., \eta_{k-1}$ that partition the real line into $k$ segments: $\eta_0 = -\infty < \eta_1 < \eta_2 < ... < \eta_{k-1} < \eta_{k} =\infty $. Each segment corresponds to a label with $\hat{y}_i$  assigned label $j$ if $\eta_{j-1}< \hat{f} \leq \eta_{j}$.
There are two different approaches for constructing surrogate losses based on the threshold methods to optimize the choice of ${\bf w}$ and $\eta_1, \hdots, \eta_{k-1}$ \citep{shashua2003ranking,chu2005new,rennie2005lossfunctions}. 
\emph{All thresholds} method (also called SVORIM) 
penalizes all thresholds involved when a mistake is made.
\emph{Immediate thresholds} (also called SVOREX) only penalizes the most immediate thresholds.

\item \underline{A reduction framework from ordinal classification to binary classification.} \\
\citet{li2007ordinal} proposed a reduction framework to convert ordinal regression problems to binary classification problems by extending training examples. For each training sample $({\bf x}, y)$, the reduction framework creates $k-1$ extended samples $({\bf x}^{(j)}, y^{(j)})$ and assigns weight $w_{y,j}$ to each extended sample. The binary label associated with the extended sample is equivalent to the answer of the question: ``is the rank of ${\bf x}$ greater than $j$?'' The reduction framework %provides options to 
allows a choice for how extended samples ${\bf x}^{(j)}$ are constructed from original samples ${\bf x}$ and how to perform binary classification. 

\item \underline{Cost-sensitive classification methods for ordinal classification.} \\
Rather than using thresholding or the reduction framework, ordinal regression can also be cast as a special case of cost-sensitive multiclass classification.
Two of the most popular classification-based ordinal regression techniques are extensions of one-versus-one (OVO) and one-versus-all (OVA) 
cost-sensitive classification \citep{lin2008ordinal,lin2014reduction}.
Both algorithms leverage a transformation that converts a 
cost-sensitive classification problem to a set of weighted binary classification problems. 
Rather than reducing to binary classification, \citet{tu2010one} reduce cost-sensitive classification to one-sided regression (OSR), which can be viewed as an extension of the one-versus-all (OVA) technique. \\

\end{enumerate}

% In terms of Fisher consistency of surrogate losses for ordinal classification,
A recent analysis by \citet{pedregosa2017consistency} shows that many surrogate losses for ordinal classification enjoy Fisher consistency. For example, the \emph{all thresholds} and \emph{immediate thresholds} methods are Fisher consistent provided that the base binary surrogate losses they use are convex with differentiability and a negative derivative at zero.

\subsubsection{Multiclass Classification with Abstention}

In the classification with abstention setting, a standard zero-one loss is used to evaluate the prediction. However, the predictor has an additional option to abstain from making a label prediction and suffer a constant penalty $\alpha$. 
In the literature, this type of prediction setting is also called  %``\textit{classification with abstention}'' or
``\textit{classification with reject option}''.

Most of the early papers on classification with abstention focused on the binary prediction case. \citet{bartlett2008classification} proposed a consistent surrogate loss based on the SVM's hinge loss for binary classification with abstention where the value of $\alpha$ is restricted to the interval $[0, \frac12]$. \citet{grandvalet2009support} extended the approach to the case where the abstention penalty between the positive class $\alpha_+$ and negative class $\alpha_{-}$ is non-symmetric. A recent study by \citet{cortes2016boosting} proposed a modification of the boosting algorithm \citep{freund1997decision} that incorporate the abstention setting into the prediction. They also proposed a base weak classifier, \textit{abstention stump}, which is a modification from the popular weak classifier for the standard boosting algorithm (decision stump).

For the multiclass classification setting, a recent paper by \citet{ramaswamy2018consistent} proposed several algorithms that extend the binary hinge loss to the case of multiclass classification with abstention. 
They extended the definition of SVM's one-versus-all (OVA) and Crammer-Singer (CS) models to incorporate the abstention penalty. They also proposed a consistent algorithm for multiclass classification with abstention in the case of $\alpha \in [0, \frac12]$, by encoding the prediction classes in binary number representation and formulate a binary encoded prediction (BEP) surrogate.

\section{Adversarial Prediction Formulation}

In a general multiclass classification problem, 
the predictor needs to make a label prediction $\hat{y} \in \mathcal{T} = \{1,\hdots,l\}$ for a given data point ${\bf x}$.
To evaluate the performance of the prediction, we compute the multiclass loss metric $\text{loss}(\hat{y}, y)$ by comparing the prediction to the ground truth label $y$.
The predictor is also allowed to make a probabilistic prediction by outputting a conditional probability $\hat{P}(\hat{Y}|{\bf x})$. In this case, the expected loss $\Ebb_{\hat{Y} | {\bf x} \sim \hat{P}} \; \text{loss}(\hat{Y}, y) = \sum_{i=1}^l \hat{P}(\hat{Y}=i|{\bf x}) \; \text{loss}(i, y)$ is measured. Note that in our notation, the upper case $Y$ and ${\bf X}$ refer to random variables (of a scalar and vector respectively) while lower case $y$ and ${\bf x}$ refer to the observed variables.

Our approach seeks a predictor that robustly minimizes a multiclass loss metric against the worst-case distribution that (approximately) matches the statistics of the training data.
In this setting, a predictor makes a probabilistic prediction over the set of all possible labels (denoted as $\hat{P}(\hat{Y}|{\bf X})$). Instead of evaluating the predictor with the empirical distribution, the predictor is pitted against an adversary that also makes a probabilistic prediction (denoted as $\check{P}(\check{Y}|{\bf X})$). 
The predictor's objective is to minimize the expected loss metric calculated from the predictor's and adversary's probabilistic predictions, while the adversary seeks to maximize the loss. The adversary is constrained to select a probabilistic prediction that matches the statistical summaries of the empirical training distribution (denoted as $\tilde{P}$) via moment-matching constraints on the features $\phi({\bf x}, y)$. 

\begin{definition}  \label{def:adv}
	In the {\bf adversarial prediction framework} for general multiclass classification,
	the predictor player first selects a predictive distribution,
    $\hat{P}(\hat{Y}|{\bf X})$,
	for each input ${\bf x}$,
	from the conditional probability simplex, % $\Delta$,
	and then the adversarial player selects an evaluation distribution,
    $\check{P}(\check{Y}|{\bf X})$,
	for each input ${\bf x}$
	from the set %$\Xi$ 
	of distributions consistent with the known statistics:
	\begin{align}
    \min_{\hat{P}(\hat{Y}|{\bf X})} \; \max_{\check{P}(\check{Y}|{\bf X}) %\in \Xi
    } \; &
    \mathbb{E}_{{\bf X} \sim \tilde{P}; \hat{Y}|{\bf X}\sim\hat{P};
    	\check{Y}|{\bf X}\sim \check{P}} \left[\text{loss}(\hat{Y}, \check{Y}) \right]  \label{eq:def} \\ 
    \text{subject to:  } & \mathbb{E}_{{\bf X} \sim 
    \tilde{P};
    	\check{Y}|{\bf X}\sim \check{P}}[\phi({\bf X},\check{Y})]
    = \mathbb{E}_{{\bf X},{Y} \sim \tilde{P}}\left[\phi({\bf X},{Y}) \right]. \nonumber
    \end{align}
	Here, the statistics $\mathbb{E}_{{\bf X},{Y} \sim \tilde{P}}\left[\phi({\bf X},{Y}) \right]$ are a vector of provided feature moments measured from
	training data.
\end{definition}

For the purpose of establishing efficient learning algorithms, we 
use the method of Lagrangian multipliers and strong duality for convex-concave saddle point problems \citep{von1945theory,sion1958general} to formulate the equivalent dual optimization as stated in Theorem \ref{thm:dual}.

\begin{theorem}  \label{thm:dual}
	Determining the value of the \emph{constrained} adversarial prediction minimax game
	reduces to a minimization over the empirical average of the value of many
	unconstrained minimax games:
	\begin{align}
	\min_{\theta}
     \mathbb{E}_{{\bf X},{Y}\sim \tilde{P}} \left[
     \max_{\check{P}(\check{Y}|{\bf X})}
     \min_{\hat{P}(\hat{Y}|{\bf X})}
    \mathbb{E}_{\hat{Y}|{\bf X}\sim\hat{P};
    \check{Y}|{\bf X}\sim{\check{P}}}\left[ \text{loss}(\hat{Y},\check{Y})
    + \theta^\intercal \left( \phi({\bf X},\check{Y}) - \phi({\bf X},{Y}) \right)
    \right]
    \right], \label{eq:dual}
	\end{align}
	where $\theta$ is the Lagrange dual variable for the moment matching constraints.
\end{theorem}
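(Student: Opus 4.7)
The plan is to derive \eqref{eq:dual} from \eqref{eq:def} in three stages: (i) dualize the moment-matching constraint via Lagrange multipliers $\theta$, (ii) exchange the order of the inner min and max using a minimax theorem, and (iii) push the expectation over training examples through the now-unconstrained inner minimax because the distributions $\hat{P}(\hat{Y}\mid\mathbf{X})$ and $\check{P}(\check{Y}\mid\mathbf{X})$ are free to be chosen independently per $\mathbf{X}$.

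First, I would attach a Lagrange multiplier vector $\theta$ to the equality constraint
$\mathbb{E}_{\tilde{P};\check{P}}[\phi(\mathbf{X},\check{Y})] = \mathbb{E}_{\tilde{P}}[\phi(\mathbf{X},Y)]$, forming the Lagrangian
$L(\hat{P},\check{P},\theta) = \mathbb{E}[\text{loss}(\hat{Y},\check{Y})] + \theta^{\intercal}\bigl(\mathbb{E}_{\tilde{P};\check{P}}[\phi(\mathbf{X},\check{Y})] - \mathbb{E}_{\tilde{P}}[\phi(\mathbf{X},Y)]\bigr)$.
Since the constraint is an equality, $\sup_{\theta} L(\hat{P},\check{P},\theta) = \mathbb{E}[\text{loss}(\hat{Y},\check{Y})]$ when the constraint is satisfied and $+\infty$ otherwise, so \eqref{eq:def} equals $\min_{\hat{P}}\max_{\check{P}}\inf_{\theta} L$. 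The inner maximum in $\check{P}$ subject to a linear constraint is a linear program in $\check{P}$ (linear objective, affine constraint, over the probability simplex, which is compact and convex and clearly admits a feasible $\check{P}$, e.g., the empirical conditional), so Lagrangian strong duality applies and turns $\max_{\check{P}:\,\text{constrained}}$ into $\min_{\theta}\max_{\check{P}}$. Interchanging the two outer mins then gives
$\min_{\theta}\min_{\hat{P}}\max_{\check{P}} L(\hat{P},\check{P},\theta)$.

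Next, for each fixed $\theta$, the Lagrangian $L$ is bilinear in $\hat{P}$ and $\check{P}$ (the loss term is bilinear; the $\theta^{\intercal}\phi(\mathbf{X},\check{Y})$ and $\theta^{\intercal}\phi(\mathbf{X},Y)$ terms depend only on $\check{P}$ and on the data, not on $\hat{P}$), and both distributions live in the compact convex product of probability simplices. Hence Sion's minimax theorem lets me swap $\min_{\hat{P}}$ and $\max_{\check{P}}$, giving
$\min_{\theta}\max_{\check{P}}\min_{\hat{P}} L(\hat{P},\check{P},\theta)$. Finally, because $\tilde{P}$ is a discrete empirical distribution and $\hat{P}(\hat{Y}\mid\mathbf{X})$, $\check{P}(\check{Y}\mid\mathbf{X})$ decouple across distinct $\mathbf{X}$, the pointwise minimax for each training pair $(\mathbf{x},y)$ can be carried out independently, so $\max_{\check{P}}\min_{\hat{P}}$ of the whole expected Lagrangian equals $\mathbb{E}_{(\mathbf{X},Y)\sim\tilde{P}}\bigl[\max_{\check{P}(\check{Y}\mid\mathbf{X})}\min_{\hat{P}(\hat{Y}\mid\mathbf{X})}\mathbb{E}[\text{loss}(\hat{Y},\check{Y}) + \theta^{\intercal}(\phi(\mathbf{X},\check{Y})-\phi(\mathbf{X},Y))]\bigr]$, which is exactly the right-hand side of \eqref{eq:dual}.

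The main obstacle is the justification of the two exchanges: in particular, ensuring Slater-type conditions so that Lagrangian strong duality holds at the first step (it does here because the feasible set is nonempty and the optimization is linear in $\check{P}$ over a bounded polytope), and verifying the Sion hypotheses — compact convex domains and concave–convex structure — for the inner saddle-point. Once those two technical points are settled, everything else is bookkeeping. If desired, the argument can alternatively be run by first applying Sion's theorem to the original constrained problem (which is still permissible because the constraint set for $\check{P}$ is convex and compact) and then dualizing; I would mention both routes for robustness.
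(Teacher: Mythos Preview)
Your approach is essentially the paper's: Lagrangian dualization of the moment constraint, a minimax exchange, and per-sample decomposition of the resulting unconstrained game. The paper orders the steps slightly differently---first swapping $\min_{\hat{P}}$ and $\max_{\check{P}}$ via von Neumann, then dualizing, then swapping $\max_{\check{P}}$ and $\min_\theta$ via Sion---which is exactly the alternative route you mention at the end; note also that your sentence about $\sup_\theta L$ should read $\inf_\theta L$, since the adversary is maximizing and constraint violation must send the Lagrangian to $-\infty$, not $+\infty$, to recover the constrained problem.
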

\begin{proof}
	\begin{align}
    & \min_{\hat{P}(\hat{Y}|{\bf X})}
     \max_{\check{P}(\check{Y}|{\bf X})}
    \mathbb{E}_{{\bf X}\sim \tilde{P};\hat{Y}|{\bf X}\sim\hat{P};
    \check{Y}|{\bf X}\sim{\check{P}}}\left[ \text{loss}(\hat{Y},\check{Y})\right] \\
    & \qquad \quad \text{subject to: } 
    \mathbb{E}_{{\bf X}\sim \tilde{P};\check{Y}|{\bf X}\sim \check{P}}\left[\phi({\bf X},\check{Y}) \right] = \mathbb{E}_{{\bf X},{Y} \sim \tilde{P}}\left[\phi({\bf X},{Y}) \right] \notag \\
    \overset{(a)}{=}&
     \max_{\check{P}(\check{Y}|{\bf X})}
     \min_{\hat{P}(\hat{Y}|{\bf X})}
    \mathbb{E}_{{\bf X}\sim \tilde{P};\hat{Y}|{\bf X}\sim\hat{P};
    \check{Y}|{\bf X}\sim{\check{P}}}\left[ \text{loss}(\hat{Y},\check{Y})\right] \\
    & \qquad \quad \text{subject to: } 
    \mathbb{E}_{{\bf X}\sim \tilde{P};\check{Y}|{\bf X}\sim \check{P}}\left[\phi({\bf X},\check{Y}) \right] = \mathbb{E}_{{\bf X},{Y} \sim \tilde{P}}\left[\phi({\bf X},{Y}) \right] \notag \\
    \overset{(b)}{=}&
     \max_{\check{P}(\check{Y}|{\bf X})}
     \min_{\theta}
     \min_{\hat{P}(\hat{Y}|{\bf X})}
    \mathbb{E}_{{\bf X},{Y}\sim \tilde{P};\hat{Y}|{\bf X}\sim\hat{P};
    \check{Y}|{\bf X}\sim{\check{P}}}\left[ \text{loss}(\hat{Y},\check{Y})
    + \theta^\intercal \left( \phi({\bf X},\check{Y}) - \phi({\bf X},{Y}) \right)
    \right] \\
    \overset{(c)}{=}&
     \min_{\theta}
     \max_{\check{P}(\check{Y}|{\bf X})}
     \min_{\hat{P}(\hat{Y}|{\bf X})}
    \mathbb{E}_{{\bf X},{Y}\sim \tilde{P};\hat{Y}|{\bf X}\sim\hat{P};
    \check{Y}|{\bf X}\sim{\check{P}}}\left[ \text{loss}(\hat{Y},\check{Y})
    + \theta^\intercal \left( \phi({\bf X},\check{Y}) - \phi({\bf X},{Y}) \right)
    \right] \\
    \overset{(d)}{=}&
     \min_{\theta}
     \mathbb{E}_{{\bf X},{Y}\sim \tilde{P}}
     \left[
     \max_{\check{P}(\check{Y}|{\bf X})}
     \min_{\hat{P}(\hat{Y}|{\bf X})}
    \mathbb{E}_{\hat{Y}|{\bf X}\sim\hat{P};
    \check{Y}|{\bf X}\sim{\check{P}}}\left[ \text{loss}(\hat{Y},\check{Y})
    + \theta^\intercal \left( \phi({\bf X},\check{Y}) - \phi({\bf X},{Y}) \right)
    \right]
    \right].
\end{align}
The transformation steps above are described as follows:
\begin{enumerate}[label=(\alph*), itemsep=1pt]
    \item We flip the min and max order using minimax duality \citep{von1945theory}. The domains of $\hat{P}(\hat{Y}|{\bf X})$ and $\check{P}(\check{Y}|{\bf X})$ are both compact convex sets and the objective function is bilinear, therefore, strong duality holds.
    \item We introduce the Lagrange dual variable $\theta$ to directly incorporate the equality constraints into the objective function.
    \item The domain of $\check{P}(\check{Y}|{\bf X})$ is a compact convex subset of $\mathbb{R}^n$, while the domain of $\theta$ is $\mathbb{R}^m$. The objective is concave on $\check{P}(\check{Y}|{\bf X})$ for all $\theta$ (a non-negative linear combination of minimums of affine functions is concave), while it is convex on $\theta$ for all $\check{P}(\check{Y}|{\bf X})$. Based on Sion's minimax theorem \citep{sion1958general}, strong duality holds, and thus we can flip the optimization order of $\check{P}(\check{Y}|{\bf X})$ and $\theta$.
    \item Since the expression is additive in terms of $\check{P}(\check{Y}|{\bf X})$ and $\hat{P}(\hat{Y}|{\bf X})$, we can push the expectation over the empirical distribution ${\bf X},{Y} \sim \tilde{P}$ outside and independently optimize each $\check{P}(\check{Y}|{\bf x})$ and $\hat{P}(\hat{Y}|{\bf x})$.
\end{enumerate} 
\end{proof}

The dual problem (Eq. \eqref{eq:dual}) possesses the important property of being a \textbf{convex} optimization problem in $\theta$. The objective of Eq. \eqref{eq:dual} consists of the function $\text{loss}(\hat{Y},\check{Y})
+ \theta^\intercal \left( \phi({\bf X},\check{Y}) - \phi({\bf X},{Y}) \right)$ which is an affine function with respect to $\theta$, followed by operations that preserve convexity \citep{boyd2004convex}: 
(1) the non-negative weighted sum (the expectations in the objective), 
(2) the minimization in the predictor $\hat{P}(\hat{Y}|X)$  over a non-empty convex set out of a function that is jointly convex in $\theta$ and $\hat{P}(\hat{Y}|X)$,
and (3) the point-wise maximum in the adversary distribution $\check{P}(\check{Y}|X)$ over an infinite set of convex functions. 
Therefore, the overall objective is convex with respect to $\theta$. This property is important since we can use gradient-based optimization in our learning algorithm and guarantee  convergence to the global optimum of the objective despite the fact that the original loss metrics we want to optimize in the primal formulation of the adversarial prediction (Eq. \eqref{eq:def}) are non-convex and non-continuous.

Despite the different motivations between our adversarial prediction framework and the empirical risk minimization framework, the dual optimization formulation (Eq. \eqref{eq:dual}) resembles a risk minimization problem with the surrogate loss defined as:
\begin{align}
    AL({\bf x}, y, \theta) = 
    \max_{\check{P}(\check{Y}|{\bf x})}
     \min_{\hat{P}(\hat{Y}|{\bf x})}
    \mathbb{E}_{\hat{Y}|{\bf x}\sim\hat{P};
    \check{Y}|{\bf x}\sim{\check{P}}}\left[ \text{loss}(\hat{Y},\check{Y})
    + \theta^\intercal \left( \phi({\bf x},\check{Y}) - \phi({\bf x},{y}) \right)
    \right]. \label{eq:al-hat-check}
\end{align}
We call this surrogate loss the ``adversarial surrogate loss'' or in short ``AL''. In the next subsections, we will analyze more about this surrogate loss for different instances of general multiclass classification problems. 

Let us first simplify the notation used in our surrogate loss. We construct a vector ${\bf p}$ to compactly represent the predictor's conditional probability $\hat{P}(\hat{Y}|{\bf x})$, where the value of its $i$-th index is $p_i = \hat{P}(\hat{Y}=i|{\bf x})$. Similarly, we construct a vector ${\bf q}$ for the adversary's conditional probability, i.e., $q_i = \check{P}(\check{Y}=i|{\bf x})$. We also define a potential vector ${\bf f}$ whose $i$-th index stores the potential for the $i$-th class, i.e., $f_i = \theta^\intercal \phi({\bf x}, i)$. Finally, we use a matrix $\Lbf$ to represent the loss function introduced at the beginning of this section. Using these notations we can rewrite our adversarial surrogate loss as:
\begin{align}
    AL({\bf f}, y) = 
     \max_{\qvec \in \Delta}
    \min_{\pvec \in \Delta}
    \pvec^\intercal \Lbf \qvec 
    + \fvec^\intercal \qvec
    - f_{y}, 
\end{align}
where $\Delta$ denotes the conditional probability simplex. 
The maximin formulation above can be converted to a linear program as follows:
\begin{align}
    AL({\bf f}, y) = 
    \max_{\qvec, v} & \;
    v + \fvec^\intercal \qvec - f_y \label{eq:al-lp} \\
    \text{s.t.:} & \;  \Lbf_{(i,:)} \qvec \ge v \quad \forall i \in [k] \nonumber \\
    & \; q_i \ge 0 \quad \qquad \forall i \in [k] \nonumber \\
    & \; \qvec^\intercal {\bf 1}  = 1,   \nonumber % \\
%    & \; v \ge 0,
\end{align}
where $v$ is a slack variable for converting the inner minimization into sets of linear inequality constraints, and $\Lbf_{(i,:)}$ denote the $i$-th row of matrix $\Lbf$.
We will analyze the solution of this linear program for several different types of loss metrics to construct a simpler closed-form formulation of the surrogate loss.

\subsection{Multiclass Zero-One Classification}

The multiclass zero-one loss metric is one of the most popular metrics used in multiclass classification. The loss metric penalizes an incorrect prediction with a loss of one and zero otherwise, i.e., $\text{loss}(\hat{y},y) = I(\hat{y} \neq y)$. An example of zero-one loss matrix for classification with five classes can be seen in Figure \ref{fig:lm-zo}. 

We focus on analyzing the solution of the maximization in Eq. \eqref{eq:al-lp} for the case where $\Lbf$ is the zero-one loss matrix. 
Since the objective in Eq. \eqref{eq:al-lp} is linear and the constraints form a convex polytope $\Cbb$ over the space of $\begin{bmatrix} \qvec \\ v \end{bmatrix}$,
there is always an optimal solution that is an extreme point of the domain \citep[Theorem 32.2 of][]{Rockafellar70}.
The only catch is that $\Cbb$ is not bounded, 
but this can be easily addressed by adding a nominal constraint $v \ge -1$ (see Proposition \ref{prop:extreme}). 
Our strategy is to first characterize the extreme points of $\Cbb$ that may possibly solve Eq. \eqref{eq:al-lp},
and then the evaluation of adversarial loss ($AL$) becomes equivalent to finding an extreme point that maximizes the objective in Eq. \eqref{eq:al-lp}.

The polytope $\Cbb$ can be defined in its canonical form by using the half-space representation of a polytope as follows:
\begin{align}
    \Cbb =
    \left\{
    \begin{bmatrix} \qvec \\ v \end{bmatrix} \,\middle\vert\,
    \Abf \begin{bmatrix} \qvec \\ v \end{bmatrix} \ge \bvec,
    \where
    \Abf =
    \begin{bmatrix} \Lbf & -{\bf 1} \\ \Ibf & {\bf 0} \\ {\bf 1}^\intercal & 0 \\ -{\bf 1}^\intercal & 0 % \\
%    {\bf 0} & 1
    \end{bmatrix},
    \ \ 
    \bvec = 
    \begin{bmatrix} {\bf 0}\\ {\bf 0} \\ 1 \\ -1 %\\ 0 
    \end{bmatrix}
    \right\}.
    \label{eq:polytope-half-space}
\end{align}
Here $\Lbf$ is a $k$-by-$k$ loss matrix, $\Ibf$ is a $k$-by-$k$ identity matrix, ${\bf 1}$ and ${\bf 0}$ are vectors with length $k$ that contain all 1 and or all 0 respectively.
$\Abf$ has $2k + 2$ rows and $k+1$ columns. 
Below is an example of this half-space representation for a four-class classification with zero-one loss metric:
\begin{align}
\begin{array}{c} 
\phantom{0} \\ \phantom{0} \\ \text{1st block} \\ \phantom{0} \\\hdashline[2pt/2pt] \phantom{0}  \\ \phantom{0} \\ \text{2nd block} \\ \phantom{0} \\ \hdashline[2pt/2pt] \phantom{0} \\ \text{3rd block} \end{array}     
\left[
\begin{array}{ccccc}
0 & 1 & 1 & 1 & -1 \\ 1 & 0 & 1 & 1 & -1 \\ 1 & 1 & 0 & 1 & -1 \\ 1 & 1 & 1 & 0 & -1 \\ \hdashline[2pt/2pt]
1 & 0 & 0 & 0 & 0 \\ 0 & 1 & 0 & 0 & 0 \\ 0 & 0 & 1 & 0 & 0 \\ 0 & 0 & 0 & 1 & 0 \\ \hdashline[2pt/2pt]
1 & 1 & 1 & 1 & 0 \\ -1 & -1 & -1 & -1 & 0 
\end{array} 
\right]
\begin{bmatrix} q_1 \\ q_2 \\ q_3 \\ q_4 \\ v \end{bmatrix}
\ge 
\begin{bmatrix} 0 \\ 0 \\ 0 \\ 0 \\ 0 \\ 0 \\ 0 \\ 0 \\ 1 \\ -1 \end{bmatrix} .   
\end{align}

For simplicity, we divide $\Abf$ into 3 blocks of rows. The first block contains $k$ rows defining the constraints that relate the loss matrix with the slack variable $v$, the second  block also contains $k$ rows for non-negativity constraints, and the third block is for the sum-to-one constraints.

To characterize the extreme points of $\Cbb$ that solve Eq. \eqref{eq:al-lp}, 
we utilize the algebraic characterization of extreme points in a bounded polytope given by Theorem 3.17 from \citet{andreasson2005introduction}. 
For convenience, we quote it here.

\begin{proposition}[Theorem 3.17 from \citet{andreasson2005introduction}]
%\label{prop:extreme}
Let $\Pbb \triangleq \{ \cvec \in \Rbb^n \mid \Abf \cvec \geq \bvec \}$ be a bounded polytope, where $\Abf \in \Rbb^{m\times n}$ has $rank(\Abf) = n$ and $\bvec \in \Rbb^m$. 
For any $\bar{\cvec} \in \Pbb$,
let $\Ical(\bar{\cvec})$ be the set of row index $i$ such that $\Abf_{(i,:)} \bar{\cvec} = b_i$.
Let $\Abf_{\bar{\cvec}}$ and $\bvec_{\bar{\cvec}}$ be the submatrix and subvector of $\Abf$ and $\bvec$ that extract the rows in $\Ical(\bar{\cvec})$, respectively.
Then $\Abf_{\bar{\cvec}} \cvec = \bvec_{\bar{\cvec}}$ is called the \emph{equality subsystem} for $\bar{\cvec}$,
and
$\bar{\cvec} \in \Pbb$ is an extreme point if and only if $rank(\Abf_{\bar{\cvec}}) = n$.
\end{proposition}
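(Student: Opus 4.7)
The plan is to prove the two directions of the ``if and only if'' separately, using the standard definition of an extreme point: a point $\bar\cvec \in \Pbb$ is extreme iff whenever $\bar\cvec = \tfrac{1}{2}(\cvec_1 + \cvec_2)$ with $\cvec_1,\cvec_2 \in \Pbb$, one has $\cvec_1 = \cvec_2 = \bar\cvec$. Writing $\Jcal(\bar\cvec) = \{1,\dots,m\} \setminus \Ical(\bar\cvec)$ for the inactive constraints and noting that $\Abf_{(j,:)} \bar\cvec > b_j$ for every $j \in \Jcal(\bar\cvec)$ will be the key structural observation that lets perturbations stay inside $\Pbb$.

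For the necessity direction (extreme $\Rightarrow$ full rank), I would argue by contrapositive. Suppose $\mathrm{rank}(\Abf_{\bar\cvec}) < n$. Then the null space of $\Abf_{\bar\cvec}$ contains a nonzero vector $\dvec$. The idea is that moving from $\bar\cvec$ along $\pm \dvec$ preserves every active constraint as an equality, while every inactive constraint is strictly satisfied at $\bar\cvec$ and therefore remains satisfied after a sufficiently small perturbation. Concretely, pick
\begin{equation}
\epsilon = \min_{j \in \Jcal(\bar\cvec)} \frac{\Abf_{(j,:)} \bar\cvec - b_j}{|\Abf_{(j,:)} \dvec| + 1} > 0,
\end{equation}
so that $\bar\cvec \pm \epsilon \dvec \in \Pbb$. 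These two distinct points average to $\bar\cvec$, contradicting extremality.

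For the sufficiency direction (full rank $\Rightarrow$ extreme), suppose $\mathrm{rank}(\Abf_{\bar\cvec}) = n$ and write $\bar\cvec = \tfrac{1}{2}(\cvec_1 + \cvec_2)$ with $\cvec_1,\cvec_2 \in \Pbb$. For each $i \in \Ical(\bar\cvec)$, the identity $b_i = \Abf_{(i,:)} \bar\cvec = \tfrac{1}{2}(\Abf_{(i,:)} \cvec_1 + \Abf_{(i,:)} \cvec_2)$ combined with $\Abf_{(i,:)} \cvec_\ell \geq b_i$ for $\ell = 1,2$ forces both inequalities to be tight. Hence $\Abf_{\bar\cvec} \cvec_1 = \Abf_{\bar\cvec} \cvec_2 = \bvec_{\bar\cvec}$, and since $\Abf_{\bar\cvec}$ has rank $n$ this linear system has a unique solution, giving $\cvec_1 = \cvec_2 = \bar\cvec$.

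The only mildly delicate step is the construction of the perturbation magnitude $\epsilon$ in the necessity direction: one must be careful to separate active from inactive constraints and use boundedness of $\Pbb$ only implicitly (the minimum is over a finite set, so no compactness argument is actually needed, which is why the result really only requires $\mathrm{rank}(\Abf) = n$ rather than boundedness per se). The boundedness hypothesis in the statement is used elsewhere to guarantee the existence of extreme points and the representation of $\Pbb$ as their convex hull, but for this characterization itself the above algebraic argument suffices.
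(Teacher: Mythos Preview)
Your proof is correct and is the standard algebraic argument for this classical characterization of extreme points. Note, however, that the paper does not actually prove this proposition: it is quoted verbatim as Theorem~3.17 from \citet{andreasson2005introduction} and used as a black box, so there is no ``paper's own proof'' to compare against. Your observation that boundedness is not needed for the characterization itself (only the full-rank hypothesis on $\Abf$ is used) is also accurate; the paper invokes boundedness separately in Proposition~\ref{prop:extreme} to ensure an optimal extreme point exists for the LP.
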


Since $\Cbb$ is not bounded ($v$ can diverge to $-\infty$),
we now further characterize a subset of $\Cbb$ that must include an optimal solution to Eq. \eqref{eq:al-lp}.
\begin{proposition}
	\label{prop:extreme}
	Let $\ext \Cbb = \{\cvec \in \Cbb | \rank(\Abf_\cvec) = k+1\}$.
  %, \Ical(\cvec)$ contains at least one row from the first block$\,\}$.
	Then $\ext \Cbb$ must contain an optimal solution to Eq. \eqref{eq:al-lp}.
\end{proposition}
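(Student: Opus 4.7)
The plan is to reduce to the bounded case so that the quoted Theorem 3.17 applies. The polytope $\Cbb$ is unbounded only in the direction $v \to -\infty$, so I would intersect $\Cbb$ with the halfspace $\{v \ge -1\}$ to obtain a bounded polytope $\Cbb'$ (formally appending the row $[{\bf 0}^\intercal, 1]$ to $\Abf$ and the entry $-1$ to $\bvec$), and then show both that this truncation discards no optimum of \eqref{eq:al-lp} and that the resulting extreme optimum lies in $\ext \Cbb$.

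First I would argue that the truncation is vacuous. The first-block constraints $\Lbf_{(i,:)} \qvec \ge v$ force $v \le \min_i \Lbf_{(i,:)} \qvec$, and since the objective $v + \fvec^\intercal \qvec - f_y$ is strictly increasing in $v$, at any maximizer $v^\star = \min_i \Lbf_{(i,:)} \qvec^\star$. For the zero-one loss and $\qvec^\star$ in the simplex, $\Lbf_{(i,:)} \qvec^\star = 1 - q^\star_i \ge 0$, so $v^\star \ge 0 > -1$. The added constraint is therefore strictly inactive at every optimum of the original LP, and the LP values on $\Cbb$ and $\Cbb'$ coincide.

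Next I would apply Theorem 3.17 to $\Cbb'$: it is nonempty (take $\qvec$ uniform and $v = 0$), bounded ($\qvec$ lies in the simplex and $v \in [-1, \max_{i,j} L_{ij}]$), and the constraint matrix has full column rank $k+1$, since the second block of $\Abf$ already provides the identity on the $\qvec$-coordinates and any row of the first block contributes a nonzero entry in the $v$-column. The theorem then produces an optimum $\bar{\cvec}$ of the LP on $\Cbb'$ whose equality subsystem has rank $k+1$.

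To finish, because $v \ge -1$ is strictly inactive at $\bar{\cvec}$, its equality subsystem uses only rows from the original $\Abf$, so $\rank(\Abf_{\bar{\cvec}}) = k+1$, i.e., $\bar{\cvec} \in \ext \Cbb$; together with the equality of optimal values this makes $\bar{\cvec}$ optimal for \eqref{eq:al-lp}. The one delicate point is the bound $v^\star \ge 0$, which rests on the nonnegativity of the zero-one loss; without that property the nominal lower bound could exclude optimizers and a different truncation would be required.
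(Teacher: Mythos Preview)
Your proposal is correct and follows essentially the same route as the paper: intersect $\Cbb$ with the half-space $\{v \ge -1\}$ to obtain a bounded polytope, observe that the added constraint is strictly inactive at every optimum because $\Lbf_{(i,:)}\qvec \ge 0$ forces $v^\star \ge 0$, apply the extreme-point characterization to the bounded polytope, and conclude that the active equality subsystem at an optimal extreme point involves only rows of the original $\Abf$. Your write-up is in fact slightly more explicit than the paper's in checking the hypotheses of Theorem~3.17 (nonemptiness, boundedness, full column rank of the augmented matrix).
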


\begin{proof}
	Let us add a nominal constraint of $v \ge -1$ to the definition of $\Cbb$,
	and denote the new polytope as $\bar{\Cbb} := \cbr{\cvec: \Gbf \cvec \ge \begin{bmatrix}
	\bvec \\
	-1
	\end{bmatrix}}$, 
	where 
	$\Gbf = 
		\begin{bmatrix} \Abf \\ 
		\zero^\intercal \; 1 
	\end{bmatrix}$.
	It does not change the solution to Eq. \eqref{eq:al-lp} because $v$ appears in the objective only as $v$,
	and $\Lbf_{(i,:)} \qvec \ge 0$.
	However, this additional constraint makes $\bar{\Cbb}$ compact,
	allowing us to apply Theorem 3.17 of \citep{andreasson2005introduction} and conclude that any $\cvec = \begin{bmatrix}
	\qvec \\
	v
	\end{bmatrix}$ is an extreme point of $\bar{\Cbb}$ if and only if $\rank(\Gbf_{\cvec}) = k+1$.
	But all optimal solutions must have $v \ge 0$, 
	hence the last row of $\Gbf$ cannot be in $\Gbf_\cvec$.
	So it suffices to consider $\cvec$ with $\Gbf_\cvec = \Abf_\cvec$,
	whence $\rank(\Abf_\cvec) = k+1$.
\end{proof}

Obviously $\Abf_\cvec$ must include the third block of $\Abf$ for all $\cvec \in \Cbb$ in Eq. \eqref{eq:polytope-half-space}.
The rank condition also enforces that at least one row from the first block is selected.

For convenience, we will refer to $\ext \Cbb$ as the extreme point of $\Cbb$.%
\footnote{Indeed, it is the bona fide extreme point set of $\Cbb$ under the standard definition which does not require compactness \cite[Section 18,][]{Rockafellar70}.
But the guarantee of attaining optimality at an extreme point does require boundedness.}
By analyzing $\ext \Cbb$ in the case of multiclass zero-one classification, we simplify the adversarial surrogate loss (Eq. \eqref{eq:al-lp}) as stated in the following Theorem $\ref{thm:ermloss-zo}$.

\begin{theorem}
	\label{thm:ermloss-zo}
	The predictive function
	for the multiclass zero-one adversarial classification is equivalently obtained from empirical risk minimization under 
the adversarial zero-one loss function: 
	\begin{align}
	& \text{AL}^{\text{0-1}}
	({\bf f}, y) = \, 
	\max_{
    {S \subseteq
			[k], %\\ 
            \; S \neq \emptyset } }
	\frac{\sum_{i \in S} f_i
		+|S|-1}{|S|} - f_y, \label{eq:al-zo}
	\end{align}
	where 
$S$ is any non-empty subset of the $k$ classes.
\end{theorem}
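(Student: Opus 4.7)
The plan is to apply Proposition \ref{prop:extreme}: the linear program in Eq.~\eqref{eq:al-lp} attains its maximum at some $\cvec = \begin{bmatrix} \qvec \\ v \end{bmatrix} \in \ext \Cbb$, where extreme points are characterized by the active-row submatrix $\Abf_\cvec$ having rank $k+1$. Because every feasible point saturates the sum-to-one row $\mathbf{1}^\intercal \qvec = 1$, an extreme point is determined by choosing $k$ additional active rows drawn from the loss-matrix block and the non-negativity block whose linear spans, together with the sum row, have dimension $k{+}1$. So my overall strategy is to enumerate these combinatorial choices for the zero-one loss, evaluate the objective $v + \fvec^\intercal \qvec - f_y$ at each resulting vertex, and show that the resulting maximum collapses into the formula~\eqref{eq:al-zo}.

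First I would specialize to the zero-one loss by observing that $\Lbf_{(i,:)} \qvec = 1 - q_i$, so the $i$-th first-block constraint reads $q_i \le 1 - v$ and is active when $q_i = 1 - v$, while the $i$-th second-block constraint is active when $q_i = 0$. Let $S \subseteq [k]$ be the indices whose first-block row is active at $\cvec$ and $T \subseteq [k]$ those whose non-negativity row is active. A rank-counting argument then shows the following: (i) if $i \notin S \cup T$, the coordinate $q_i$ is not pinned down by any first- or second-block active row, leaving a free variable that prevents $\rank(\Abf_\cvec) = k+1$; (ii) if $i \in S \cap T$, then $v = 1$ and the active first-block rows all force $q_j = 0$, producing many linearly dependent rows and again violating the rank condition (the only solutions in this degenerate regime are the unit vectors $\qvec$ with $v=1$, which are already recovered by singletons $S=\{j\}$ below). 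Hence the rank condition forces the clean decomposition $S \cap T = \emptyset$ and $S \cup T = [k]$, so extreme points correspond bijectively to non-empty subsets $S \subseteq [k]$.

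Once this combinatorial description is in hand, the remaining step is direct algebra. For each non-empty $S$, the equality subsystem reads $q_i = 1-v$ for $i \in S$, $q_i=0$ for $i \notin S$, and $\sum_i q_i = 1$; these jointly yield $v = (|S|-1)/|S|$ and $q_i = 1/|S|$ for $i \in S$. Substituting into the objective gives
\begin{align}
v + \fvec^\intercal \qvec - f_y \;=\; \frac{|S|-1}{|S|} + \frac{\sum_{i \in S} f_i}{|S|} - f_y \;=\; \frac{\sum_{i \in S} f_i + |S| - 1}{|S|} - f_y,
\end{align}
and maximizing over extreme points becomes maximizing over non-empty $S \subseteq [k]$, which is exactly~\eqref{eq:al-zo}. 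The claim that model parameters are obtained by empirical risk minimization with this surrogate then follows from substituting the closed form into the dual objective of Theorem~\ref{thm:dual}.

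I expect the main obstacle to lie in step (ii) above: rigorously ruling out degenerate extreme points with $S \cap T \ne \emptyset$, and verifying that any such vertex either fails the rank condition or produces the same objective value as one of the canonical vertices indexed by a non-empty $S$. This amounts to a careful case analysis on which rows of $\Abf$ are linearly independent when $v=1$ and some $q_i$'s are simultaneously forced to two values. Everything else is a mechanical unpacking of the equality subsystem and an evaluation of the linear objective, so once the degeneracy bookkeeping is done the theorem drops out.
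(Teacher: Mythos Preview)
Your approach mirrors the paper's proof almost exactly: characterize $\ext\Cbb$ via the active rows from the first and second blocks, show the active index sets partition $[k]$, and then solve the equality subsystem to obtain the uniform-on-$S$ vertices. The algebra in your final step is identical to the paper's.

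The one place you go astray is step~(ii), and it is simpler than you think. You try to argue that $S\cap T\neq\emptyset$ breaks the \emph{rank} condition, and you speculate that the degenerate regime yields ``unit vectors $\qvec$ with $v=1$''. Both claims are off: a unit vector $\evec_j$ with $v=1$ is not even feasible (the $j$-th first-block row gives $q_j\le 1-v=0$), and singletons $S=\{j\}$ correspond to $v=0$, not $v=1$. The clean argument, which is what the paper does, is a \emph{feasibility} contradiction rather than a rank one: if $i\in S\cap T$ then $q_i=0$ and $q_i=1-v$, so $v=1$; but then the first-block inequalities $1-q_j\ge v=1$ force $q_j\le 0$ for every $j$, hence $\qvec=\zero$, violating $\mathbf{1}^\intercal\qvec=1$. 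So no point of $\Cbb$ can have $S\cap T\neq\emptyset$, and there is nothing degenerate left to analyze. With that one-line fix your proof is complete and matches the paper's.
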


\begin{proof}
	The $\text{AL}^{\text{0-1}}$ above corresponds to the set of ``extreme points"%
	\footnote{We add a quotation mark here because our proof will only show, as it suffices to show, that $D$ contains all the extreme points of $\Cbb$ and $D \subseteq \Cbb$.
	We do not need to show that $D$ is exactly the extreme point set of $\Cbb$, although that fact is not hard to show either.}
	\begin{align}
	D = \cbr{\begin{bmatrix}
		\qvec \\
		v
		\end{bmatrix}
		=
		\frac{1}{|S|} 
		\begin{bmatrix}
		\sum_{i \in S} \evec_i  \\
		|S|-1
		\end{bmatrix} 
		\ \middle\vert \ \emptyset \neq S \subseteq [k]},
	\end{align}
	where $\evec_i \in \RR^k$ is the $i$-th canonical vector with a single 1 at the $i$-th coordinate and 0 elsewhere.
	That means $\qvec$ first picks a nonempty support $S \subseteq [k]$,
	then places uniform probability of $\frac{1}{|S|}$ on these coordinates,
	and finally sets $v$ to $\frac{|S|-1}{|S|}$.

	By Proposition \ref{prop:extreme},
	it now suffices to prove that $D \subseteq \Cbb$ and $D \supseteq \ext \Cbb = \{\cvec \in \Cbb : \rank(\Abf_\cvec) = k+1\}$,
	i.e.,
	any $\cvec \in \Cbb$ whose equality system satisfies $\rank(\Abf_\cvec) = k+1$ must be in $D$.
	$D \subseteq \Cbb$ is trivial, so we focus on $D \supseteq \ext \Cbb$.
	
	Given $\cvec \in \ext \Cbb$,
	suppose the set of rows that $\Abf_\cvec$ selected from the first and second block of $\Abf$ are $R$ and $T$, respectively.
	Both $R$ and $T$ are subsets of $[k]$,
	indexed against $\Abf$.
	We first observe that $R$ and $T$ must be disjoint because if $i \in R \cap T$,
	then $q_i = 0$ and $v = \Lbf_{(i,:)} \qvec = \sum_{j\neq i} q_j = 1-q_i=1$.
	But then for all $j$, $\Lbf_{(j,:)} \qvec \ge v$ implies $1 \le \sum_{l \neq j} q_l = 1 - q_j$. This is impossible as it means $\qvec = \zero$.
	
	Now that $R$ and $T$ are disjoint, $\rank(\Abf_\cvec) = k+1$ implies that $R = [k] \backslash T$.
	Since $q_i=0$ for all $i \in T$,
	solving $|R|$ linear equalities with respect to $|R|$ unknowns yields $q_j = 1 / |R|$ for all $j \in R$.
	Such a tuple of $\qvec$ and $v$ is clearly in $D$.	
	Obviously $R$ cannot be empty because then $T = [k]$ and $\qvec = \zero$.
\end{proof}

We denote the potential differences $\psi_{i,y} = f_i - f_y$, then  
Eq \eqref{eq:al-zo}, can be equivalently written as:  
\[ 
\text{AL}^{\text{0-1}}
	({\bf f}, y) = \,  \max_{
    {S \subseteq [k], %\\ 
            \; S \neq \emptyset } }
            \frac{\sum_{i \in S} \psi_{i,y} +|S|-1}{|S|} .
\]
Thus, AL$^{\text{0-1}}$ is 
the maximum value over $2^k-1$ linear hyperplanes. 
For binary prediction tasks, there are three linear hyperplanes:
$\psi_{1,y}, \psi_{2,y}$ and 
$\frac{\psi_{1,y}+\psi_{2,y}+1}{2}$. 
Figure \ref{fig:adv2} shows the loss function in potential difference space $\psi$ when the true label is $y = 1$. Note that $\text{AL}^{\text{0-1}}$ 
combines two hinge functions
at $\psi_{2,y} = -1$ and 
$\psi_{2,y} = 1$, rather than SVM's single hinge at 
$\psi_{2,y} = -1$.
This difference from the hinge loss corresponds to the loss that is realized 
by randomizing label predictions of $\hat{P}({\hat{Y}|\xvec})$ in Eq. \eqref{eq:al-hat-check}.

\begin{figure}[ht]
\centering	
\includegraphics[width=0.42\linewidth]{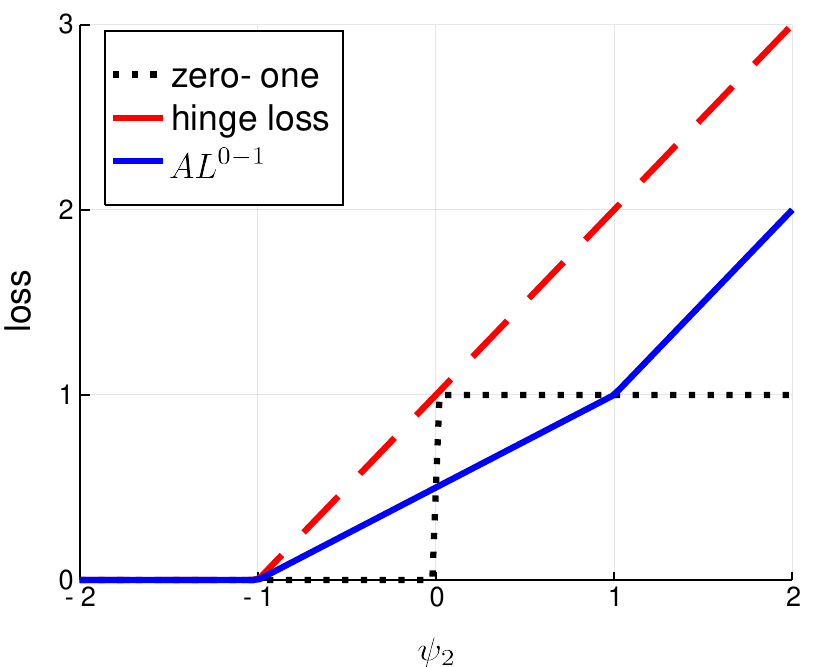}	
	\caption{$\text{AL}^{\text{0-1}}$ evaluated over the space of potential 
differences 
($\psi_{i, y} = f_{i} - f_{y}$; 
and $\psi_{i,i} = 0$) 
for binary prediction tasks when the true 
label is $y = 1$.}
	\label{fig:adv2}
\end{figure}

For three classes, the loss function has seven facets as shown in 
Figure \ref{fig:adv}. Figures \ref{fig:adv},
\ref{fig:ww}, and \ref{fig:cs}
show
the similarities and differences between $\text{AL}^{\text{0-1}}$ and the 
multiclass SVM surrogate losses based on class potential differences. Note that $\text{AL}^{\text{0-1}}$ is a relative margin loss function that utilizes the pairwise potential \emph{difference} $\psi_{i, y}$. This avoids the surrogate loss construction pitfall pointed out by \citet{dogan2016unified} that states that surrogate losses based on the absolute margin (rather than relative margin) may suffer from low performance for datasets with low dimenional feature spaces.

\begin{figure}[ht]
	\begin{minipage}{.33\linewidth}
		\centering
		\subfloat[]{\label{fig:adv}\includegraphics[width=1.0\linewidth]{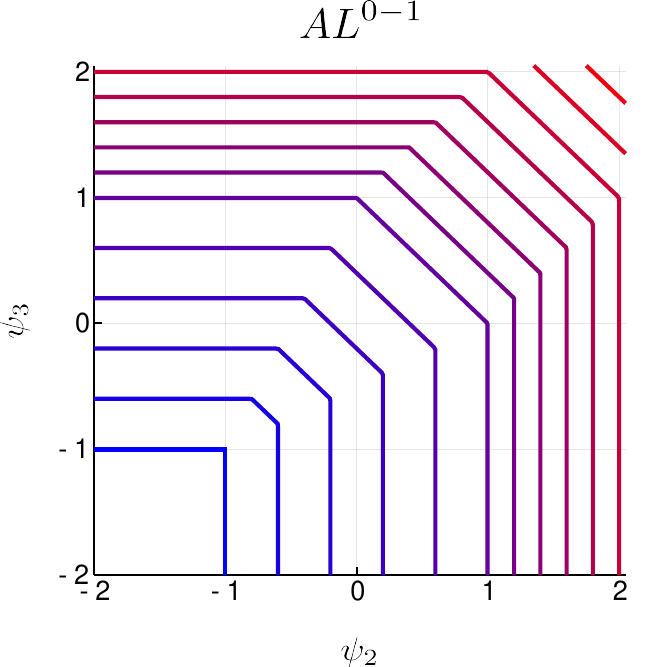}}
	\end{minipage}%
	\begin{minipage}{.33\linewidth}
		\centering
		\subfloat[]{\label{fig:ww}\includegraphics[width=1.0\linewidth]{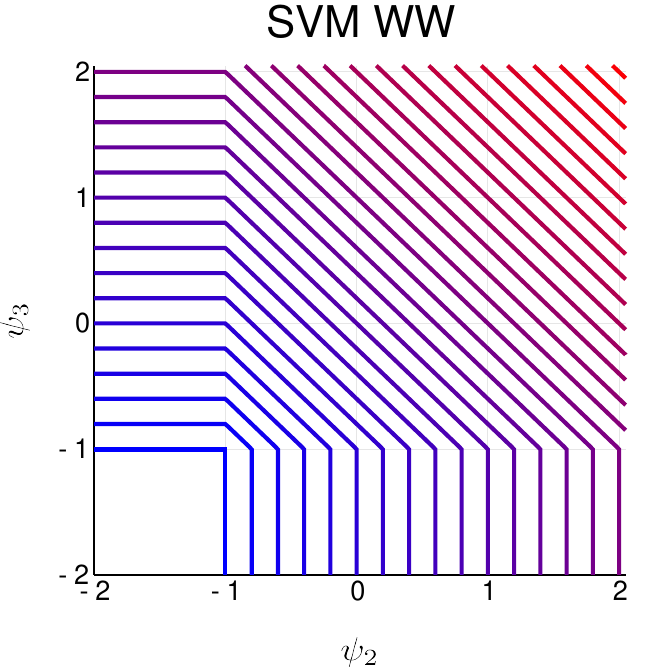}}
	\end{minipage}%
	\begin{minipage}{.33\linewidth}
		\centering
		\subfloat[]{\label{fig:cs}\includegraphics[width=1.0\linewidth]{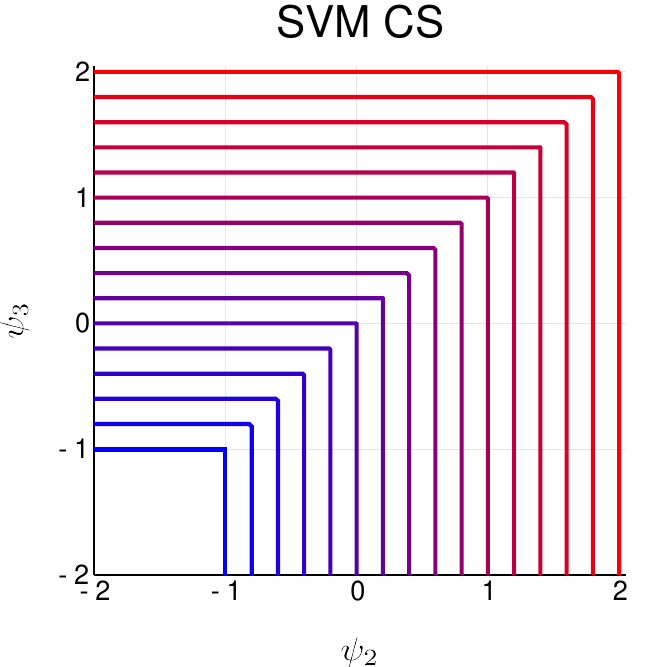}}
	\end{minipage}		
	\caption{
Loss function contour plots over the space of potential differences  for the prediction task with three classes when the true label is $y = 1$ under $\text{AL}^{\text{0-1}}$ (a),
the WW loss (b), and the CS loss (c).
(Note that $\psi_i$ in the plots refers to $\psi_{i, y} = f_{i} - f_{y}$; and $\psi_{i,i} = 0$.)
		}

	\label{fig:advloss3}
\end{figure}

Even though AL$^{\text{0-1}}$ is 
the maximization over $2^k-1$ possible values, it can be efficiently computed as follows. First we need to sort the potential for all labels $\{f_i : i \in [k]\}$ in non-increasing order. The set $S^*$ that maximize AL$^{\text{0-1}}$ must include the first $j$ labels in the sorted order, for some value of $j$. Therefore, to compute AL$^{\text{0-1}}$, we can incrementally add the label in the sorted order to the set $S^*$ until adding an additinal label would decrease the value of the loss.\footnote{We refer the reader to the Appendix C of \citep{fathony2016adversarial} for the optimality proof of this algorithm.} This results in an algorithm with a runtime complexity of $\Ocal(k \log k)$, which is much faster than enumerating all possible values in the maximization.

\subsection{Ordinal Classification with Absolute Loss}

In multiclass ordinal classification (also known as ordinal regression), the discrete class labels being predicted have an inherent order (e.g., \emph{poor}, \emph{fair}, \emph{good}, \emph{very good}, and \emph{excellent} labels).
The absolute error, $\text{loss}(\hat{y},y) = |\hat{y}-y|$ between label prediction ($\hat{y} \in \mathcal{Y}$) and actual label ($y \in \mathcal{Y}$) is a canonical ordinal regression loss metric. 
The adversarial surrogate loss for ordinal classification using the absolute loss metric is defined in Eq. \eqref{eq:al-lp}, where $\Lbf$ is the absolute loss matrix (e.g., Figure \ref{fig:lm-abs} for a five class ordinal classification).
The constraints in Eq. \eqref{eq:al-lp} form a convex polytope $\Cbb$. Below is an example of the half-space representation of $\Cbb$ for a four-class ordinal classification problem.
\begin{align}
\begin{array}{c} 
\phantom{0} \\ \phantom{0} \\ \text{1st block} \\ \phantom{0} \\\hdashline[2pt/2pt] \phantom{0}  \\ \phantom{0} \\ \text{2nd block} \\ \phantom{0} \\ \hdashline[2pt/2pt] \phantom{0} \\ \text{3rd block} \end{array}     
\left[
\begin{array}{ccccc}
0 & 1 & 2 & 3 & -1 \\ 1 & 0 & 1 & 2 & -1 \\ 2 & 1 & 0 & 1 & -1 \\ 3 & 2 & 1 & 0 & -1 \\ \hdashline[2pt/2pt]
1 & 0 & 0 & 0 & 0 \\ 0 & 1 & 0 & 0 & 0 \\ 0 & 0 & 1 & 0 & 0 \\ 0 & 0 & 0 & 1 & 0 \\ \hdashline[2pt/2pt]
1 & 1 & 1 & 1 & 0 \\ -1 & -1 & -1 & -1 & 0 
\end{array} 
\right]
\begin{bmatrix} q_1 \\ q_2 \\ q_3 \\ q_4 \\ v \end{bmatrix}
\ge 
\begin{bmatrix} 0 \\ 0 \\ 0 \\ 0 \\ 0 \\ 0 \\ 0 \\ 0 \\ 1 \\ -1 \end{bmatrix} .   
\end{align}
By analyzing the extreme points of $\Cbb$, we define the adversarial surrogate loss for ordinal classification with absolute loss $\text{AL}^{\text{ord}}$ as stated in Theorem \ref{thm:ermloss-abs}.
\begin{theorem}
	\label{thm:ermloss-abs}
An adversarial ordinal classification predictor with absolute loss is obtained by choosing 
%parameters ${\theta}$
a predictive function
that minimize the empirical risk of the surrogate loss function:
\begin{align}
	& \text{AL}^{\text{ord}}
	({\bf f}, y) = 
	  \max_{i,j \in [k] }  \frac{f_i + f_j + j - i}{2} 
      - f_y \label{eq:al-abs}.
\end{align}
\end{theorem}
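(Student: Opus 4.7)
The plan is to mirror the extreme-point strategy used for Theorem \ref{thm:ermloss-zo}. The LP in Eq. \eqref{eq:al-lp} with $\Lbf_{ij}=|i-j|$ defines the polytope $\Cbb$ in the form of Eq. \eqref{eq:polytope-half-space}, and by Proposition \ref{prop:extreme} it suffices to maximize the linear objective $v+\fvec^\intercal\qvec-f_y$ over those $\cvec=(\qvec,v)$ with $\rank(\Abf_\cvec)=k+1$. Partitioning the active rows into the three blocks of Eq. \eqref{eq:polytope-half-space}, block 2 contributes rank $k-|S|$, where $S\subseteq[k]$ is the support of $\qvec$, block 3 contributes rank $1$, and block 1 must supply enough additional rank that, restricted to the $|S|+1$ coordinates $S\cup\{v\}$, the combined rows span the full $(|S|+1)$-dimensional space.

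The essential step, and the one that departs from the zero-one argument, is to show that every extreme point satisfies $|S|\le 2$. Introduce $g(r)\triangleq\sum_{s\in S}|r-s|\,q_s$; the active rows of block 1 are precisely those $r\in[k]$ realizing $g(r)=v=\min_{r'}g(r')$. Since $g$ is convex and piecewise linear with a kink of magnitude $2q_s>0$ at each $s\in S$, no point of $S$ can sit strictly inside the real-valued minimizer interval of $g$: otherwise the left and right slopes at that kink would both vanish, forcing $q_s=0$. Consequently every integer minimizer of $g$ lies in a single closed linear piece $[s_j,s_{j+1}]$ of $g$, on which each $|r-s|$ is affine in $r$ with slope $\pm 1$. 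The active block-1 rows $(|r-s_1|,\ldots,|r-s_m|,-1)$ therefore trace a single affine line as $r$ varies, contributing rank at most $2$; together with the block-3 row $(1,\ldots,1,0)$ the combined rank is at most $3$, forcing $|S|+1\le 3$.

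Once $|S|\le 2$ is in hand, the remaining case analysis is routine. For $|S|=1$ with support $\{s\}$, feasibility alone gives $\qvec=\evec_s$ and $v=0$, contributing $f_s-f_y$ to the objective. For $|S|=2$ with support $\{i,j\}$, $i<j$, the two active block-1 equalities read $(j-i)q_j=v$ and $(j-i)q_i=v$, which combined with $q_i+q_j=1$ yield $q_i=q_j=\tfrac12$ and $v=(j-i)/2$, producing objective value $\tfrac12(f_i+f_j+j-i)-f_y$. The singleton case is the $i=j$ specialization of this formula, and any pair with $i>j$ is dominated by its swap, so maximizing over $(i,j)\in[k]^2$ recovers exactly Eq. \eqref{eq:al-abs}.

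The main obstacle is the rank bound in the second paragraph: turning the intuition that the single-kink structure of $|r-s|$ forces the active block-1 rows to lie on one affine line into a rigorous cap on $|S|$. Once that bound is established the extreme-point enumeration collapses to pairs and the closed form follows by direct substitution.
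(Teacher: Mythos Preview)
Your proposal is correct, and it takes a genuinely different route from the paper's own proof.

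The paper never bounds the support size of $\qvec$ directly. Instead it fixes the sets of active block-1 rows $S$ and block-2 rows $T$, splits into the cases $S\cap T=\emptyset$ versus $S\cap T\neq\emptyset$ (with a further sub-split on how $\min R,\max R$ sit relative to $\min S,\max S$ where $R=[k]\setminus T$), and in each surviving case simply writes down the candidate $\cvec=\tfrac12(\evec_{\rmin}+\evec_{\rmax},\,\rmax-\rmin)$, checks that it satisfies every equality in $\Abf_\cvec$, and invokes uniqueness from $\rank(\Abf_\cvec)=k+1$. Your argument instead extracts the structural reason the support collapses: the convexity of $g(r)=\sum_{s\in S}|r-s|q_s$ forces all integer minimizers of $g$ (hence all active block-1 rows) into a single linear piece $[s_j,s_{j+1}]$, so their restrictions to the $S\cup\{v\}$ coordinates lie on one affine line and contribute rank at most $2$; together with the block-3 row this caps the restricted rank at $3$, and since block~2 handles the remaining $k-|S|$ coordinates exactly, the full-rank condition forces $|S|\le 2$. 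After that your two-case finish is immediate.

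Both arguments are sound. The paper's case analysis is shorter to write and avoids the rank-projection bookkeeping, but it is opaque about \emph{why} only two support points survive. Your approach is more conceptual: it isolates the property of the absolute loss (piecewise-linear convexity with kinks of size $2q_s$) that drives the result, and the same template---bound the affine dimension of the active block-1 rows restricted to the support---would adapt to other loss matrices whose rows, as functions of the row index on any fixed interval of the support, trace a low-dimensional curve.
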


\begin{proof}
    The $\text{AL}^{\text{ord}}$ above corresponds to the set of ``extreme points" 
    \begin{align}
    	D = \cbr{\begin{bmatrix}
    		\qvec \\
    		v
    		\end{bmatrix}
    		=
    		\frac{1}{2} 
    		\begin{bmatrix}
    			\evec_i + \evec_j \\
    			j - i
    		\end{bmatrix} 
    		\ \middle \vert \  i, j \in [k]}.
    \end{align}

    This means $\qvec$ can only have one or two non-zero elements (note that $i$ and $j$ can be equal) with uniform probability of $\frac{1}{2}$ and the value of $v$ is $\frac{j-i}{2}$. 
    
    Similar to the proof of Theorem \ref{thm:ermloss-zo}, 
    we next prove that $D \supseteq \ext \Cbb = \{\cvec \in \Cbb : \rank(\Abf_\cvec) = k+1\}$.    
    Given $\cvec \in \ext \Cbb$,
    suppose the set of rows that $\Abf_\cvec$ selected from the first and second block of $\Abf$ are $S$ and $T$, respectively.
    Both $S$ and $T$ are subsets of $[k]$,
    indexed against $\Abf$.
    Denote $s_{\max} = \max (S)$ and $s_{\min} = \min (S)$.    
    We consider two cases:
    \begin{enumerate}%[noitemsep,topsep=0pt]
        \item $S \cap T = \emptyset$: the indices selected from the first and second blocks are disjoint. \\

        It is easy to check that
        $\cvec$ must be 
        $\begin{bmatrix}
        \qvec \\
        v
        \end{bmatrix}
        :=
        \frac{1}{2} 
        \begin{bmatrix}
        \evec_\smax + \evec_\smin \\
        \smax - \smin
        \end{bmatrix} 
        $.
        Obviously it satisfies (being equal) the rows in $\Abf_\cvec$ extracted from the first and third blocks of $\Abf$, 
        because $|l - \smax| + |l - \smin| = \smax - \smin$ for all $l \in S$.
        Since $S \cap T = \emptyset$, $\cvec$ must also satisfy those rows from the second block.
        Finally notice that only one vector in $\RR^{k+1}$ can meet all the equalities encoded by $\Abf_\cvec$ because $\rank(\Abf_\cvec) = k+1$.
        Obviously $\cvec \in D$.
        
        \item $S \cap T \neq \emptyset$: the indices from the first block overlap with those from the second block.\\
				Including in $\Abf_\cvec$ the $i$-th row of the second block means setting $q_i$ to 0.
        Denote the set of remaining indices as 
        $R = [k] \backslash T$,
        and let $\rmax = \max(R)$ and $\rmin = \min(R)$.
        Now consider two sub-cases:
        \begin{enumerate}[label=\alph*)] %.,noitemsep,topsep=0pt]
            \item $\rmin \le \smin$ and $\rmax \ge \smax$. \\
            
            One may check that
            $\cvec$ must be 
            $\begin{bmatrix}
            \qvec \\
            v
            \end{bmatrix}
            :=
            \frac{1}{2} 
            \begin{bmatrix}
            \evec_\rmax + \evec_\rmin \\
            \rmax - \rmin
            \end{bmatrix} 
            $.
            Obviously it satisfies (being equal) the rows in $\Abf_\cvec$ extracted from the first and third blocks of $\Abf$, 
            because for all $l \in S$,
            $l \ge \smin \ge \rmin$ and $l \le \smax \le \rmax$,
            implying $|l - \rmax| + |l - \rmin| = \rmax - \rmin$.
            Since by definition $\rmax$ and $\rmin$ are not among the rows selected from the second block,
            the equalities from the second block must also be satisfied.
            As in case 1, only one vector in $\RR^{k+1}$ can meet all the equalities encoded by $\Abf_\cvec$ because $\rank(\Abf_\cvec) = k+1$.
            Obviously $\cvec \in D$.            
            
            \item $\rmin > \smin$ or $\rmax < \smax$. \\
            We first show $\rmin > \smin$ is impossible.
            By definition of $R$, $q_l = 0$ for all $l < \rmin$.
            For all $l \ge \rmin$ ($> \smin$),
            it follows that $\Lbf_{(\smin,l)} = l - \smin > l - \rmin = \Lbf_{(\rmin,l)}$.
            Noting that at least one $q_l$ must be positive for $l \ge \rmin$ (because of the sum-to-one constraint),
            we conclude that $\Lbf_{(\smin,:)} \qvec > \Lbf_{(\rmin,:)} \qvec$.
            But this contradicts with $\Lbf_{(\smin,:)} \qvec = v \le \Lbf_{(\rmin,:)} \qvec$,
            where the equality is because $\smin \in S$.
            
            Similarly, $\rmax < \smax$ is also impossible.
        \end{enumerate}
    \end{enumerate}

    Therefore, in all possible cases, we have shown that any $\cvec$ in $\ext \Cbb$ must be in $D$.
    Further noticing the obvious fact that $D \subseteq \Cbb$,
    we conclude our proof.
\end{proof}

We note that the AL$^{\text{ord}}$ surrogate is the maximization over pairs of different potential functions associated with each class (including pairs of identical class labels) added to the distance between the pair.
To compute the loss more efficiently, we make use of the fact that maximization over each element of the pair can be independently realized:
\begin{align}
	  \max_{i,j \in [k] }  \frac{f_i + f_j + j - i}{2} 
      - f_y = \tfrac12 \max_i \left( f_i - i \right) + \tfrac12 \max_j  \left( f_j + j \right) - f_y. \label{eq:al-abs-dec}
\end{align}
We derive two different versions of AL$^{\text{ord}}$ based on different feature representations used for constraining the adversary's probability distribution.

\subsubsection{Feature Representations}

We consider two feature representations corresponding to different training data summaries:
\begin{align}
\phi_{th}({\bf x},y) = \left(
\begin{array}{c}
y {\bf x}\\
I(y \leq 1)\\
I(y \leq 2)\\
\vdots\\
I(y\leq k-1)
\end{array} \right); \text{ and }
\quad \phi_{mc}({\bf x},y) = \left(
\begin{array}{c}
I(y=1) {\bf x}\\
I(y=2) {\bf x}\\
I(y=3) {\bf x}\\
\vdots\\
I(y=k) {\bf x}
\end{array}\right).
\end{align}

The first, which we call the  {\bf thresholded regression representation}, has size $m + k-1$, where $m$ is the dimension of our input space.
It induces a single shared vector of feature weights and a set of thresholds.
If we denote the weight vector associated with the $y {\bf x}$ term as ${\bf w}$ and the terms associated with the cumulative sum of class indicator functions as 
$\eta_1$, $\eta_2$, $\ldots$, $\eta_{k-1}$, 
then thresholds for switching between class $i$ and $i+1$ (ignoring other classes) occur when ${\bf w} \cdot {\bf x} = \eta_{j}$. 

The second feature representation, $\phi_{mc}$, which we call the {\bf multiclass  representation}, has size $mk$
and can be equivalently interpreted as inducing a set of class-specific feature weights, $f_i = {\bf w}_i \cdot {\bf x}$. 
This feature representation is useful when ordered labels cannot be thresholded according to any single direction in the input space, as shown in the example dataset of Figure \ref{fig:mc-data}.
\begin{figure}[ht]
\centering
% \vspace{-4mm}
\includegraphics[width=0.30\linewidth]{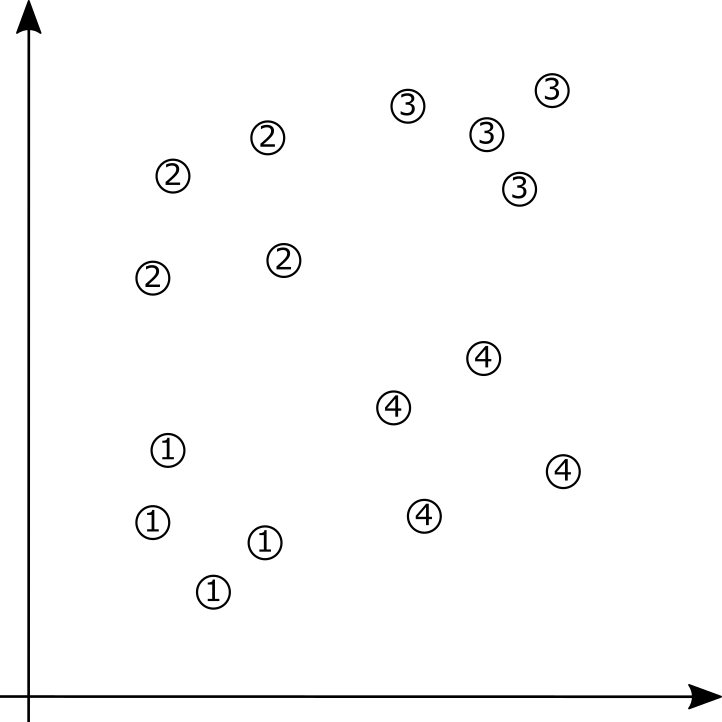}
\caption{Example where multiple weight vectors are useful.}
\label{fig:mc-data}
\end{figure}

\subsubsection{Thresholded regression surrogate loss}
In the thresholded regression feature representation, the parameter contains a single shared vector of feature weights ${\bf w}$ and $k-1$ terms $\eta_k$ associated with thresholds. Following Eq. \eqref{eq:al-abs-dec}, the adversarial ordinal regression surrogate loss for this feature representation can be written as:
{\small
\begin{align}
	& \text{AL}^{\text{ord-th}} 
	({\bf x}, y) = 
      \max_i \frac{i({\bf w}\cdot {\bf x}-1)+ \sum_{l \geq i}\eta_l}{2} 
      +
      \max_j \frac{j({\bf w}\cdot {\bf x}+1)+ \sum_{l \geq j}\eta_l}{2} 
      - y {\bf w}\cdot{\bf x}-\sum_{l \geq y} \eta_l.
      \label{eq:th}
\end{align}}%

This loss has a straight-forward interpretation in terms of the thresholded regression perspective, as shown in Figure \ref{fig:thresholded}: it is based on averaging the thresholded label predictions for potentials ${\bf w}\cdot{\bf x}-1$ and ${\bf w}\cdot{\bf x}+1$.  This penalization of the pair of thresholds differs from the thresholded surrogate losses of related work, which either penalize all violated thresholds or penalize only the thresholds adjacent to the actual class label. 
\begin{figure}[ht]
\centering
\includegraphics[width=0.60\linewidth]{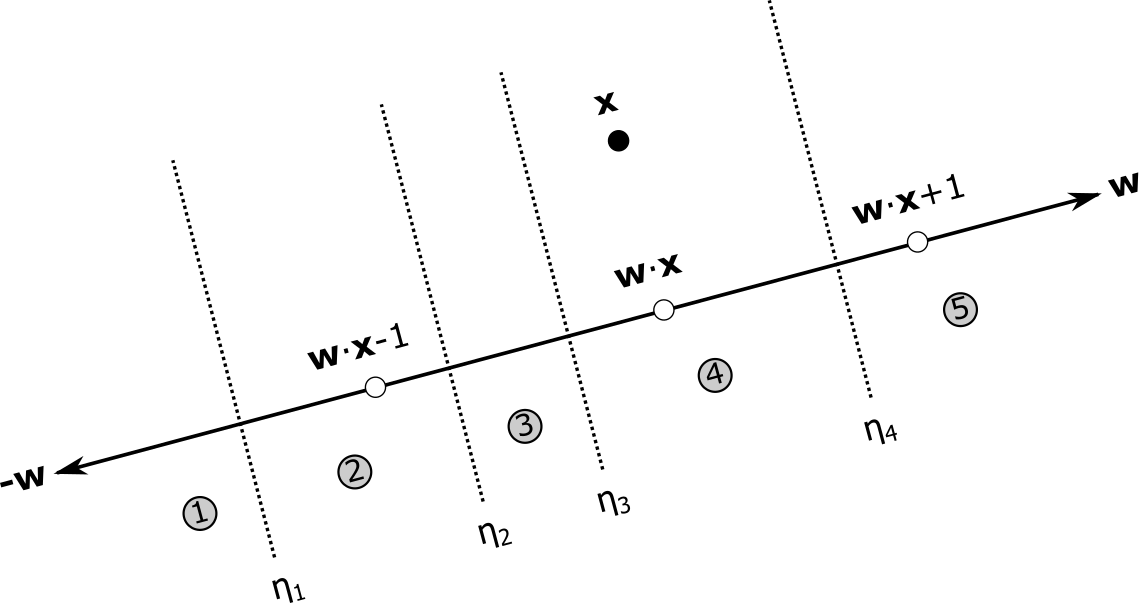}
\caption{Surrogate loss calculation for datapoint ${\bf x}$ (projected to ${\bf w}\cdot {\bf x}$) with a label prediction of $4$ for predictive purposes, the surrogate loss is instead obtained using potentials for the classes based on ${\bf w}\cdot{\bf x}-1$ (label $2$) and ${\bf w}\cdot{\bf x}+1$ (label $5$) averaged together.} 
\label{fig:thresholded}
\end{figure}

Using a binary search procedure over $\eta_1, \hdots, \eta_{k-1}$, the largest lower bounding threshold for each of these potentials can be obtained in $\mathcal{O}(\log k)$ time.

\subsubsection{Multiclass ordinal surrogate loss}
In the multiclass feature representation, we have a set of feature weights ${\bf w}_i$ for each label and the adversarial multiclass ordinal surrogate loss can be written as:
\[
	\text{AL}^{\text{ord-mc}} 
	({\bf x}, y) = 
    \max_{i,j \in [k] } \frac{{\bf w}_{i} \cdot {\bf x} + {\bf w}_{j} \cdot {\bf x} + j - i }{2}
      - {\bf w}_{y} \cdot {\bf x}. \label{eq:mc}
\]
We can also view this as the maximization over ${k(k+1)}/{2}$ linear hyperplanes. 
For an ordinal regression problem with three classes, the loss has six facets with different shapes for each true label value, as shown in Figure \ref{fig:advloss-ord}.
In contrast with $\text{AL}^{\text{ord-th}}$, the class potentials for $\text{AL}^{\text{ord-mc}}$ may differ from one another in more-or-less arbitrary ways.  Thus, searching for the maximal $i$ and $j$ class labels requires $\mathcal{O}(k)$ time.

\begin{figure*}[ht]
\centering
	\begin{minipage}{.33\linewidth}
		\centering
		\subfloat[]{\label{fig:y1}\includegraphics[width=1.0\linewidth]{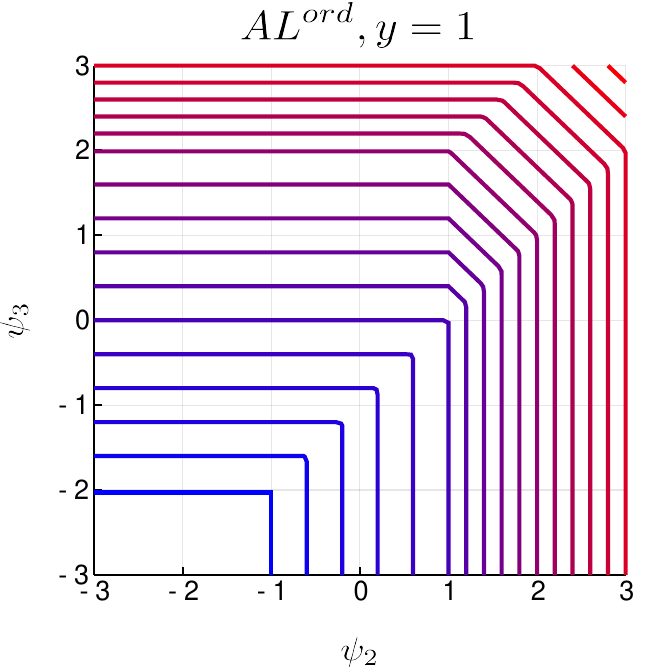}}
	\end{minipage}%
	\begin{minipage}{.33\linewidth}
		\centering
		\subfloat[]{\label{fig:y2}\includegraphics[width=1.0\linewidth]{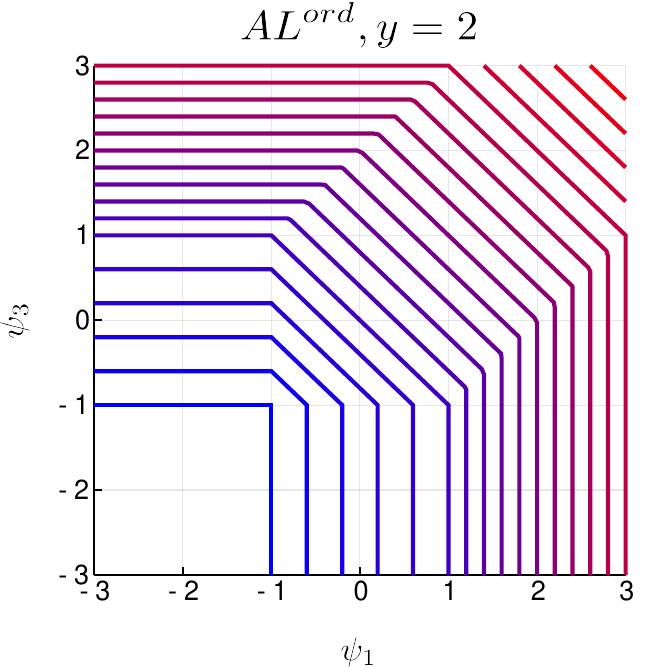}}
	\end{minipage}%
	\begin{minipage}{0.33\linewidth}
		\centering
		\subfloat[]{\label{fig:y3}\includegraphics[width=1.0\linewidth]{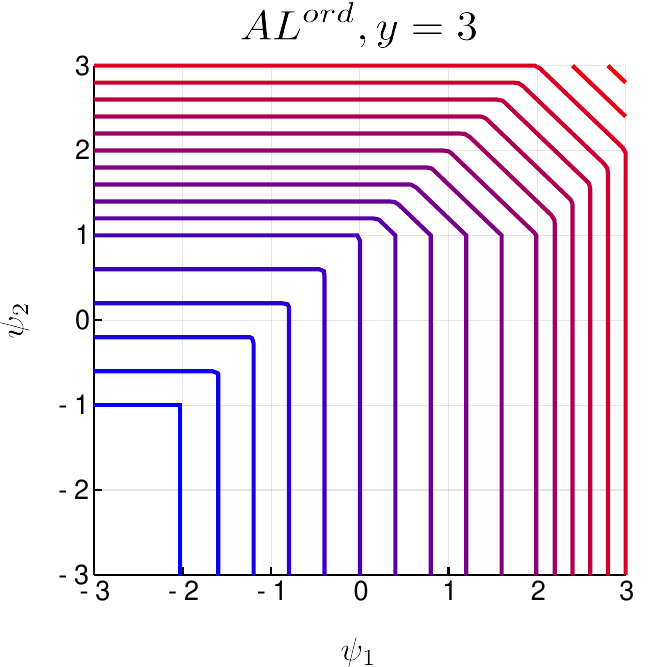}}
	\end{minipage}		
	\caption{
Loss function contour plots of $\text{AL}^{\text{ord}}$ over the space of potential differences $\psi_j \triangleq f_j - f_{y}$ for the prediction task with three classes when the true label is $y = 1$ (a), $y = 2$ (b), and $y = 3$ (c).
		}
	\label{fig:advloss-ord}
\end{figure*}

\subsection{Ordinal Classification with Squared Loss}

In some prediction tasks, the squared loss is the preferred metric for ordinal classification to enforce larger penalty as the difference between the predicted and true label increases \citep{baccianella2009evaluation, pedregosa2017consistency}. The loss is calculated using the squared difference between label prediction ($\hat{y} \in \mathcal{Y}$) and ground truth label ($y \in \mathcal{Y}$), that is: $\text{loss}(\hat{y},y) = (\hat{y}-y)^2$. 
The adversarial surrogate loss for ordinal classification using the squared loss metric is defined in Eq. \eqref{eq:al-lp}, where $\Lbf$ is the squared loss matrix (e.g. Figure \ref{fig:lm-sq} for a five classes ordinal classification).
The constraints in Eq. \eqref{eq:al-lp} form a convex polytope $\Cbb$. Below is an example of the half-space representation of $\Cbb$ for a four-class ordinal classification problem with squared loss metric.
\begin{align}
\begin{array}{c} 
\phantom{0} \\ \phantom{0} \\ \text{1st block} \\ \phantom{0} \\\hdashline[2pt/2pt] \phantom{0}  \\ \phantom{0} \\ \text{2nd block} \\ \phantom{0} \\ \hdashline[2pt/2pt] \phantom{0} \\ \text{3rd block} \end{array}     
\left[
\begin{array}{ccccc}
0 & 1 & 4 & 9 & -1 \\ 1 & 0 & 1 & 4 & -1 \\ 4 & 1 & 0 & 1 & -1 \\ 9 & 4 & 1 & 0 & -1 \\ \hdashline[2pt/2pt]
1 & 0 & 0 & 0 & 0 \\ 0 & 1 & 0 & 0 & 0 \\ 0 & 0 & 1 & 0 & 0 \\ 0 & 0 & 0 & 1 & 0 \\ \hdashline[2pt/2pt]
1 & 1 & 1 & 1 & 0 \\ -1 & -1 & -1 & -1 & 0 
\end{array} 
\right]
\begin{bmatrix} q_1 \\ q_2 \\ q_3 \\ q_4 \\ v \end{bmatrix}
\ge 
\begin{bmatrix} 0 \\ 0 \\ 0 \\ 0 \\ 0 \\ 0 \\ 0 \\ 0 \\ 1 \\ -1 \end{bmatrix} .   
\end{align}

We define the adversarial surrogate loss for ordinal classification with squared loss $\text{AL}^{\text{sq}}$ as stated in Theorem \ref{thm:ermloss-sq}.

\begin{theorem}
	\label{thm:ermloss-sq}
An adversarial ordinal classification predictor with squared loss is obtained by choosing 
%parameters ${\theta}$ 
a predictive function
that minimize the empirical risk of the surrogate loss function:
\begin{align}
	\text{AL}^{\text{sq}}
	({\bf f}, y) = 
	  \max \Bigg\{
      \max_{\substack{i,j,l \in [k] \\ i < l \leq j }}  
	  \tfrac{ \left( 2(j - l) + 1 \right) \left[ f_i + \left( l - i \right)^2 \right] + 
	  \left( 2(l - i) - 1 \right) \left[ f_j + \left( j - l \right)^2 \right] }
	  {2 \left(j - i \right)},\; \max_i f_i\; 
	  \Bigg\}
      - f_y \label{eq:al-quad}.
\end{align}
\end{theorem}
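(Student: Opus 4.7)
The plan is to follow exactly the same extreme-point strategy used for Theorems \ref{thm:ermloss-zo} and \ref{thm:ermloss-abs}. I will identify a candidate set $D$ of points $\cvec = \begin{bmatrix}\qvec\\v\end{bmatrix}$ that corresponds term-by-term to the maximum in Eq. \eqref{eq:al-quad}, show that $D \subseteq \Cbb$ by direct verification, and then show that every $\cvec \in \ext\Cbb$ (the set identified in Proposition \ref{prop:extreme}) lies in $D$. Specifically, I take
\begin{align}
D = \left\{(\evec_i,\,0) : i \in [k]\right\} \;\cup\; \left\{\tfrac{1}{2(j-i)}\begin{bmatrix}(2(j-l)+1)\evec_i + (2(l-i)-1)\evec_j \\ (2(j-l)+1)(l-i)^2 + (2(l-i)-1)(j-l)^2\end{bmatrix} : i<l\le j\right\}.
\end{align}
A direct substitution into Eq. \eqref{eq:al-lp} confirms that the first family contributes the $\max_i f_i - f_y$ term and the second family contributes the other term in the maximum.

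The containment $D \subseteq \Cbb$ is a routine calculation: for $(\evec_i,0)$, the squared loss row $i$ is tight and all other rows evaluate to $(m-i)^2 \ge 0$; for the second family, I verify that the first-block constraints at rows $l-1$ and $l$ hold with equality and check that all other rows are satisfied. The crucial computation here is that $\Lbf_{(m,:)}\qvec = (m-i)^2 q_i + (m-j)^2 q_j$ is a parabola in $m$ whose vertex sits at $\mu = iq_i + jq_j = l-\tfrac12$, so by symmetry the rows $m = l-1$ and $m = l$ both equal the same value, which is the prescribed $v$.

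The reverse inclusion $\ext\Cbb \subseteq D$ is the main obstacle and where the squared-loss case differs from the absolute-loss analysis. The key new ingredient is the quadratic structure: for any fixed $\qvec$ with support $S$, the function $m \mapsto \Lbf_{(m,:)}\qvec = (m-\mu)^2 + \sigma^2$ (where $\mu = \sum_{t\in S}tq_t$ and $\sigma^2$ its variance) is a strictly convex parabola, so the equation $\Lbf_{(m,:)}\qvec = v$ has \emph{at most two} integer solutions in $[k]$. Combined with the rank argument from Theorem \ref{thm:ermloss-abs}---which requires at least $|S|$ tight rows from the first block to reach rank $k+1$---this forces $|S| \le 2$. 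The case $|S|=1$ is immediate and recovers $(\evec_i,0)$. For $|S|=2$ with support $\{i,j\}$, $i<j$, I need exactly two tight first-block rows $m_1 < m_2$; solving the pair of equalities for $q_i, q_j$ in terms of $m_1+m_2$ yields $q_i = \tfrac{2j-(m_1+m_2)}{2(j-i)}$, $q_j = \tfrac{(m_1+m_2)-2i}{2(j-i)}$.

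The remaining subtlety is that $m_1$ and $m_2$ must be \emph{consecutive} integers. This follows because the parabola lies strictly below $v$ on the open interval $(m_1, m_2)$, so if any integer $m$ satisfied $m_1 < m < m_2$ then $\Lbf_{(m,:)}\qvec < v$, violating feasibility. Writing $m_1 = l-1$ and $m_2 = l$ gives the claimed form of $\qvec$ and $v$, and the non-negativity $q_i, q_j \ge 0$ yields the range $i < l \le j$. I expect the parabola/consecutive-integer step to be the main place where care is needed, and the rest to parallel the arguments already given for Theorems \ref{thm:ermloss-zo} and \ref{thm:ermloss-abs}.
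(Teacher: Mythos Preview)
Your proposal is correct and follows the same extreme-point architecture as the paper's proof: the same candidate set $D$, the inclusion $D\subseteq\Cbb$ by direct check, and the reverse inclusion $\ext\Cbb\subseteq D$ via rank counting. Where you diverge is in how you establish the two squared-loss--specific facts. To bound the support of $\qvec$ by two, the paper uses the polynomial identity $(x-a)^2=\alpha(x-b)^2+\beta(x-c)^2+\gamma$ to argue that any three first-block rows together with the sum-to-one row are linearly dependent, so at most two first-block rows can contribute to the rank; you instead observe that $m\mapsto \Lbf_{(m,:)}\qvec=(m-\mu)^2+\sigma^2$ is a parabola with at most two integer roots, bounding the \emph{number} of tight first-block rows directly and then invoking the rank count. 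To pin down which two rows are tight, the paper does an explicit three-way case split (consecutive pair; non-consecutive pair inside $[i,j]$; pair reaching outside $[i,j]$) and eliminates the bad cases by hand-computed inequalities, whereas your feasibility argument---the parabola dips strictly below $v$ between its two roots, so any intermediate integer would violate $\Lbf_{(m,:)}\qvec\ge v$---dispatches all of this in one stroke and simultaneously forces the roots to sit in $[i,j]$ via $\mu\in(i,j)$. Your route is more conceptual and noticeably shorter; the paper's is more computational but fully explicit, which makes each inequality easy to audit in isolation.
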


\begin{proof}
% 	\textbf{(New proof)}
    The $\text{AL}^{\text{sq}}$ above corresponds to the set of extreme points 
    \begin{align}
    	D = \cbr{\begin{bmatrix}
    		\qvec \\
    		v
    		\end{bmatrix}
    		=
    		\tfrac{ 2(j - l) + 1}{2 \left(j - i \right)} 
    		\begin{bmatrix}
    			\evec_i \\
    			(l-i)^2
    		\end{bmatrix} +
    		\tfrac{ 2(l - i) - 1}{2 \left(j - i \right)} 
    		\begin{bmatrix}
    			\evec_j \\
    			(j-l)^2
    		\end{bmatrix} 
    		\ \middle \vert \  \begin{matrix}i, j, l \in [k]\\  i < l \leq j\end{matrix} } 
    		\cup \cbr{
    		    \begin{bmatrix}
    		\qvec \\
    		v
    		\end{bmatrix}
    		= \begin{bmatrix}
    			\evec_i \\
    			0
    		\end{bmatrix} 
    		 \middle \vert \ i \in [k]
    		}.
    \end{align}
%    where $\evec_i \in \RR^k$ is the $i$-th canonical vector with a single 1 at the $i$-th coordinate and 0 elsewhere.
    This means $\qvec$ can either have one non-zero element with a probability of one or two non-zero elements with the probability specified above. 
    
    Similar to the proof of Theorem \ref{thm:ermloss-zo}, 
    we next prove that $D \supseteq \ext \Cbb = \{\cvec \in \Cbb : \rank(\Abf_\cvec) = k+1\}$,
    as $D \subseteq \Cbb$ is again obvious.    
    Given $\cvec \in \ext \Cbb$,
    suppose the set of rows that $\Abf_\cvec$ selected from the first and second block of $\Abf$ are $S$ and $T$, respectively.
    Both $S$ and $T$ are subsets of $[k]$,
    indexed against $\Abf$. We also denote the set of remaining indices as 
        $R = [k] \backslash T$.
        
    In the case of the squared loss metric, we observe that every row in the first block of $\Abf$ 
    can be written as a linear combination of two other rows in the first block and the sum-to-one row from the third block. This follows the corresponding relation in continuous squared functions:
    \[
    (x - a)^2 = x^2 -2ax - a^2 = \alpha (x^2 -2bx + b^2) + \beta (x^2 -2cx + c^2) + \gamma = \alpha (x- b)^2 + \beta (x-c)^2 + \gamma,
    \]
    for some value of $\alpha, \beta$, and $\gamma$. Therefore, $S$ can only include one or two elements. This means that $R$ must also contain one or two elements.
    We consider these two cases:
    \begin{enumerate}%[noitemsep,topsep=0pt]
        \item $S$ contains a single element $\{i\}$.\\
        In this case, $R$ must also be $\{i\}$. If $R = \{j\}$ where $j \neq i$, the equation subsystem requires $v = \Lbf_{(i,:)} \qvec = (i-j)^2 \ge 1$, since by definition of $R$, $q_j = 1$ and $q_l = 0$ for all $l \in [k] \backslash j$. However, this contradicts with the requirement of the $j$-th row of $\Abf$ that $v \le \Lbf_{(j,:)} \qvec = 0$. Finally, it is easy to check that the vector in $\Rbb^{k+1}$ that meet all the equalities encoded in this $\Abf_\cvec$ is $\cvec = \begin{bmatrix}
    			\evec_i \\
    			0
    		\end{bmatrix} $. Obviously $\cvec \in D$.
        
        \item $S$ contains two elements.\\
        The rank condition requires that $R$ must also contains two elements $\{i,j\}$. Consider these following sub-cases:
        \begin{enumerate}[label=\alph*)] %.,noitemsep,topsep=0pt]
            \item $S = \{l-1, l\}$, where $i < l \le j$. \\
            Let $\cvec = \begin{bmatrix}
    			\qvec \\
    			v
    		\end{bmatrix} $ be the solution of the equalities encoded in this $\Abf_\cvec$. By definition of $R$, $q_l = 0$ for all $q \in [k] \backslash \{i,j\}$. The value of $q_i$ and $q_j$ can be calculated by solving $\Lbf_{(l-1,:)} \qvec = \Lbf_{(l,:)} \qvec$ or equivalently $\Lbf_{(l-1,i)} q_i + \Lbf_{(l-1,j)} q_j = \Lbf_{(l,i)} q_i + \Lbf_{(l,j)} q_j$, with the constraint that $q_i + q_j = 1$ and the non-negativity constraints. 
    		Solving for this equation resulting in the following $q_i$, $q_j$, and $v$:
            \begin{align}
            q_i &= \frac{  \Lbf_{(l-1,j)} -  \Lbf_{(l,j)}   }
        	  { \Lbf_{(l,i)} -  \Lbf_{(l-1,i)} + \Lbf_{(l-1,j)} -  \Lbf_{(l,j)} }   \\
        	  &=  \frac{  (j - l +1)^2 -  (j-l)^2   }
        	  { (l-i)^2 -  (l-1-i)^2 + (j-l+1)^2 -  (j-l)^2 } =
        	  \frac{ 2(j - l) + 1}{2 \left(j - i \right)}, \\
        	 q_j &= \frac{  \Lbf_{(l,i)} -  \Lbf_{(l-1,i)}    }
        	  { \Lbf_{(l,i)} -  \Lbf_{(l-1,i)} + \Lbf_{(l-1,j)} -  \Lbf_{(l,j)} }   \\
        	  &=  \frac{  (l - i)^2 -  (l-1-i)^2   }
        	  { (l-i)^2 -  (l-1-i)^2 + (j-l+1)^2 -  (j-l)^2 } =
        	  \frac{ 2(l - i) - 1}{2 \left(j - i \right)}, \\
        	  v &= \frac{ \left( \Lbf_{(l-1,j)} -  \Lbf_{(l,j)} \right) \Lbf_{(l,i)}  + 
        	  \left( \Lbf_{(l,i)} -  \Lbf_{(l-1,i)} \right)  \Lbf_{(l,j)}  }
        	  { \Lbf_{(l,i)} -  \Lbf_{(l-1,i)} + \Lbf_{(l-1,j)} -  \Lbf_{(l,j)} } \\
        	  &=  \frac{  \left( 2(j - l) + 1 \right) (l-i)^2 + \left( 2(l - i) - 1 \right) (j-l)^2 }
        	  { 2 \left(j - i \right) } .
            \end{align}
            It is obvious that $\cvec \in D$.
            
            \item $S = \{m, l\}$, where $i \le m < l \le j$ and $m \neq l-1$. \\
            We want to show that this case is impossible. 
            Solving for the $m$-th and the $l$-th equality, $v= \Lbf_{(m,i)} q_i + \Lbf_{(m,j)} q_j = \Lbf_{(l,i)} q_i + \Lbf_{(l,j)} q_j$ resulting in $q_i = \frac{1}{z} [(j-m)^2 - (j-l)^2]$, 
            $q_j = \frac{1}{z} [(l-i)^2 - (m-i)^2]$, and 
            \[v = \tfrac{1}{z} \cbr{  (l-i)^2 [(j-m)^2 - (j-l)^2] + (j-l)^2 [(l-i)^2 - (m-i)^2] }, \]
            where $z = [(j-m)^2 - (j-l)^2] + [(l-i)^2 - (m-i)^2]$. 
            
            Let $o$ be an index such that $m<o<l$. This row must exist since $m \neq l-1$ and $m<l$. Applying the solution above to the $o$-th row, we define:
            \[ w \triangleq \Lbf_{(o,:)} \qvec = \tfrac{1}{z} \cbr{  (o-i)^2 [(j-m)^2 - (j-l)^2] + (j-o)^2 [(l-i)^2 - (m-i)^2] }.\]
            Then,
            \begin{align} v-w = &\tfrac{1}{z} \big\{  [(l-i)^2 - (o-i)^2] [(j-m)^2 - (j-l)^2] \\
                & \quad - [(j-o)^2 - (j-l)^2 ] [(l-i)^2 - (m-i)^2 ] \big\}.
            \end{align}
            This means that $v - w > 0$, since for all $i \le m <o < l \le j$, $i,j,l,m,o \in [k]$,
            \[ \frac{(l-i)^2 - (o-i)^2}{(l-i)^2 - (m-i)^2} > \frac{(j-o)^2 - (j-l)^2}{(j-m)^2 - (j-l)^2}. \]
            
            Thus, it contradicts with the requirement that $v \leq \Lbf_{(o,:)}$.
            
            % TODO: working on a concise way to show this. i.e. the $o$-th strict inequality cannot be satisfied, where $m<o<l$.
            
            \item $S = \{m, l\}$, where $m < i$ or $l > j$.\\
            We first show that $m < i$ is impossible. Note that for $m < i$, the loss value $\Lbf_{(m,i)} = (i-m)^2 > \Lbf_{(i,i)} = 0$ and $\Lbf_{(m,j)} = (j-m)^2 > \Lbf_{(i,j)} = (j-i)^2$. Noting that at least one of $q_i$ or $q_j$ must be positive due to sum-to-one constraint, we conclude that $\Lbf_{(m,:)} \qvec > \Lbf_{(i,:)} \qvec$. But this contradicts with $\Lbf_{(m,:)} \qvec = v \le \Lbf_{(i,:)} \qvec$ since the $m \in S$. Similarly, $l > j$ is also impossible.
            
        \end{enumerate}

    \end{enumerate}

    Therefore, in all possible cases, we have shown that any $\cvec$ in $\ext \Cbb$ must be in $D$, which concludes our proof.
\end{proof}

Note that AL$^\text{sq}$ contains two separate maximizations corresponding to the case where there are two non-zero elements of $\qvec$ and the case where only a single non-zero element of $\qvec$ is possible.
Unlike the surrogate for absolute loss, the maximization in AL$^\text{sq}$ cannot be realized independently. A $\Ocal(k^3)$ algorithm is needed to compute the maximization for the case that two non-zero elements of $\qvec$ are allowed, and a $\Ocal(k)$ algorithm is needed to find the maximum potential in the case of a single non-zero element of $\qvec$. Therefore, the total runtime of the algorithm for computing AL$^\text{sq}$ is $\Ocal(k^3)$. 
The loss surface of AL$^\text{sq}$ for the three classes classification is shown in Figure \ref{fig:advloss-sq}.

\begin{figure*}[ht]
\centering
	\begin{minipage}{.33\linewidth}
		\centering
		\subfloat[]{\label{fig:sqy1}\includegraphics[width=1.0\linewidth]{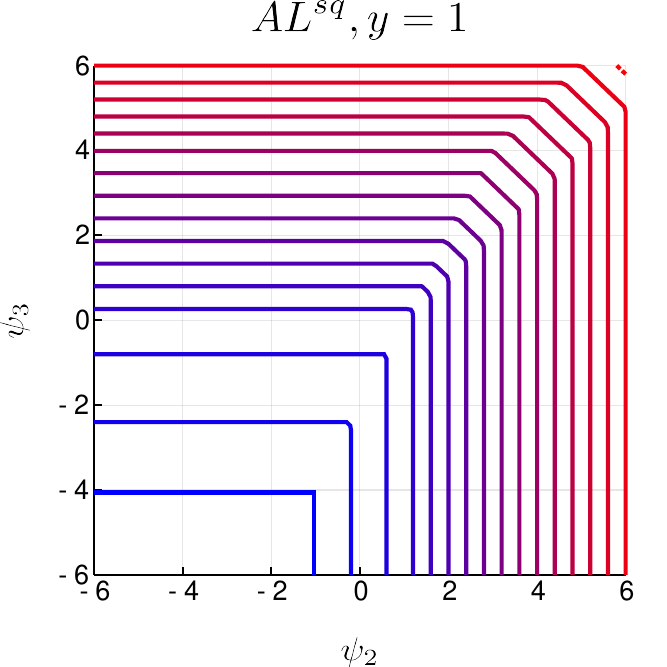}}
	\end{minipage}%
	\begin{minipage}{.33\linewidth}
		\centering
		\subfloat[]{\label{fig:sqy2}\includegraphics[width=1.0\linewidth]{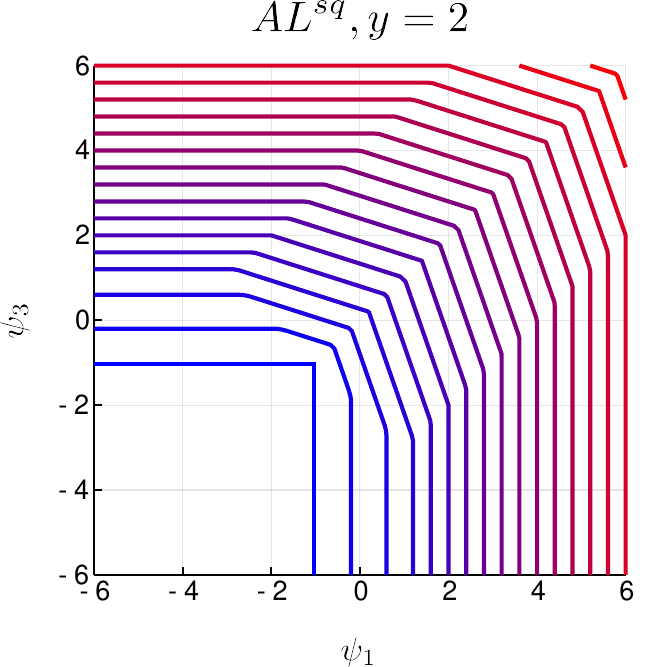}}
	\end{minipage}%
	\begin{minipage}{0.33\linewidth}
		\centering
		\subfloat[]{\label{fig:sqy3}\includegraphics[width=1.0\linewidth]{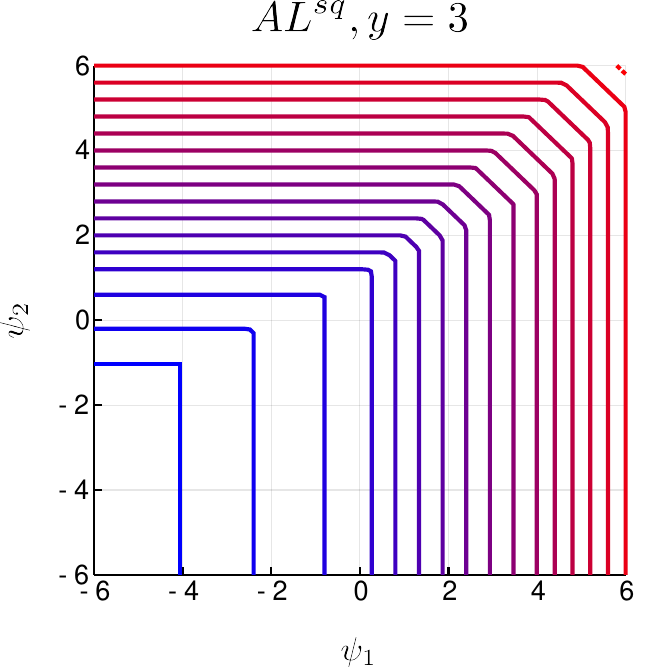}}
	\end{minipage}		
	\caption{
Loss function contour plots of $\text{AL}^{\text{sq}}$ over the space of potential differences $\psi_j \triangleq f_j - f_{y}$ for the prediction task with three classes when the true label is $y = 1$ (a), $y = 2$ (b), and $y = 3$ (c).
		}
	\label{fig:advloss-sq}
\end{figure*}

\subsection{Weighted Multiclass Loss}

In more general prediction tasks, the penalty metric for each sample may be different. For example, the predictor may need to prioritize samples with a particular characteristic. In this subsection, we study the adversarial surrogate loss for weighted multiclass loss,
and in particular, the setting with a standard loss metrics weighted by parameter $\alpha$ (for example, the weighted absolute loss: $\text{loss}(\hat{y},y) = \alpha  |\hat{y} - y |$). We next analyze in Theorem \ref{thm:ermloss-weighted} the extreme points of the polytope formed by the the constraints in Eq. \eqref{eq:al-lp} when $\Lbf$ is the weighted multiclass loss metric. 

\begin{theorem}
	\label{thm:ermloss-weighted}
Let ${\bf q}^*$, and $v^*$ be the solution of the adversarial maximin (Eq. \eqref{eq:al-lp}) with $\Lbf$ as the loss matrix, then if the loss matrix is $\alpha\Lbf$, the solution of (Eq. \eqref{eq:al-lp}) is ${\bf q}^\diamond ={\bf q}^*$, $v^\diamond = \alpha v^*$.
\end{theorem}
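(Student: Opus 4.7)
The plan is to verify directly that $(\qvec^\diamond, v^\diamond) = (\qvec^*, \alpha v^*)$ is both feasible and optimal for the LP in Eq.~\eqref{eq:al-lp} when $\Lbf$ is replaced by $\alpha \Lbf$. The key observation is that $\Lbf$ enters the LP only through the linear constraints $\Lbf_{(i,:)} \qvec \ge v$, and that at any optimum the slack variable $v$ is driven to its tight upper bound.

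For feasibility, the simplex constraints on $\qvec^*$ are unaffected by the loss rescaling, and multiplying the original inequalities $\Lbf_{(i,:)} \qvec^* \ge v^*$ by $\alpha > 0$ immediately yields $(\alpha\Lbf)_{(i,:)} \qvec^* \ge \alpha v^*$, which is exactly the rescaled LP's constraint set evaluated at $(\qvec^*, \alpha v^*)$. For optimality, I would apply the change of variable $v = \alpha \tilde v$ in the rescaled LP; its constraints become $\Lbf_{(i,:)} \qvec \ge \tilde v$, structurally identical to the original LP, while the objective becomes $\alpha \tilde v + \fvec^\intercal \qvec - f_y$. Because the inner minimization $\min_\pvec \pvec^\intercal \Lbf \qvec = \min_i \Lbf_{(i,:)} \qvec$ is linear in $\Lbf$, rescaling $\Lbf$ by $\alpha$ multiplies this pure game value by $\alpha$ for every $\qvec$, so once $\qvec^\diamond = \qvec^*$ is established, reversing the substitution yields $v^\diamond = \alpha v^*$ as claimed.

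The main subtlety, which I would handle most carefully, is the potential non-uniqueness of optima: since the rescaled objective $\alpha \tilde v + \fvec^\intercal \qvec - f_y$ weights the game value differently relative to the unscaled $\fvec^\intercal \qvec$ term, the $\qvec$-argmax could in principle shift with $\alpha$. I would resolve this by invoking the extreme-point characterization developed for Theorems~\ref{thm:ermloss-zo}--\ref{thm:ermloss-sq}: the map $(\qvec, v) \leftrightarrow (\qvec, \alpha v)$ is a bijection between the extreme-point sets of the original constraint polytope $\Cbb$ and those of the rescaled polytope, and identifying $(\qvec^*, v^*)$ canonically as the extreme-point optimum that attains the largest $v^*$ among all optima ensures that its image $(\qvec^*, \alpha v^*)$ remains an extreme-point optimum of the rescaled LP. An alternative route, which I would use as a consistency check, is to argue from the maximin reformulation $\max_\qvec \min_\pvec \pvec^\intercal \Lbf \qvec + \fvec^\intercal \qvec - f_y$ directly: the inner payoff is multiplied pointwise by $\alpha$ under $\Lbf \mapsto \alpha \Lbf$, preserving the adversary's optimal $\qvec^*$ and rescaling the corresponding game value to $\alpha v^*$.
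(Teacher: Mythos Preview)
Your feasibility argument and change-of-variable reduction match the paper's brief proof, which simply multiplies the constraint $\Lbf_{(i,:)}\qvec \ge v$ by $\alpha$ and declares the resulting LP equivalent. You go further by flagging the real difficulty: after the substitution $v=\alpha\tilde v$, the rescaled objective becomes $\alpha\tilde v + \fvec^\intercal\qvec$, which reweights the two terms and can shift the $\qvec$-argmax. This concern is correct, and neither of your proposed fixes actually resolves it. The extreme-point bijection $(\qvec,v)\leftrightarrow(\qvec,\alpha v)$ between the two polytopes is valid, but selecting ``the optimum with largest $v^*$'' among original optima does nothing to force that point to remain optimal under the new objective. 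Likewise, in your maximin route only the term $\pvec^\intercal\Lbf\qvec$ is scaled by $\alpha$ while $\fvec^\intercal\qvec$ is not, so the outer argmax over $\qvec$ need not be preserved.

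A concrete counterexample: take $k=2$, zero-one loss $\Lbf=\bigl[\begin{smallmatrix}0&1\\1&0\end{smallmatrix}\bigr]$, $f_1=0$, $f_2=2$. The original LP maximizes $\min(q_1,q_2)+2q_2$ over $\Delta$, attained uniquely at $\qvec^*=(0,1)$ with $v^*=0$. With $\alpha=10$, the rescaled LP maximizes $10\min(q_1,q_2)+2q_2$, attained uniquely at $\qvec^\diamond=(\tfrac12,\tfrac12)$ with $v^\diamond=5\neq\alpha v^*$. So the literal claim $\qvec^\diamond=\qvec^*$ fails, and the paper's one-line argument (and your two resolutions) cannot be rescued. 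What \emph{does} hold, and what actually suffices for Corollaries~\ref{crl:ermloss-zo-weighted}--\ref{crl:ermloss-sq-weighted}, is precisely your extreme-point bijection: since the set $\ext\Cbb$ maps to itself in $\qvec$ with $v\mapsto\alpha v$, the surrogate loss formula is obtained by replacing $v$ with $\alpha v$ in each extreme-point contribution (e.g., $|S|-1$ becomes $\alpha(|S|-1)$). That weaker statement is the one you should prove.
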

\begin{proof}
Multiplying both sides of the constraints $\Lbf_{(i,:)} \qvec \ge v$ in Eq. \eqref{eq:al-lp} and employing $\alpha \Lbf_{(i,:)} \qvec \ge \alpha v$, we arrive at an equivalent LP problem with the same solution. Therefore, if we replace the original loss metric with $\alpha \Lbf$, then the solution for $\qvec$ remain the same, and the optimum slack variable value is $\alpha v^*$.
\end{proof}

Using Theorem \ref{thm:ermloss-weighted}, we can derive the adversarial surrogate loss for weighted multiclass zero-one loss, absolute loss, and squared loss metrics as stated below.

\begin{corollary}
	\label{crl:ermloss-zo-weighted}
	An adversarial multiclass predictor with weighted zero-one loss is obtained by choosing 
% 	the parameter ${\theta}$
	a predictive function
that minimizes the empirical risk of the surrogate loss function:
	\[
	\text{AL}^{\text{0-1-w}}
	({\bf f}, y, \alpha) = \, 
	\max_{
    {S \subseteq
			[k], %\\ 
            \; S \neq \emptyset } }
	\frac{\sum_{i \in S} f_i
		+ \alpha \left( |S|-1 \right) }{|S|} - f_y.
	\]
\end{corollary}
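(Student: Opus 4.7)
The plan is to derive the corollary as a direct composition of Theorem \ref{thm:ermloss-zo} and Theorem \ref{thm:ermloss-weighted}, with no new extreme-point analysis required. The weighted zero-one loss matrix is exactly $\alpha \Lbf^{\text{0-1}}$, where $\Lbf^{\text{0-1}}$ is the standard zero-one loss matrix studied in Theorem \ref{thm:ermloss-zo}. By Theorem \ref{thm:ermloss-weighted}, scaling the loss matrix by $\alpha$ preserves the optimal adversarial distribution $\qvec^*$ and simply multiplies the optimal slack variable $v^*$ by $\alpha$.

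The key steps are as follows. First, I would recall from the proof of Theorem \ref{thm:ermloss-zo} that the set of extreme points of the feasible polytope $\Cbb$ for the unweighted zero-one case consists exactly of the pairs $\begin{bmatrix} \qvec \\ v \end{bmatrix} = \frac{1}{|S|}\begin{bmatrix} \sum_{i \in S} \evec_i \\ |S|-1 \end{bmatrix}$ indexed by nonempty $S \subseteq [k]$. Next, I would invoke Theorem \ref{thm:ermloss-weighted} to conclude that for the weighted loss $\alpha \Lbf^{\text{0-1}}$, the corresponding extreme points are $\begin{bmatrix} \qvec^\diamond \\ v^\diamond \end{bmatrix} = \begin{bmatrix} \frac{1}{|S|}\sum_{i \in S} \evec_i \\ \frac{\alpha(|S|-1)}{|S|} \end{bmatrix}$.

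Then I would substitute these extreme points into the LP objective from Eq.~\eqref{eq:al-lp}, namely $v + \fvec^\intercal \qvec - f_y$, obtaining
\begin{align}
v^\diamond + \fvec^\intercal \qvec^\diamond - f_y = \frac{\alpha(|S|-1)}{|S|} + \frac{\sum_{i \in S} f_i}{|S|} - f_y = \frac{\sum_{i \in S} f_i + \alpha(|S|-1)}{|S|} - f_y.
\end{align}
Finally, since an optimal solution of the LP is attained at an extreme point (by Proposition \ref{prop:extreme}, which applies verbatim to the scaled polytope because scaling the first block of constraints by $\alpha > 0$ does not change the rank structure of $\Abf_\cvec$), maximizing over all nonempty $S \subseteq [k]$ yields the stated expression for $\text{AL}^{\text{0-1-w}}({\bf f}, y, \alpha)$.

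There is essentially no hard step here; the only thing worth being careful about is verifying that the scaling argument in Theorem \ref{thm:ermloss-weighted} does preserve the extreme-point characterization, i.e., that the bijection $v \leftrightarrow \alpha v$ at the level of vertices of $\Cbb$ is compatible with the rank condition defining $\ext \Cbb$. This is immediate because row scaling of $\Abf$ by the positive factor $\alpha$ preserves ranks of all submatrices $\Abf_\cvec$. Hence the corollary follows without additional combinatorial work.
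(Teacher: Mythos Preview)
Your proposal is correct and follows essentially the same approach as the paper: the corollary is stated immediately after Theorem~\ref{thm:ermloss-weighted} with no separate proof, the intended derivation being precisely the composition of Theorem~\ref{thm:ermloss-zo} and Theorem~\ref{thm:ermloss-weighted} that you outline. Your extra care in checking that the scaling $v \leftrightarrow \alpha v$ preserves the extreme-point characterization (via rank invariance under row scaling) is a nice touch that the paper leaves implicit, but the underlying argument is the same.
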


\begin{corollary}
	\label{crl:ermloss-abs-weighted}
An adversarial ordinal classification predictor with weighted absolute loss is obtained by choosing %the parameter ${\theta}$
a predictive function
that minimizes the empirical risk of the surrogate loss function:
\[ \text{AL}^{\text{ord-w}}
	({\bf f}, y, \alpha) = 
	  \max_{i,j \in [k] }  \frac{f_i + f_j + \alpha \left( j - i \right)}{2} 
      - f_y .
\]
\end{corollary}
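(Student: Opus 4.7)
\textbf{Proof proposal for Corollary \ref{crl:ermloss-abs-weighted}.}
The plan is to combine Theorem \ref{thm:ermloss-weighted} with Theorem \ref{thm:ermloss-abs} essentially mechanically, since the corollary is precisely the specialization of the weighting result to the absolute loss matrix. No new polytope analysis should be required.

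First I would recall from the proof of Theorem \ref{thm:ermloss-abs} that the adversarial maximin in Eq.\ \eqref{eq:al-lp} applied to the unweighted absolute loss matrix $\Lbf^{\text{ord}}$ is optimized at an extreme point of the form
\[
\begin{bmatrix}\qvec^*\\ v^*\end{bmatrix} = \tfrac{1}{2}\begin{bmatrix}\evec_i + \evec_j\\ j-i\end{bmatrix}, \qquad i,j \in [k],
\]
so that the maximum value equals $\max_{i,j}\frac{f_i+f_j+(j-i)}{2} - f_y$. Then I would invoke Theorem \ref{thm:ermloss-weighted} with the weighted loss matrix $\alpha \Lbf^{\text{ord}}$: the optimizer $\qvec^\diamond = \qvec^*$ is unchanged, and the slack variable rescales to $v^\diamond = \alpha v^* = \alpha(j-i)/2$.

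Finally, substituting $(\qvec^\diamond, v^\diamond)$ back into the objective $v + \fvec^\intercal \qvec - f_y$ of Eq.\ \eqref{eq:al-lp} immediately yields
\[
\text{AL}^{\text{ord-w}}(\fvec, y, \alpha) \;=\; \max_{i,j \in [k]} \frac{f_i + f_j + \alpha(j-i)}{2} - f_y,
\]
which is exactly the claimed expression. There is no genuine obstacle here; the only subtlety worth noting is that Theorem \ref{thm:ermloss-weighted} guarantees that the \emph{entire} set of extreme-point optimizers for the scaled polytope is obtained by the $(\qvec^*, \alpha v^*)$ rescaling, so the outer maximization over $(i,j)$ inherited from Theorem \ref{thm:ermloss-abs} carries over verbatim with the single scalar substitution $(j-i)\mapsto \alpha(j-i)$.
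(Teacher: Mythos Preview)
Your proposal is correct and matches the paper's approach: the paper simply states that the corollary follows by applying Theorem~\ref{thm:ermloss-weighted} to the absolute-loss result of Theorem~\ref{thm:ermloss-abs}, which is precisely the substitution $(j-i)\mapsto \alpha(j-i)$ you carry out.
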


\begin{corollary}
	\label{crl:ermloss-sq-weighted}
An adversarial ordinal classification predictor with  weighted squared loss is obtained by choosing %the parameter ${\theta}$ 
a predictive function
that minimizes the empirical risk of the surrogate loss function:
\begin{align}
	\text{AL}^{\text{sq-w}}
	({\bf f}, y,\alpha) = 
	  \max \Bigg\{ 
      \max_{\substack{i,j,l \in [k] \\ i < l \leq j }}  \!
	  \tfrac{ \left( 2(j - l) + 1 \right) \left[ f_i + \alpha\left( l - i \right)^2 \right] + 
	  \left( 2(l - i) - 1 \right) \left[ f_j + \alpha\left( j - l \right)^2 \right] }
	  {2 \left(j - i \right)},\; \max_i f_i\; 
	  \Bigg\}
      - f_y .
\end{align}
\end{corollary}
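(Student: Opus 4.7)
The plan is to derive this corollary by the same pattern used for Corollaries \ref{crl:ermloss-zo-weighted} and \ref{crl:ermloss-abs-weighted}: combine Theorem \ref{thm:ermloss-sq} with the scaling result of Theorem \ref{thm:ermloss-weighted}. The weighted squared loss matrix is exactly $\alpha \Lbf$, where $\Lbf$ is the squared loss matrix analyzed in Theorem \ref{thm:ermloss-sq}, so Theorem \ref{thm:ermloss-weighted} immediately guarantees that the optimal adversarial distribution $\qvec^\diamond = \qvec^*$ is unchanged while the slack scales as $v^\diamond = \alpha v^*$. Consequently, the extreme point set of the feasible polytope $\Cbb$ for the weighted LP in Eq. \eqref{eq:al-lp} is obtained from the set $D$ characterized in the proof of Theorem \ref{thm:ermloss-sq} by multiplying every $v$-coordinate by $\alpha$, while the $\qvec$-coordinates are unchanged.

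Next I would substitute each scaled extreme point into the LP objective $v + \fvec^\intercal \qvec - f_y$. For the two-support extreme points indexed by $i < l \le j$, the $\qvec$-part contributes the convex combination $\frac{2(j-l)+1}{2(j-i)} f_i + \frac{2(l-i)-1}{2(j-i)} f_j$, and the scaled slack contributes $\alpha \cdot \frac{(2(j-l)+1)(l-i)^2 + (2(l-i)-1)(j-l)^2}{2(j-i)}$; collecting these terms over a common denominator yields the first branch of the outer maximum in the corollary. For the single-support extreme points of the form $[\evec_i^\intercal,\,0]^\intercal$, the objective reduces to $f_i - f_y$, whose maximum over $i \in [k]$ yields the $\max_i f_i$ branch.

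Taking the maximum over both families and subtracting $f_y$ then reproduces exactly the stated expression for $\text{AL}^{\text{sq-w}}(\fvec, y, \alpha)$. There is no substantial obstacle here: the polytope and extreme-point work has already been carried out in Theorem \ref{thm:ermloss-sq}, and Theorem \ref{thm:ermloss-weighted} supplies the needed scaling structure for free. The only care required is to confirm that the factor $\alpha$ propagates only through the loss-dependent quantities $(l-i)^2$ and $(j-l)^2$, which are inherited from the scaled slack, and not through the potential terms $f_i, f_j, f_y$, which originate in the objective rather than the constraints; once this is verified, the corollary follows.
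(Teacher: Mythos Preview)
Your proposal is correct and follows exactly the approach the paper intends: the corollary is stated immediately after Theorem~\ref{thm:ermloss-weighted} with the remark ``Using Theorem~\ref{thm:ermloss-weighted}, we can derive the adversarial surrogate loss for weighted multiclass zero-one loss, absolute loss, and squared loss metrics,'' and no separate proof is given. Your explicit verification that $\alpha$ scales only the slack-derived terms $(l-i)^2$ and $(j-l)^2$ while leaving the $\qvec$-weights and potential terms untouched is precisely the computation the paper leaves implicit.
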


The computational cost of calculating the adversarial surrogates for weighted multiclass loss metric above is the same as that for the non-weighted counterpart of the loss, i.e., $\Ocal(k \log k)$ for AL$^{\text{0-1-w}}$, $\Ocal(k)$ for AL$^{\text{ord-w}}$, and $\Ocal(k^3)$ for AL$^{\text{sq-w}}$. The weight constant $\alpha$ does not change the runtime complexity.

\subsection{Classification with Abstention}
\label{sec:class_abstain}

In some prediction tasks, it might be better for the predictor to abstain without making any prediction rather than making a prediction with high uncertainty for borderline samples. 
Under this setting, the standard zero-one loss is used for the evaluation metric with the addition that the predictor can choose an abstain option and suffer a penalty of $\alpha$.
The adversarial surrogate loss for classification with abstention is defined in Eq. \eqref{eq:al-lp}, where $\Lbf$ is the \textit{abstain} loss matrix (e.g. Figure \ref{fig:lm-abstain} for a five-class classification with $\alpha = \frac12$).
The constraints in Eq. \eqref{eq:al-lp} form a convex polytope $\Cbb$. Below is the example of the half-space representation of the polytope for a four-class classification problem with abstention.
\begin{align}
\begin{array}{c} 
\phantom{0} \\ \phantom{0} \\ \text{1st block} \\ \phantom{0}\\\phantom{0} \\\hdashline[2pt/2pt] \phantom{0}  \\ \phantom{0} \\ \text{2nd block} \\ \phantom{0} \\ \hdashline[2pt/2pt] \phantom{0} \\ \text{3rd block} \end{array}     
\left[
\begin{array}{ccccc}
0 & 1 & 1 & 1 & -1 \\ 1 & 0 & 1 & 1 & -1 \\ 1 & 1 & 0 & 1 & -1 \\ 1 & 1 & 1 & 0 & -1 \\ \alpha & \alpha & \alpha & \alpha & -1 \\ \hdashline[2pt/2pt]
1 & 0 & 0 & 0 & 0 \\ 0 & 1 & 0 & 0 & 0 \\ 0 & 0 & 1 & 0 & 0 \\ 0 & 0 & 0 & 1 & 0 \\ \hdashline[2pt/2pt]
1 & 1 & 1 & 1 & 0 \\ -1 & -1 & -1 & -1 & 0 
\end{array} 
\right]
\begin{bmatrix} q_1 \\ q_2 \\ q_3 \\ q_4 \\ v \end{bmatrix}
\ge 
\begin{bmatrix} 0 \\ 0 \\ 0 \\ 0 \\ 0 \\ 0 \\ 0 \\ 0 \\ 0 \\ 1 \\ -1 \end{bmatrix} .   
\end{align}
Note that the first block of the coefficient matrix $\Abf$ has $k+1$ rows (one additional row for the abstain option).

We design a convex surrogate loss that can be generalized to the case where $0 \leq \alpha \leq \frac12$. We define the adversarial surrogate loss for classification with abstention $\text{AL}^{\text{abstain}}$ as stated in Theorem \ref{thm:ermloss-abstain} below.

\begin{theorem}
	\label{thm:ermloss-abstain}
An adversarial predictor for classification with abstention with the penalty for abstain option is $\alpha$ where $0 \leq \alpha \leq \frac12$, is obtained by choosing 
%the parameter ${\theta}$ 
a predictive function
that minimizes the empirical risk of the surrogate loss function:
\begin{align}
	& \text{AL}^{\text{abstain}}
	({\bf f}, y, \alpha) = 
	  \max \left\{
	  \max_{i,j \in [k], i \neq j }  \left(1 - \alpha \right) f_i + \alpha f_j + \alpha, \; \max_i f_i\;  \right\}
      - f_y \label{eq:al-abstain}.
\end{align}
\end{theorem}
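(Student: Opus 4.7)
The plan is to follow the template set by Theorems \ref{thm:ermloss-zo}, \ref{thm:ermloss-abs}, and \ref{thm:ermloss-sq}: identify a candidate set $D$ of ``extreme points'' of the polytope $\Cbb$ whose objective values exactly realize the claimed $\text{AL}^{\text{abstain}}$ formula, then invoke Proposition \ref{prop:extreme} by showing $D \subseteq \Cbb$ and $\ext \Cbb \subseteq D$. Reading off the formula, the two maxima suggest two types of candidate extreme points:
\begin{align}
D = \left\{ \begin{bmatrix} \qvec \\ v \end{bmatrix} = \begin{bmatrix} (1-\alpha)\evec_i + \alpha \evec_j \\ \alpha \end{bmatrix} \,\middle\vert\, i \neq j,\ i,j\in[k] \right\} \cup \left\{ \begin{bmatrix} \qvec \\ v \end{bmatrix} = \begin{bmatrix} \evec_i \\ 0 \end{bmatrix} \,\middle\vert\, i\in[k] \right\}.
\end{align}
Plugging either type into the objective $v + \fvec^\intercal \qvec - f_y$ of Eq. \eqref{eq:al-lp} recovers the two inner expressions in Eq. \eqref{eq:al-abstain}, so if $D \supseteq \ext \Cbb$ the theorem follows. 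The inclusion $D \subseteq \Cbb$ is a direct check: for the first type one verifies $v = \alpha \le 1-\alpha$ (uses $\alpha \le \tfrac12$) for the $j$-th zero-one row and $v = \alpha \cdot 1$ for the abstain row; for the second type, $v = 0$ satisfies all first-block rows since $\alpha \ge 0$.

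For the main direction $\ext \Cbb \subseteq D$, I would proceed by case analysis mirroring the proof of Theorem \ref{thm:ermloss-zo}, but adapted to the enlarged first block (which now has $k+1$ rows, the last being the abstain row). Given $\cvec \in \ext \Cbb$, let $S \subseteq [k+1]$ index the first-block rows selected into the equality subsystem, let $S' = S \cap [k]$, and let $T \subseteq [k]$ index the second-block rows; disjointness $S' \cap T = \emptyset$ follows as in the zero-one proof (an overlap forces $v=1$, contradicting $v \le \alpha \le \tfrac12$ from the abstain row). Split on whether the abstain row lies in $S$:
\begin{enumerate}
\item[(A)] Abstain row in $S$, so $v = \alpha$. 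Each $l \in S'$ forces $q_l = 1-\alpha$. If $|S'|=1$, the rank condition forces $|T|=k-2$, and the remaining coordinate $j$ must carry $q_j = \alpha$ by the sum constraint — giving the first family of $D$. If $|S'|=0$, the rank condition forces $|T|=k-1$ and a single $q_i=1$, which is consistent only if $\alpha = 0$, landing in the second family. If $|S'| \ge 2$, equalities $q_l = 1-\alpha$ plus the sum constraint force $|S'|=2$ and $\alpha = \tfrac12$, which is again the first family at the boundary case.
\item[(B)] Abstain row not in $S$, so $S = S' \subseteq [k]$. This is essentially the zero-one analysis: $q_l = 1-v$ for $l \in S$, disjointness forces $R := [k]\setminus T = S$ and uniform probability, producing $v = (|S|-1)/|S|$. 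The abstain constraint $v \le \alpha \le \tfrac12$ then forces $|S| \le 2$; $|S|=1$ gives $\qvec = \evec_i$, $v=0$ (second family), and $|S|=2$ gives $v = \tfrac12$, which is only feasible when $\alpha = \tfrac12$ and coincides with the first family at that boundary.
\end{enumerate}

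The main obstacle is bookkeeping the sub-cases cleanly, in particular making essential use of the hypothesis $\alpha \le \tfrac12$ at exactly the right moments: it is what rules out large supports $|S|\ge 2$ in Case (A) and $|S|\ge 3$ in Case (B), and it is what makes the first family of $D$ feasible (since $v = \alpha$ must not exceed $1-\alpha$). A secondary subtlety is that the abstain row is linearly independent from the zero-one rows, so the rank count in Proposition \ref{prop:extreme} behaves slightly differently from the previous theorems — one must track carefully how including the abstain row changes the number of second-block equalities required to reach rank $k+1$. Once every case terminates in either the first or second family of $D$, combined with the easy direction $D \subseteq \Cbb$ and the fact that maximizing $v + \fvec^\intercal \qvec - f_y$ over $D$ yields exactly Eq. \eqref{eq:al-abstain}, the theorem is proved.
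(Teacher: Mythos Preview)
Your proposal is correct and follows essentially the same approach as the paper: both identify the same candidate set $D$, verify $D \subseteq \Cbb$, and establish $\ext \Cbb \subseteq D$ by case analysis on which first-block rows lie in the equality subsystem, using $\alpha \le \tfrac12$ to cap the number of tight zero-one rows. The only cosmetic difference is that you organize the cases by whether the abstain row is in $S$, whereas the paper organizes them by whether $\alpha < \tfrac12$ or $\alpha = \tfrac12$; the actual sub-cases and computations coincide.
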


\begin{proof}
% 	\textbf{(New proof)}
    The $\text{AL}^{\text{abstain}}$ above corresponds to the set of extreme points 
    \begin{align}
    	D = \cbr{\begin{bmatrix}
    		\qvec \\
    		v
    		\end{bmatrix}
    		=
    		(1 - \alpha)
    		\begin{bmatrix}
    			\evec_i \\
    			0
    		\end{bmatrix} +
    		\alpha
    		\begin{bmatrix}
    			\evec_j \\
    			1
    		\end{bmatrix} 
    		\ \middle \vert \  \begin{matrix}i, j \in [k]\\  i \neq j\end{matrix} } 
    		\cup \cbr{
    		    \begin{bmatrix}
    		\qvec \\
    		v
    		\end{bmatrix}
    		= \begin{bmatrix}
    			\evec_i \\
    			0
    		\end{bmatrix} 
    		 \middle \vert \ i \in [k]
    		}.
    \end{align}
%    where $\evec_i \in \RR^k$ is the $i$-th canonical vector with a single 1 at the $i$-th coordinate and 0 elsewhere.
    This means $\qvec$ can only have one non-zero element with probability of one or two non-zero elements with the probability of $\alpha$ and $(1-\alpha)$. 
    
    Similar to the proof of Theorem \ref{thm:ermloss-zo}, 
    we next prove that $D \supseteq \ext \Cbb = \{\cvec \in \Cbb : \rank(\Abf_\cvec) = k+1\}$,
    as $D \subseteq \Cbb$ is again obvious.    
    Given $\cvec \in \ext \Cbb$,
    suppose the set of rows that $\Abf_\cvec$ selected from the first and second block of $\Abf$ are $S$ and $T$, respectively.
    Now $S$ is a subset of $[k+1]$ where the $(k+1)$-th index represents the abstain option, while 
    $T$ is a subset of $[k]$,
    indexed against $\Abf$. 
    Similar to the case of zero-one loss metric, 
    %as explained in the proof of Theorem 5, disregarding the abstain option, 
    $S$ and $T$ must be disjoint. % due to the same reason.
    We also denote the set of remaining indices as 
         $R = [k] \backslash T$.
    
    The abstain row in the first block of $\Abf$ implies that $v \le \alpha$, while including $j$ regular rows to $S$ implies that $v = \frac{j-1}{j}$. Therefore, only a single regular row can be in $S$ when $\alpha < \frac12$ or at most two regular rows can be in $S$ when $\alpha = \frac12$.
    
    We first consider $\alpha < \frac12$. Let $S = \{i,k+1\}$, i.e., one regular row and one abstain row. Due to rank requirement of $\Abf_\cvec$ and the disjointness of $S$ and $T$, $R$ must contain two elements with one of them be $i$, i.e. $R = \{i,j\}$. 
    To get the value of $q_i$ and $q_j$, we solve for the equation $\Lbf_{(i,:)} \qvec = \Lbf_{(k+1,:)} \qvec$ which can be simplified as $q_j = \alpha q_i + \alpha q_j$. The solution is to set $q_i = (1-\alpha)$, $q_j = \alpha$, and $v = \alpha$, which obviously in $D$.
    For the second case, let $S = \{i\}$, i.e., one regular row. In this case $R$ must be $\{i\}$ too. This yields $\cvec$ with $q_i = 1$, $q_j = 0, \forall j \in [k]\backslash i$, and $v=0$. Obviously $\cvec \in D$.  
    
    For the case where $\alpha = \frac12$, two cases above still apply with two additional cases. 
    First, $S = \{i,j\}$, i.e., two regular rows. In this case, $R$ must be $\{i,j\}$ too. The solution is to set $q_i = q_j = \frac12$, and $v = \frac12$. This satisfies $v = \Lbf_{(i,:)} \qvec = \Lbf_{(j,:)} \qvec = \frac12$ as well as $v \le \Lbf_{(k+1,:)} \qvec = \alpha = \frac12$. Obviously, this is in $D$.
    Second, $S = \{i,j,k+1\}$, i.e., two regular rows and one abstain row. Due to the rank requirement of $\Abf_\cvec$, and the disjointness of $S$ and $T$, $R$ must contain three elements:, $i,j,$ and another index $l \in [k]\backslash \{i,j\}$. It is easy to check that the solution in this case is also to set $q_i = q_j = \frac12$, and $v = \frac12$. This satisfies $v = \Lbf_{(i,:)} \qvec = \Lbf_{(j,:)} \qvec = \frac12$ as well as $v = \Lbf_{(k+1,:)} \qvec = \alpha = \frac12$.
    
    Therefore, in all possible cases, we have shown that any $\cvec$ in $\ext \Cbb$ must be in $D$.
\end{proof}

We can view the maximization in AL$^{\text{abstain}}$ as the maximization over $k^2$ linear hyperplanes, with $k$ hyperplanes are defined by the case where only a single element of $\qvec$ can be non zero and the rest $k(k-1)$ hyperplanes are defined by the case where two  elements of $\qvec$ are non zero. 
For the binary classification with abstention problem, the surrogate loss function has four facets. Figure \ref{fig:adv2-abstain} shows the loss function in the case where $\alpha = \frac13$ and $\alpha = \frac12$. 
Note that for $\alpha = \frac12$ the facet corresponds with the hyperplane of $(1-\alpha) f_1 + \alpha f_2 + \alpha$ collide with the facet corresponds with the hyperplane of $(1-\alpha) f_2 + \alpha f_1 + \alpha$, resulting in a loss function with only three facets.
For the three-class classification with abstention problem, the surrogate loss has nine facets with different shapes for each true label value, as shown in Figure \ref{fig:advloss-abstain} for $\alpha = \frac13$ and $\alpha = \frac12$. Similar to the binary classification case, for $\alpha = \frac12$, some facets in the surrogate loss surface collide resulting in a surrogate loss function with only six facets.

\begin{figure}[ht]
\centering	
\begin{minipage}{.5\linewidth}
		\centering
		\subfloat[]{\label{fig:abstain2-1}\includegraphics[width=0.8\linewidth]{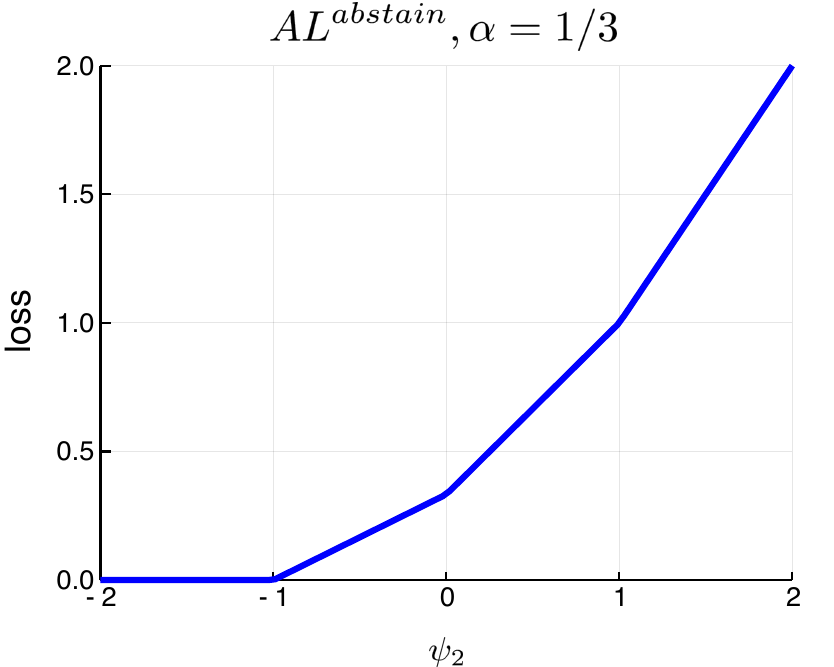}}
	\end{minipage}%
	\begin{minipage}{.5\linewidth}
		\centering
		\subfloat[]{\label{fig:abstain2-2}\includegraphics[width=0.8\linewidth]{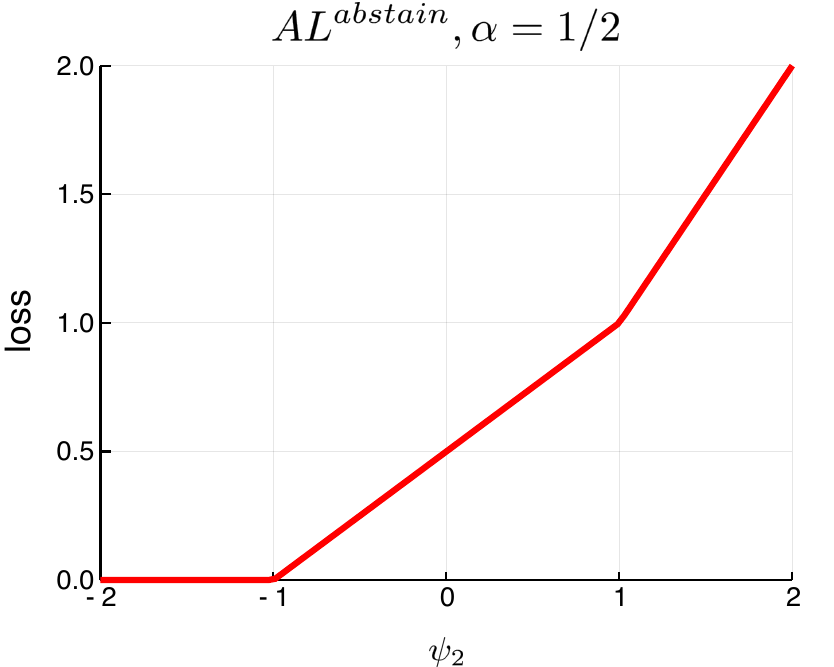}}
\end{minipage}
\caption{$\text{AL}^{\text{abstain}}$ evaluated over the space of potential 
differences 
($\psi_{i, y} = f_{i} - f_{y}$; 
and $\psi_{i,i} = 0$) 
for binary prediction tasks when the true 
label is $y = 1$, where $\alpha = \frac13$ (a), and $\alpha = \frac12$ (b).}
	\label{fig:adv2-abstain}
\end{figure}

\begin{figure*}[ht]
\centering
	\begin{minipage}{.5\linewidth}
		\centering
		\subfloat[]{\label{fig:abstain3-3}\includegraphics[width=0.75\linewidth]{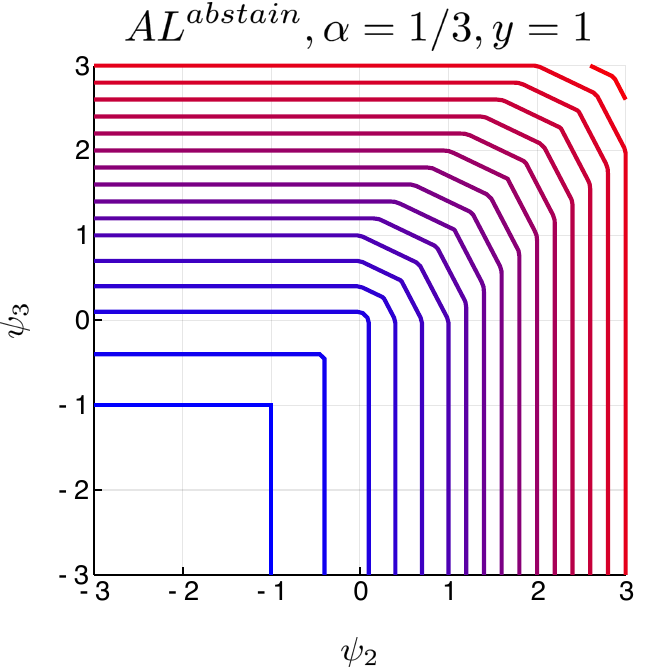}}
	\end{minipage}%
	\begin{minipage}{.5\linewidth}
		\centering
		\subfloat[]{\label{fig:abstain3-5}\includegraphics[width=0.75\linewidth]{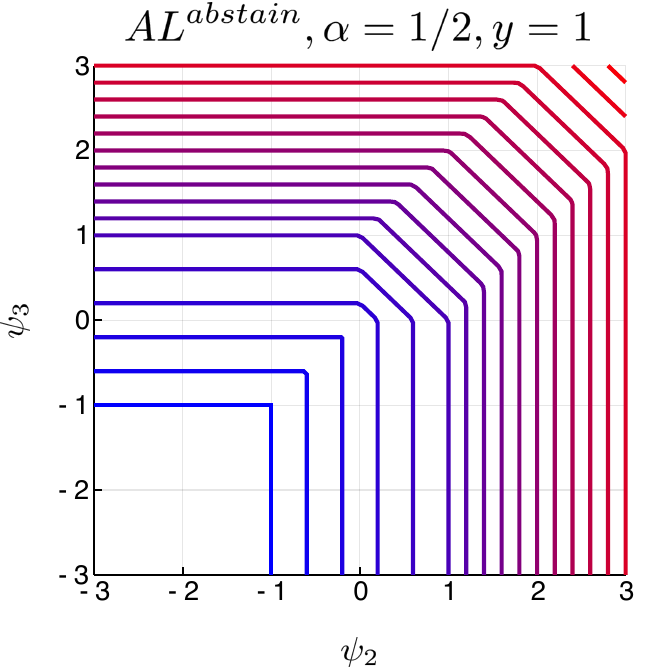}}
	\end{minipage}
	\caption{
Loss function contour plots of $\text{AL}^{\text{abstain}}$ over the space of potential differences $\psi_j \triangleq f_j - f_{y}$ for the prediction task with three classes when the true label is $y = 1$, where $\alpha = \frac13$ (a), and  $\alpha = \frac12$ (b).
		}
	\label{fig:advloss-abstain}
\end{figure*}

Even though the maximization in $\text{AL}^{\text{abstain}}$ is over $n^2$ different items, we construct a faster algorithm to compute the loss. The algorithm keeps track of the two largest potentials as it scans all $k$ potentials. Denote $i^*$ and $j^*$ as the index of the best and the second best potentials respectively. The algorithm then takes the maximum of two candidate solutions: (1) assigning all the probability to $f_{i*}$, resulting in the loss value of $f_{i*}$, or (2) assigning $1-\alpha$ probability to $f_{i*}$ and $\alpha$ probability to $f_{j*}$, resulting in the loss value of $(1-\alpha) f_{i^*} + \alpha f_{j^*} + \alpha$.
The runtime of this algorithm is $\Ocal(k)$ due to the need to scan all $k$ potentials once.

\subsection{General Multiclass Loss}

For a general multiclass loss matrix $\Lbf$, the extreme points of the polytope defined by the constraints in Eq. \eqref{eq:al-lp} may not be easily characterized. Nevertheless, since the maximization in Eq. \eqref{eq:al-lp} is in the form of a linear program (LP), some well-known algorithms for linear programming can be used to solve the problem. The techniques for solving LPs have been extensively studied, resulting in two major  algorithms:

\begin{enumerate}
    \item Simplex algorithm. \\
    The simplex algorithm \citep{dantzig1948programming,dantzig1963linear} cleverly visits the extreme points in the convex polytope until it reaches the one that maximizes the objective. This is  the most popular algorithm for solving LP problems. However, although the algorithm typically works well in practice, the worst case complexity of the algorithm is exponential in the problem size. 
    \item Interior point algorithm.\\
    \citet{karmarkar1984new} proposed an interior point algorithm for solving LPs with  polynomial worst case runtime complexity. The algorithm  finds the optimal solution by traversing the interior of the feasible region. The runtime complexity of Karmarkar's algorithm for solving the LP is $\Ocal(n^{3.5})$ where $n$ is the number of variables in the LP problem.
    In  Eq. \eqref{eq:al-lp}, $n = k+1$.
\end{enumerate}

Therefore, using Karmarkar's algorithm we can bound the worst-case runtime complexity of computing the adversarial surrogate for arbitrary loss matrix $\Lbf$ with $\Ocal(k^{3.5})$ where $k$ is the number of classes.

\section{Prediction Formulation}

The dual formulation of the adversarial prediction (Eq. \eqref{eq:dual}) provides a way to construct a learning algorithm for the framework. The learning step in the adversarial prediction is to find the optimal Lagrange dual variable $\theta^* = \min_{\theta} \; \mathbb{E}_{{\bf X},{Y}\sim \tilde{P}} \; \left[ AL({\bf X}, Y, \theta) \right]$.
%that minimizes Eq. \eqref{eq:dual}. 
In the prediction step, we use the optimal $\theta^*$ to make a label prediction given newly observed data.

\subsection{Probabilistic Prediction}

Given a new data point $\xvec$ and its label $y$, and the optimal $\theta^*$, we formulate the prediction minimax game based on Eq. \eqref{eq:dual} by flipping the optimization order between the predictor and the adversary player:
\begin{align}
	 \min_{\hat{P}(\hat{Y}|{\bf x})}
     \max_{\check{P}(\check{Y}|{\bf x})}
    \mathbb{E}_{\hat{Y}|{\bf x}\sim\hat{P};
    \check{Y}|{\bf x}\sim{\check{P}}}\left[ \text{loss}(\hat{Y},\check{Y})
    + {\theta^*}^\intercal \left( \phi({\bf x},\check{Y}) - \phi({\bf x},{y}) \right)
    \right].
\end{align}
This flipping is enabled by the strong minimax duality theorem \citep{von1945theory}. Denoting $f_i = {\theta^*}^\intercal \phi(\xvec, i)$, the prediction formulation can be written in our vector and matrix notation as:
\begin{align}
    \min_{\pvec \in \Delta}
    \max_{\qvec \in \Delta}
    \pvec^\intercal \Lbf \qvec 
    + \fvec^\intercal \qvec
    - f_{y}. \label{eq:predict-vector-y}
\end{align}
Even though the ground truth label $y$ serves an important role in the learning step (Eq. \eqref{eq:dual}), it is constant with respect to the predictor probability $\pvec$. Therefore, to get the optimal prediction probability $\pvec^*$, the term $f_y$ in Eq. \eqref{eq:predict-vector-y} can be removed, resulting in the following probabilistic prediction formulation:
\begin{align}
    \pvec^* = 
    \argmin_{\pvec \in \Delta}
    \max_{\qvec \in \Delta}
    \pvec^\intercal \Lbf \qvec 
    + \fvec^\intercal \qvec. \label{eq:predict-vector}
\end{align}

\subsection{Non-probabilistic Prediction}

In some prediction tasks, a learning algorithm needs to provide a single class label prediction rather than a probabilistic prediction. 
We propose two prediction schemes to get a non-probabilistic single label prediction $y^*$ from our formulation.
\begin{enumerate}
    \item The maximizer of the potential $\fvec$.\\
    This follows the standard prediction technique used by many ERM-based models, e.g., SVM. Given the best parameter $\theta^*$, the predicted label is computed by choosing the label that maximizes the potential value, i.e.,
    \begin{equation}
        y^* = \argmax_i f_i, \qquad \text{where: } f_i = {\theta^*}^\intercal \phi(\xvec, i).
    \end{equation}
    Note that this prediction scheme works for the prediction settings where the predictor employs the same set of class labels as the ground truth, i.e., $y^* \in \Ycal$ and $y \in \Ycal$ where $\Ycal = [k]$. If they are different such as in the classification task with abstention, this prediction scheme cannot be used. 
    The runtime complexity of this prediction scheme is $\Ocal(k)$ for $k$ classes.
    
    \item The maximizer of the predictor's optimal probability $\pvec^*$.\\
    This prediction scheme requires the predictor to first produce a probabilistic prediction by using Eq. \eqref{eq:predict-vector}. Then the algorithm chooses the label that maximizes the conditional probability, i.e.,
    \begin{align}
    y^* = \argmax_i p^*_i, \qquad \text{where: } \pvec^* = \argmin_{\pvec \in \Delta}
    \max_{\qvec \in \Delta}
    \pvec^\intercal \Lbf \qvec 
    + \fvec^\intercal \qvec.
    \end{align}
    This prediction scheme can be applied to more general problems, including the case where the predictor and ground truth class labels are chosen from different sets of labels. This is useful for  the classification task with abstention. However, for a general loss matrix $\Lbf$, this prediction scheme is more computation intensive than the potential-based prediction, i.e., $\Ocal(k^{3.5})$ due to the need of solving the minimax game by linear programming (Karmarkar's algorithm).
\end{enumerate}

\subsection{Prediction Algorithm for Classification with Abstention}
\label{sec:pred_abstain}

In the task of classification with abstention, the standard prediction scheme using the potential maximizer $\argmax_i f_i$ cannot be applied due to the additional abstain option of the predictor. In this subsection, we construct a fast prediction scheme that is based on the predictor's optimal probability in the minimax game (Eq. \eqref{eq:predict-vector}) without the need to use general purpose LP solver. The minimax game in Eq. \eqref{eq:predict-vector} can be equivalently written in the standard LP form as:
\begin{align}
    \min_{\pvec, v} & \;
    v \label{eq:lp-pred}\\
    \text{s.t.:} & \;  v \ge {\Lbf_{(:,i)}}^\intercal \pvec + f_i , \quad \forall i \in [k] \nonumber \\
    & \; \pvec \in \RR_+^{k+1}, \nonumber \\
    & \; \pvec^\intercal {\bf 1}  = 1 \nonumber ,
\end{align}
where $v$ is a slack variable to convert the inner maximization into linear constraints, and $\Lbf_{(:,i)}$ denotes the $i$-th column of the loss matrix $\Lbf$. We aim to analyze the optimal $\pvec$ and $v$ for the case where $\Lbf$ is the loss matrix for classification with abstention, e.g.,
\[
\Lbf = 
        \begin{bmatrix}
        0 & 1 & 1 & 1 \\
        1 & 0 & 1 & 1 \\
        1 & 1 & 0 & 1 \\
        1 & 1 & 1 & 0 \\
        \alpha & \alpha & \alpha & \alpha
        \end{bmatrix}
\]
in a four-class classification, where $\alpha$ is the penalty for abstaining (c.f. Section \ref{sec:class_abstain}). Similar to the case of the adversarial surrogate loss for classification with abstention, our analysis can be generalized to the case where $0 \le \alpha \le \frac12$.

\begin{theorem}
% \textbf{(New statement)}
Let $\alpha$ be the penalty for abstaining where $0 \leq \alpha \leq \frac12$, $\theta^*$ be the learned parameter, and $\fvec$ be the potential vector for all classes where $f_i = {\theta^*}^\intercal\phi(\xvec, i)$.
Given a new data point $\xvec$, 
let $i^* = \argmax_i f_i$ (break tie arbitrarily),
$j^* = \argmax_{j \neq i^*} f_j$,
and $\evec_{i^*} \in \RR^k$ be the $i^*$-th cannonical vector.
Then the predictor's optimal probability $\pvec^*$ of Eq. \eqref{eq:lp-pred} for the task of classification with abstention can be directly computed as:
\begin{align}
    \pvec^* = \begin{bmatrix}
      \evec_{i^*} \\
      0
    \end{bmatrix}
    \text{ if } f_{i^*} - f_{j^*} \ge 1 \qquad \text{and} \qquad
    \pvec^* = \begin{bmatrix}
    (f_{i^*} - f_{j^*})\evec_{i^*} \\
    1- f_{i^*} + f_{j^*}
    \end{bmatrix}
    \text{ if } f_{i^*} - f_{j^*} < 1.
\end{align}
\end{theorem}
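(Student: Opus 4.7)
The plan is to certify the claimed $\pvec^*$ by pairing strong minimax duality with the game value already extracted in the proof of Theorem \ref{thm:ermloss-abstain}. Both simplices in Eq. \eqref{eq:predict-vector} are compact and the objective is bilinear, so by Sion's theorem the value of $\min_\pvec \max_\qvec \pvec^\intercal \Lbf \qvec + \fvec^\intercal \qvec$ equals that of the swapped max--min, which is $\text{AL}^{\text{abstain}}(\fvec, y, \alpha) + f_y$ from Eq. \eqref{eq:al-abstain}. Because $\alpha \le \tfrac12$ implies $1 - \alpha \ge \alpha$, the two inner maxima in $\text{AL}^{\text{abstain}}$ are attained at $(i,j) = (i^*, j^*)$ and at $i^*$ respectively, so the game value collapses to
\[
V = \max\bigl\{(1-\alpha) f_{i^*} + \alpha f_{j^*} + \alpha,\; f_{i^*}\bigr\},
\]
and the two branches of $V$ are separated exactly by the threshold $f_{i^*} - f_{j^*} = 1$. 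It therefore suffices to exhibit that the proposed $\pvec^*$, which is easily seen to lie in $\Delta$ under either case condition, attains $\max_{\qvec \in \Delta} \pvec^{*\intercal} \Lbf \qvec + \fvec^\intercal \qvec = V$.

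First I would dispatch the case $f_{i^*} - f_{j^*} \ge 1$. With $\pvec^* = [\evec_{i^*};\, 0]$, the product $\pvec^{*\intercal} \Lbf$ is exactly the $i^*$-th row of the zero-one block of $\Lbf$, so $\pvec^{*\intercal} \Lbf \qvec = 1 - q_{i^*}$, and the inner objective becomes $1 - q_{i^*} + \fvec^\intercal \qvec$. Its maximum over $\qvec \in \Delta$ equals $\max\{f_{i^*},\, 1 + f_{j^*}\}$, which is $f_{i^*}$ by the case hypothesis; this matches $V$.

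Next I would handle $f_{i^*} - f_{j^*} < 1$, where $\pvec^*$ places mass $f_{i^*} - f_{j^*}$ on $i^*$ and $1 - f_{i^*} + f_{j^*}$ on the abstain row. Using $\Lbf_{(i^*, j)} = I(j \ne i^*)$ and $\Lbf_{(k+1, j)} = \alpha$, direct computation shows that the coefficient of $q_j$ in $\pvec^{*\intercal} \Lbf \qvec + \fvec^\intercal \qvec$ simplifies to $(1-\alpha) f_{i^*} + \alpha f_{j^*} + \alpha$ when $j \in \{i^*, j^*\}$, and to $(1-\alpha) f_{i^*} + \alpha f_{j^*} + \alpha - (f_{j^*} - f_j)$ otherwise. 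Since $f_{j^*} - f_j \ge 0$ by the definition of $j^*$, the maximum over $\qvec \in \Delta$ equals $V$. The main obstacle is just the bookkeeping in this second case: getting the $q_j$-coefficients to collapse into a form that is immediately recognizable as the game value, thereby confirming that the two predictor rows that carry mass and the two adversary columns that are tight form a saddle point at the predetermined value $V$. All conceptual content is supplied by Theorem \ref{thm:ermloss-abstain} together with Sion's theorem.
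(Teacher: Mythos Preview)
Your proposal is correct and follows essentially the same route as the paper: invoke strong duality to identify the optimal value of Eq.~\eqref{eq:lp-pred} with $\text{AL}^{\text{abstain}}(\fvec,y,\alpha)+f_y$ from Theorem~\ref{thm:ermloss-abstain}, then verify in each of the two cases that the proposed $\pvec^*$ attains this value by computing $\max_j\{\Lbf_{(:,j)}^\intercal\pvec^*+f_j\}$. The only cosmetic difference is that you name Sion's theorem explicitly, whereas the paper leaves the duality step implicit.
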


\begin{proof}
% \textbf{(New proof)}
Based on Theorem \ref{thm:ermloss-abstain},
the optimal objective value of \eqref{eq:lp-pred} is exactly $\text{AL}^{\text{abstain}}(\fvec, y, \alpha)+f_y$,
which is $f_{i^*}$ when $f_{i^*} - f_{j^*} \ge 1$,
and $\alpha + (1-\alpha) f_{i^*} + \alpha f_{j^*}$ otherwise.
So we only need to verify that the $\pvec^*$ given in the theorem attains these two values,
or equivalently, $\max_i \cbr{{\Lbf_{(:,i)}}^\intercal \pvec^* + f_i}$ attains these two values.

\begin{enumerate}
    \item Case 1: $f_{i^*} - f_{j^*} \ge 1$.
Now $\pvec^* = \begin{bmatrix}
      \evec_{i^*} \\
      0
    \end{bmatrix}$
    renders 
    ${\Lbf_{(:,i^*)}}^\intercal \pvec^* + f_{i^*} = f_{i^*}$,
and 
    ${\Lbf_{(:,k)}}^\intercal \pvec^* + f_{k} = 1 + f_{k} \le f_{i^*}$ for all $k \neq i^*$.
So the objective of \eqref{eq:lp-pred} matches $\text{AL}^{\text{abstain}}+f_y$.

    \item Case 2: $f_{i^*} - f_{j^*} < 1$.
Now $\pvec^* = \begin{bmatrix}
    (f_{i^*} - f_{j^*})\evec_{i^*} \\
    1- f_{i^*} + f_{j^*}
    \end{bmatrix} \in \RR_+^{k+1}$ and $\one^\intercal \pvec^* = 1$.
    Furthermore,
\begin{align*}
    {\Lbf_{(:,i^*)}}^\intercal \pvec^* + f_{i^*} &= \alpha(1- f_{i^*} + f_{j^*}) + f_{i^*},\\
    {\Lbf_{(:,k)}}^\intercal \pvec^* + f_{k} &= 
    f_{i^*} - f_{j^*} + \alpha(1- f_{i^*} + f_{j^*}) + f_{k}
    \le \alpha(1- f_{i^*} + f_{j^*}) + f_{i^*} \quad (k \neq i^*).
\end{align*}
Therefore 
$\max_i \cbr{{\Lbf_{(:,i)}}^\intercal \pvec^* + f_i}
= \alpha(1- f_{i^*} + f_{j^*}) + f_{i^*}
$,
which matches $\text{AL}^{\text{abstain}}+f_y$.
\end{enumerate}
\end{proof}

From the theorem above, we derive a non-probabilistic prediction scheme based on the maximizer of the predictor's probability as follows.

\begin{corollary}
For $0 \leq \alpha \leq \frac12$, a non-probabilistic prediction of the adversarial prediction method for the classification with abstention task can be computed as:
\begin{align}
    y^* = \begin{cases} 
      i^* & f_{i^*} - f_{j^*} \geq \frac12 \\
      \text{abstain} & \text{otherwise}
   \end{cases}
\end{align}
where $i^*$ and $j^*$ are the indices of the largest and the second largest potentials respectively.
\label{col:abstain}
\end{corollary}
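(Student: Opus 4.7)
The plan is to obtain this corollary as a direct case analysis of the explicit formula for $\pvec^*$ proved in the preceding theorem. Since the non-probabilistic prediction scheme is defined by $y^* = \argmax_i p_i^*$, and the theorem tells us exactly what $\pvec^*$ looks like as a function of the gap $\Delta \triangleq f_{i^*} - f_{j^*}$, all we need to do is read off which coordinate of $\pvec^*$ is largest in each regime of $\Delta$.

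First I would split on whether $\Delta \ge 1$ or $\Delta < 1$, matching the two regimes of the theorem. In the first regime, $\pvec^* = [\evec_{i^*};\, 0]$ places all mass on coordinate $i^*$, so trivially $\argmax_i p_i^* = i^*$; and since $\Delta \ge 1 \ge \tfrac12$, the corollary's rule returns $i^*$ as well.

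In the second regime $\Delta < 1$, the theorem gives a $\pvec^*$ supported on only two coordinates: $p_{i^*}^* = \Delta$ and $p_{k+1}^* = 1 - \Delta$ (the abstain coordinate), with all other entries zero. The argmax therefore compares these two numbers, and $p_{i^*}^* \ge p_{k+1}^*$ iff $\Delta \ge 1-\Delta$ iff $\Delta \ge \tfrac12$. Combining with the first regime, we get $y^* = i^*$ exactly when $\Delta \ge \tfrac12$, and $y^* = \text{abstain}$ (i.e. the $(k+1)$-st coordinate) otherwise, which is the stated rule. Ties at $\Delta = \tfrac12$ are broken in favor of $i^*$, matching the weak inequality in the corollary.

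There is no real obstacle here; the only point that needs care is verifying that the ranges $\Delta \ge 1$ and $\tfrac12 \le \Delta < 1$ both produce the $i^*$ branch consistently, and that no other coordinate can tie with $p_{i^*}^*$ or $p_{k+1}^*$ in the second regime (which is clear because all other entries of $\pvec^*$ are zero, and $\alpha \le \tfrac12$ ensures $1-\Delta \ge 0$, so $p_{k+1}^*$ is a valid probability). Thus the corollary follows immediately from the explicit form of $\pvec^*$.
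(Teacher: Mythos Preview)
Your proposal is correct and is exactly the natural case analysis on the explicit $\pvec^*$ from the preceding theorem; the paper in fact states this corollary without proof, so your argument is the intended derivation. One minor slip: the nonnegativity of $p_{k+1}^* = 1-\Delta$ follows simply from being in the regime $\Delta < 1$, not from the condition $\alpha \le \tfrac12$ (note that $\alpha$ does not enter the formula for $\pvec^*$ at all).
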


The runtime complexity of this prediction scheme is $\Ocal(k)$ since the algorithm needs to scan all $k$ potentials and maintain the two largest potentials. This is much faster than solving the minimax game in Eq. \eqref{eq:predict-vector}, which costs $\Ocal(k^{3.5})$.

\section{Theoretical Guarantees}

In this section, we study the theoretical properties of the adversarial prediction framework. We first analyze the property of our method in ideal learning settings, followed by the generalization guarantee of our method.

\subsection{Fisher Consistency}

The behavior of a prediction method in ideal learning settings---i.e., trained on the true evaluation distribution and given an arbitrarily rich feature representation, or, equivalently, considering the space of all measurable functions---provides a useful theoretical validation.  
Fisher consistency requires that the prediction model yields the Bayes optimal decision boundary in this setting \citep{tewari2007consistency,liu2007fisher,ramaswamy2012classification,pedregosa2017consistency}.
Suppose the potential scoring function $f(\xvec, y)$ is optimized over the space of all measurable functions. 
Given the true distribution $P(\Xbf,Y)$, a surrogate loss function $\delta$ % : \mathcal{X} \times \mathcal{Y} \rightarrow \mathbb{R}$
is said to be Fisher consistent with respect to the loss $\ell $ % : \mathcal{X} \times \mathcal{Y} \rightarrow \mathbb{R}$ 
if the minimizer ${f}^*$ of the surrogate loss reaches the Bayes optimal risk, i.e.:
\begin{align}
& f^* \in \argmin_{f} \mathbb{E}_{Y|{\bf x}\sim P} \left[ \delta_{f}({\bf x}, Y)\right] \;\;
\Rightarrow \;\;
\mathbb{E}_{Y|{\bf x}\sim P} \left[ \ell_{{f}^*}({\bf x}, Y)\right] = \min_{f} \mathbb{E}_{Y|{\bf x}\sim P} \left[ \ell_{f}({\bf x}, Y)\right]. \label{eq:consistency-general}
\end{align}
Here $\delta_f(\xvec, y)$ stands for the surrogate loss function value if the true label is $y$ and we make a prediction on $\xvec$ using the potential function $f(\xvec, y)$.
The loss $\ell_f$ has a similar meaning.

\subsubsection{Fisher Consistency for Potential-Based Prediction}

We consider Fisher consistency for standard multiclass classification where the prediction is done by taking the argmax of the potentials, i.e., $\argmax_y f(\xvec, y)$. This usually applies to the setting where the predictor and ground truth class labels are chosen from the same set of labels, i.e., $y^* \in \Ycal$, and $y \in \Ycal \triangleq [k]$. 
Given that prediction is based on the $\argmax$ of the potentials, the right-hand side of Eq. \eqref{eq:consistency-general} is equivalent to:
\begin{align}
\mathbb{E}_{Y|{\bf x}\sim P} \left[ \ell(\argmax_{y'} {f}^*({\bf x}, y'), Y)\right] &= \min_{f} \mathbb{E}_{Y|{\bf x}\sim P} \left[ \ell(\argmax_{y'} {f}({\bf x}, y'), Y)\right].
\end{align}
Since $f$ is optimized over all measurable functions, the condition in Eq. \eqref{eq:consistency-general} can be further simplified as
\begin{align}
& f^* \in \argmin_{f} \mathbb{E}_{Y|{\bf x}\sim P} \left[ \delta_{f}({\bf x}, Y)\right]\\
\Rightarrow \ &
\argmax_{y'} {f}^*({\bf x}, y') \subseteq \argmin_{y'} \mathbb{E}_{Y|{\bf x}\sim P} \left[ \ell(y', Y)\right], \qquad \forall \xvec \in \Xcal.
\end{align}

Using the potential scoring function notation $f(\xvec, y)$, the adversarial surrogate loss in Eq. \eqref{eq:al-hat-check} can be equivalently written as:
\begin{align}
    AL_f({\bf x}, y) = 
    \max_{\check{P}(\check{Y}|{\bf x})}
     \min_{\hat{P}(\hat{Y}|{\bf x})}
    \mathbb{E}_{\hat{Y}|{\bf x}\sim\hat{P};
    \check{Y}|{\bf x}\sim{\check{P}}}\left[ \text{loss}(\hat{Y},\check{Y})
    + f({\bf x},\check{Y}) - f({\bf x},{y}) 
    \right].
\end{align}
Then, the Fisher consistency condition for the adversarial surrogate loss AL$_{f}$ becomes: 
\begin{align}
\label{eq:def_consistency}
& f^* \in \Fcal^* \triangleq \argmin_{f} \mathbb{E}_{Y|\xvec \sim P}
\sbr{\text{AL}_{f} (\xvec, Y)}
\\ \Rightarrow \ 
&\argmax_y f^*(\xvec,y) \subseteq \Ycal^\diamond \triangleq \argmin_{y'} \mathbb{E}_{Y | \xvec \sim P} [\text{loss}(y', Y)]. \notag \end{align}
%Note that in Eq. \eqref{eq:def_consistency} we allow $f$ to be optimized over the set of all measurable functions on the input space $(\xvec, y)$.
In the sequel, we will show that the condition in Eq. \eqref{eq:def_consistency} holds for our adversarial surrogate AL for any loss metrics satisfying a natural requirement that the correct prediction must suffer a loss that is strictly less than incorrect predictions. 
We start in Theorem \ref{thm:singlemin} by establishing Fisher consistency when the Bayes optimal label under the true distribution is unique (i.e., $\Ycal^\diamond$ is a singleton), and then proceed to more general cases in Theorem \ref{thm:multimin}. 

\vspace{1.2mm}
\begin{theorem}
\label{thm:singlemin}
% simplified version
	In the standard multiclass classification setting, suppose we have a loss metric that satisfies the natural requirement: $\text{loss}(y, y) < \text{loss}(y, y')$ for all $y' \neq y$.
	Then the adversarial surrogate loss AL$_{f}$ is Fisher consistent if $f$ is optimized over all measurable functions
	and $\Ycal^\diamond$ is a singleton.
\end{theorem}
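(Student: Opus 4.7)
The plan is to analyze the saddle-point structure of the population objective $\min_{\fvec} \mathbb{E}_{Y|\xvec}[\text{AL}_f(\xvec, Y)]$ pointwise in $\xvec$. Because $f$ ranges over all measurable functions, the potential vector $\fvec = (f(\xvec,1),\ldots,f(\xvec,k))^\intercal$ can be chosen independently at each $\xvec$. Writing $\mathbf{p}^P$ for the true conditional distribution $P(Y|\xvec)$ and pushing the outer expectation past the saddle yields
\begin{align*}
\mathbb{E}_{Y|\xvec}[\text{AL}_f(\xvec, Y)] = \max_{\qvec \in \Delta} \min_{\pvec \in \Delta}\bigl[\pvec^\intercal \Lbf \qvec + \fvec^\intercal (\qvec - \mathbf{p}^P)\bigr].
\end{align*}
Mirroring the strong-duality argument of Theorem \ref{thm:dual}, I would swap the outer $\min_\fvec$ with $\max_\qvec$ and $\min_\pvec$ using Sion's theorem (valid since $\Delta$ is compact). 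The innermost minimization over the unconstrained $\fvec$ of a linear function is $-\infty$ unless $\qvec = \mathbf{p}^P$, which forces the outer maximum to be attained at $\qvec^\star = \mathbf{p}^P$. What remains is $\min_\pvec \pvec^\intercal \Lbf \mathbf{p}^P = \min_i(\Lbf \mathbf{p}^P)_i$, the Bayes risk, attained uniquely at $\pvec^\star = \mathbf{e}_{y^\diamond}$ by the singleton assumption.

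Next, I would read off the saddle-point conditions at any $f^* \in \mathcal{F}^*$ by forming the saddle triple $(\fvec^*, \mathbf{e}_{y^\diamond}, \mathbf{p}^P)$. Stationarity on the adversary's side requires $\mathbf{p}^P$ to maximize $\mathbf{e}_{y^\diamond}^\intercal \Lbf \qvec + \fvec^{*\intercal}\qvec$ over $\Delta$; by complementary slackness this forces $\operatorname{supp}(\mathbf{p}^P) \subseteq \arg\max_j\bigl[L_{y^\diamond,j} + f^*_j\bigr]$. In the standard multiclass classification setting---in particular under the zero-one loss---the unique Bayes-optimal label $y^\diamond$ itself lies in $\operatorname{supp}(\mathbf{p}^P)$ because $y^\diamond = \arg\max_i p^P_i$. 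Instantiating the inclusion at $j = y^\diamond$ and invoking the natural requirement $L_{y^\diamond,y^\diamond} < L_{y^\diamond,k}$ for $k \neq y^\diamond$ (WLOG $L_{y^\diamond,y^\diamond}=0$) yields $f^*_{y^\diamond} \geq L_{y^\diamond,k} + f^*_k > f^*_k$ for every $k \neq y^\diamond$. Hence $\arg\max_k f^*_k = \{y^\diamond\} \subseteq \Ycal^\diamond$, as Fisher consistency requires.

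The main obstacle is the step that places $y^\diamond$ inside $\operatorname{supp}(\mathbf{p}^P)$ so that complementary slackness can be instantiated at $j = y^\diamond$. This is automatic for the zero-one loss and, more broadly, whenever the Bayes-optimal label carries positive true probability; otherwise the terminal inequality chain collapses and a finer argument---for instance, a perturbation analysis of $\fvec^*$ along directions that preserve $\mathcal{F}^*$-optimality---would be required. A secondary technical point is rigorously justifying the outer minimax swap over the unbounded variable $\fvec$, which I would handle by first restricting $\fvec$ to a sufficiently large compact ball, applying Sion's theorem there, and then checking that any minimizer is attained strictly inside this region.
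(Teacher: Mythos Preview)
Your overall strategy---show that at any minimizer $\fvec^*$ the adversary's optimal $\qvec^*$ must equal the true conditional $\dvec$, then read off optimality conditions for $\dvec$ in the inner max---is exactly the paper's. The paper, however, avoids your Sion swap over the unbounded variable $\fvec$ entirely: it simply observes that the objective is convex in $\fvec$, so $0$ lies in its subdifferential at $\fvec^*$, and Danskin's theorem gives that subdifferential as $\{\qvec^* - \dvec\}$, whence $\qvec^* = \dvec$. This is cleaner than your compact-ball workaround and removes the second technical concern you flagged.

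The genuine gap is the step you yourself identify. Your justification ``$y^\diamond = \arg\max_i p^P_i$'' is \emph{false} for general loss matrices satisfying only $\text{loss}(y,y) < \text{loss}(y,y')$; it is specific to the zero-one loss. Worse, even the weaker statement $y^\diamond \in \operatorname{supp}(\mathbf{p}^P)$ can fail: take three classes with $\mathbf{p}^P = (\tfrac12,\tfrac12,0)$ and a loss matrix with $L_{1,2}=L_{2,1}=10$, $L_{3,1}=L_{3,2}=1$, zeros on the diagonal; then $y^\diamond = 3$ uniquely but $p^P_3 = 0$. So your terminal inequality chain does collapse without an extra hypothesis, and no ``perturbation analysis'' will rescue it in that generality.

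The paper resolves this by (implicitly) assuming $\dvec \in \Delta_k \cap \mathbb{R}_{++}^k$, i.e.\ full support of the true conditional---a standard hypothesis in Fisher-consistency arguments. With full support, the first-order optimality condition at $\qvec = \dvec$ forces $f^*_j + L_{y^\diamond,j}$ to be \emph{constant} in $j$ (not merely maximal on the support), which the paper calls the ``loss reflective'' property. Your complementary-slackness inclusion $\operatorname{supp}(\mathbf{p}^P) \subseteq \arg\max_j[L_{y^\diamond,j}+f^*_j]$ becomes the same statement under full support, since then the left side is all of $[k]$. So the fix is simply to add the full-support assumption; once you do, your argument and the paper's coincide.
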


\begin{proof}
	Let $\pvec$ be the probability mass given by the predictor player $\Phat(\hat{Y}|\xvec)$,
	$\qvec$ be the probability mass given by the adversary player $\Pcheck(\check{Y}|\xvec)$, and $\dvec$ be the probability mass of the true distribution $P(Y|\xvec)$.
	So, all $\pvec$, $\qvec$, and $\dvec$ lie in the $k$ dimensional probability simplex $\Delta$, where $k$ is the number of classes.
	Let $\Lbf$ be a $k$-by-$k$ loss matrix whose $(y, y')$-th entry is $\text{loss}(y, y')$.
	Let $\fvec \in \RR^{k}$ be the vector encoding of the value of $f$ at all classes.
	The definition of $f^*$ in Eq. \eqref{eq:def_consistency}   
	now becomes:
	\begin{align}
		\fvec^* &\in \argmin_\fvec \max_{\qvec \in \Delta} \min_{\pvec \in \Delta} \cbr{\fvec^\intercal \qvec + \pvec^\intercal \Lbf \qvec - \dvec^\intercal \fvec}
		\label{eq:proof_consistency_f}
		=\argmin_\fvec \max_{\qvec \in \Delta} \cbr{\fvec^\intercal \qvec + \min_{y} (\Lbf \qvec)_{y} - \dvec^\intercal \fvec}.\ \ \ 
	\end{align}
	%So the definition of $\pi^*$ is equivalent to  
	%$\pi^* = \argmax_\pi f^*_\pi$. % and $\pi^\diamond = \argmin_\pi (C \dvec)_\pi$.
	
% 	Let us assume that $\Pi^\diamond = \argmin_\pi \mathbb{E}_{\pibar | x \sim P} [\text{loss}(\pi, \pibar)]$ (or equivalently $ \argmin_\pi (C \dvec)_\pi$) has a unique solution which we denote as $\pi^\diamond$ (we will drop this assumption later). 
	Since $\Ycal^\diamond \triangleq \argmin_{y} \mathbb{E}_{Y | \xvec \sim P} [\text{loss}(y, Y)]$ (or equivalently $ \argmin_{y} (\Lbf \dvec)_{y}$) contains only a singleton,
	we denote it as $y^\diamond$. 
	We are to show that $\argmax_y f^*(\xvec,y)$ is a singleton,
	and its only element is exactly $y^\diamond$.
	Since $\fvec^*$ is an optimal solution,
	the objective function must have a zero subgradient at $\fvec^*$.
	That means $\zero = \qvec^* - \dvec$,
	where $\qvec^*$ is an optimal solution in Eq. \eqref{eq:proof_consistency_f} under $\fvec^*$.
	As a result:
	\begin{align}
	    \label{eq:d_in_argmax}
		\dvec \in \argmax_{\qvec \in \Delta} \cbr{\qvec^\intercal \fvec^* + \min_{y} (\Lbf\qvec)_{y}}.
	\end{align}
	
	By the first order optimality condition of constrained convex optimization 
	(see Eq. (4.21) of \citet{boyd2004convex}),
	this means:
	\begin{align}
	\label{eq:1st_optimality}
	  \left(\fvec^* + {\Lbf_{(y^\diamond,:)}}^\intercal\right)^\intercal (\uvec - \dvec) \le 0 \quad \forall \uvec \in \Delta,
	\end{align}
	where $\Lbf_{(y^\diamond,:)}$ is the $y^\diamond$-th row of $\Lbf$, $\fvec^* + {\Lbf_{(y^\diamond,:)}}^\intercal$ is the gradient of the objective in Eq. \eqref{eq:d_in_argmax} with respect to $\qvec$ evaluated at $\qvec = \dvec$.
	Here we used the definition of $y^\diamond$.
	However, this inequality can hold for some $\dvec \in \Delta_k \cap \RR_{++}^{k}$ 
	only if 
	$\fvec^* + {\Lbf_{(y^\diamond,:)}}^\intercal$ is a uniform vector,
	\ie, $f^*_y + \text{loss}(y^\diamond, y)$ is constant in $y$.
	To see this, let us assume the contrary that $\vvec \triangleq \fvec^* + {\Lbf_{(y^\diamond,:)}}^\intercal$ is not a uniform vector, and let $i$ be the index of its maximum element. Setting $\uvec = \evec_i$, it is clear that for any $\dvec \in \Delta_k \cap \RR_{++}^{k}$, $\vvec^\intercal \uvec > \vvec^\intercal \dvec$ and hence $\left(\fvec^* + {\Lbf_{(y^\diamond,:)}}^\intercal\right)^\intercal (\uvec - \dvec) > 0$, which violates the optimality condition.
	
	Finally, using the assumption that $\text{loss}(y, y) < \text{loss}(y, y')$ for all $y' \neq y$,
	it follows that $\argmax_y f^*(\xvec,y) = \argmin_y \Lbf_{(y^\diamond,y)} = \{y^\diamond\}$.
% 	$\argmax_y f^*(\xvec,y) = \argmin_y (\Lbf_{(y^\diamond,:)})_y$.
\end{proof}

The assumption of loss function in the above theorem is quite mild, requiring only that the incorrect predictions suffer higher loss than the correct one. 
We do not even require symmetry in its two arguments.
The key to the proofs is the observation that for the optimal potential function $f^*$,
$f^*(\xvec,y) + \text{loss}(y^\diamond, y)$ is invariant to $y$ when $\Ycal^\diamond = \{ y^\diamond \}$.
We refer to this as the \emph{loss reflective} property of the minimizer. In the next theorem, we generalize Theorem \ref{thm:singlemin} to the case where the Bayes optimal prediction under the true distribution may have ties.

\vspace{1.2mm}
\begin{theorem}
\label{thm:multimin}
	In the standard multiclass classification setting, suppose we have a loss metric that satisfies the natural requirement:  $\text{loss}(y, y) < \text{loss}(y, y')$ for all $y' \neq y$.
	Furthermore, if $f$ is optimized over all measurable functions,
	then:
	\begin{enumerate}[label=(\alph*), itemsep=0.5pt, topsep=0.5pt]
	    \item there exists $f^* \in \Fcal^*$ 
	    such that 
	    $\argmax_y f^*(\xvec,y) \subseteq \Ycal^\diamond$ (i.e., satisfies the Fisher consistency requirement).
	    In fact, all elements in $\Ycal^\diamond$ can be recovered by some $f^* \in \Fcal^*$.
	    %
	   % \item if 
	   % $\argmin_y \sum_{y' \in \Ycal^\diamond} \alpha_{y'} \text{loss}(y', y) \subseteq \Ycal^\diamond$ for all $\alpha_{(\cdot)} \ge 0$; $\sum_{y' \in \Ycal^\diamond} \alpha_{y'} = 1$,
	   % then  $\argmax_y f^*(\xvec,y) \subseteq \Ycal^\diamond$ for \textbf{all} $f^* \in \Fcal^*$. In this case, all $f^* \in \Fcal^*$ satisfy the Fisher consistency requirement.
	    \item if the loss satisfies
	    $\argmin_{y'} \sum_{y \in \Ycal^\diamond} \alpha_{y} \text{loss}(y, y') \subseteq \Ycal^\diamond$ for all $\alpha_{(\cdot)} \ge 0$ and $\sum_{y \in \Ycal^\diamond} \alpha_{y} = 1$,
	    then  $\argmax_y f^*(\xvec,y) \subseteq \Ycal^\diamond$ for \textbf{all} $f^* \in \Fcal^*$. 
	    In this case, all $f^* \in \Fcal^*$ satisfies the Fisher consistency requirement.
	\end{enumerate}
\end{theorem}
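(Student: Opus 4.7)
The plan is to adapt the argument of Theorem \ref{thm:singlemin} to the ties case, where the main new ingredient is a subdifferential-based first-order optimality condition. Let $\dvec$ again be the vector of $P(Y \mid \xvec)$ and $\fvec^*$ be the vector encoding an optimal potential. As in the previous proof, setting the derivative of the outer objective with respect to $\fvec$ to zero forces the adversary's optimizer to equal the data distribution, i.e.\ $\qvec^* = \dvec$. The inner value function $\qvec \mapsto \min_y (\Lbf \qvec)_y$ is concave but no longer differentiable at $\dvec$ when $\Ycal^\diamond$ is not a singleton: its superdifferential at $\dvec$ is the convex hull $\mathrm{conv}\{\Lbf_{(y,:)}^\intercal : y \in \Ycal^\diamond\}$. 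Applying the first-order optimality condition for the concave maximization over $\qvec \in \Delta$ at $\qvec = \dvec$ then yields coefficients $\alpha_y \ge 0$ with $\sum_{y \in \Ycal^\diamond} \alpha_y = 1$ such that
\begin{align}
\Bigl(\fvec^* + \textstyle\sum_{y \in \Ycal^\diamond} \alpha_y \Lbf_{(y,:)}^\intercal\Bigr)^\intercal (\uvec - \dvec) \le 0 \quad \forall \uvec \in \Delta.
\end{align}
Repeating the uniform-vector argument from Theorem \ref{thm:singlemin} (which used $\dvec \in \RR_{++}^k$), this inequality forces $\fvec^* + \sum_{y \in \Ycal^\diamond} \alpha_y \Lbf_{(y,:)}^\intercal$ to be a constant vector. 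Hence every $f^* \in \Fcal^*$ admits a representation $f^*(\xvec,y) = c - \sum_{y' \in \Ycal^\diamond} \alpha_{y'} \text{loss}(y',y)$ for some convex combination $\alpha$ over $\Ycal^\diamond$.

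For part (a), I would then produce a specific $f^* \in \Fcal^*$ demonstrating Fisher consistency by reversing the construction: fix any $y^\diamond \in \Ycal^\diamond$, set $\fvec^*_y = -\text{loss}(y^\diamond, y)$, and verify directly that it is optimal. The verification reduces to checking that $\qvec^* = \dvec$ and the corresponding optimal $\pvec^*$ realize the saddle value, which follows because under this choice the inner objective $\fvec^{*\intercal}\qvec + \min_y (\Lbf \qvec)_y - \dvec^\intercal \fvec^*$ is concave in $\qvec$ with $\dvec$ a stationary point (by $y^\diamond \in \Ycal^\diamond$). Once optimality is in hand, the natural requirement $\text{loss}(y^\diamond,y^\diamond) < \text{loss}(y^\diamond,y')$ for $y' \neq y^\diamond$ gives $\argmax_y f^*(\xvec,y) = \{y^\diamond\} \subseteq \Ycal^\diamond$, and varying $y^\diamond$ across $\Ycal^\diamond$ shows every element is recoverable.

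For part (b), I would use the general representation derived above: for an arbitrary $f^* \in \Fcal^*$ with associated weights $\{\alpha_{y'}\}_{y' \in \Ycal^\diamond}$,
\begin{align}
\argmax_y f^*(\xvec,y) \;=\; \argmin_y \textstyle\sum_{y' \in \Ycal^\diamond} \alpha_{y'} \text{loss}(y',y).
\end{align}
The hypothesis of part (b) is exactly the statement that this argmin lies in $\Ycal^\diamond$ for every admissible $\alpha$, which immediately yields the required inclusion for every $f^* \in \Fcal^*$.

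The main obstacle I anticipate is the rigorous handling of the non-smoothness at $\qvec = \dvec$: turning the adversary's first-order condition into a clean statement about a convex combination of rows of $\Lbf$ requires the right use of the concave subdifferential on a simplex and care that the strict-positivity of $\dvec$ can be assumed (or that the conclusion extends by a limiting argument if $\dvec$ has zero entries). A secondary subtlety is the sufficiency direction in part (a), namely checking that the candidate $f^*$ built from a single $y^\diamond$ truly attains the minimum in $\Fcal^*$, rather than merely satisfying the necessary optimality condition; this reduces to a direct saddle-value calculation but needs to be stated carefully.
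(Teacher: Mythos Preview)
Your proposal is correct and follows essentially the same route as the paper: derive $\qvec^* = \dvec$ from the zero-subgradient condition in $\fvec$, replace the gradient in Eq.~\eqref{eq:1st_optimality} by a subgradient drawn from $\mathrm{conv}\{\Lbf_{(y,:)}^\intercal : y \in \Ycal^\diamond\}$, use the uniform-vector argument (under $\dvec \in \RR_{++}^k$) to obtain the representation $\fvec^* = c\one - \sum_{y \in \Ycal^\diamond} \alpha_y \Lbf_{(y,:)}^\intercal$, and then read off (a) by specializing $\alpha$ to a vertex and (b) by invoking the hypothesis. The only mild difference is that the paper asserts this representation as an ``if and only if'' characterization of $\Fcal^*_\xvec$ and uses that directly for (a), whereas you propose to verify optimality of the vertex candidate by a separate saddle-value check; your caution about the sufficiency direction and about zero entries of $\dvec$ is well placed, since the paper does not spell these points out.
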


\begin{proof}
	Let $\pvec$, $\qvec$, and $\dvec$ have the same meaning as in the proof of Theorem \ref{thm:singlemin}. 
	Let $\Ycal^\diamond \triangleq \argmin_{y} (\Lbf \dvec)_{y}$
	which is not necessarily a singleton.
	%contain all of the solution, i.e., $\Ycal^\diamond = \{ y^\diamond \mid (\Lbf \dvec)_{y^\diamond} = \min_{y'} (\Lbf \dvec)_{y'} \}$. 
	The analysis in the proof of Theorem \ref{thm:singlemin} carries over to this case, except for Eq. \eqref{eq:1st_optimality}. Denote $h(\qvec) \triangleq \qvec^\intercal \fvec^* + \min_{y} (\Lbf \qvec)_{y}$. The subdifferential of $-h(\qvec)$ evaluated at $\qvec = \dvec$ is the set:
	\begin{align}
	  \label{eq:subdiff}
	  \partial (-h)(\dvec) = \{-\fvec^* - \vvec \mid \vvec \in \text{\bf conv}\{{\Lbf_{(y^\diamond,:)}}^\intercal \mid y^\diamond \in \Ycal^\diamond \} \},  
	\end{align}
	where $\text{\bf conv}$ denotes the convex hull.
	By extending the first order optimality condition to the subgradient case, this means that there is a subgradient $\gvec \in \partial (-h)(\dvec) $ such that:
	\begin{align}
	\label{eq:1st_optimality_subgrad}
	  \gvec^\intercal (\uvec - \dvec) \ge 0 \quad \forall \uvec \in \Delta.
	\end{align}
	
	Similar to the singleton $\Ycal^\diamond$ case, this inequality can hold for some $\dvec \in \Delta \cap \RR_{++}^{k}$ 
	only if 
	$\gvec$ is a uniform vector.
	Based on Eq. \eqref{eq:subdiff}, $-\gvec - \fvec^*$ can be written as a convex combination of $\{{\Lbf_{(y^\diamond,:)}}^\intercal \mid y^\diamond \in \Ycal^\diamond \}$, 
	and the ``if and only if'' relationship in the above derivation leads to a full characterization of the optimal potential function set $\Fcal^*_\xvec$ for a given $\xvec$ (c.f. Eq. \eqref{eq:def_consistency}):
	\begin{align}
	\label{eq:f_star_multi}
	 \Fcal^*_\xvec = \cbr{\fvec^* = c \one - \sum_{y \in \Ycal^\diamond} \alpha_{y} {\Lbf_{(y,:)}}^\intercal \ \middle\vert \ 
	 \alpha_{(\cdot)} \ge 0, \ \sum_{y \in \Ycal^\diamond} \alpha_{y} = 1,\ c \in \RR}.
	\end{align}
	This means that multiple solutions of $\fvec^*$ are possible. 
	%Let us denote the set of containing all solutions as $\Fcal^*$.
	For each element $y^\diamond$ in $\Ycal^\diamond$, we can recover a $f^*_{y^\diamond}$ in which the $\argmax_y f^*_{y^\diamond}(\xvec,y)$ contains a singleton element $y^\diamond$ by using Eq. \eqref{eq:f_star_multi} with  $\alpha_{y^\diamond} = 1$ and $\alpha_{y\in \{\Ycal^\diamond\setminus y^\diamond\}} = 0$. 
	This is implied by our loss assumption that $\text{loss}(y, y) < \text{loss}(y, y')$ for all $y' \neq y$, and hence $\argmax_y f^*_{y^\diamond}(\xvec,y) = \argmin_y \Lbf_{(y^\diamond,y)}$. 
	So (a) is proved.
	
	We next prove (b).
	If we assume $\argmin_{y'} \sum_{y \in \Ycal^\diamond} \alpha_{y} \text{loss}(y, y') \subseteq \Ycal^\diamond$ for all $\alpha_{(\cdot)} \ge 0$ and $\sum_{y \in \Ycal^\diamond} \alpha_{y} = 1$, 
	then it follows trivially that $\argmax_y f^*(\xvec,y) \subseteq \Ycal^\diamond$ for all $f^* \in \Fcal^*_\xvec$. 
	%since for any loss function that satisfy the assumption, $\argmin_y \sum_{y' \in \Ycal^\diamond} a_{y'} {\Lbf_{(y',y)}}  \subseteq \Ycal^\diamond$ for all  $\alpha_{(\cdot)} \ge 0$, $\sum_{y' \in \Ycal^\diamond} \alpha_{y'} = 1$. 
\end{proof}

Note that in theorem above, when the Bayes optimal label is not unique, for general loss metric (point (a)), not all $f^* \in \Fcal^*$ satisfy the requirement of $\argmax_y f^*(\xvec,y) \subseteq \Ycal^\diamond$. However, all elements in $\Ycal^\diamond$ can be recovered by some $f^* \in \Fcal^*$. This is expected  in the case of non-unique optimal labels, since the prediction rule under the potential based prediction, $\argmax_y f(\xvec,y)$  is not well defined in the case where $f(\xvec,y)$ contains ties.

\subsubsection{Consistency for Prediction Based on the Predictor Player's Probability}

For a prediction task where the set of options a predictor can choose is different from the set of ground truth labels (e.g., the classification task with abstention task in Section \ref{sec:pred_abstain}),  the analysis in the previous subsection cannot be applied. In this subsection we will establish consistency properties of the adversarial prediction framework for a general loss matrix where the prediction is based on the predictor player's optimal probability. 

\begin{theorem}
The predictor's optimal probability in the adversarial prediction
framework given the true distribution $P(Y|\xvec)$ and a loss matrix $\Lbf$ reaches the Bayes optimal risk, assuming that $f$ is allowed to be optimized over all measurable function.

\end{theorem}

\begin{proof}
	Since the predictor can choose from $l$ options which could be different than the $k$ number of classes in the ground truth,
	$\dvec$ and $\qvec$ lie in the $k$ dimensional probability simplex $\Delta^k$, while the predictor's probability mass $\pvec$ lies in the $l$ dimensional probability simplex $\Delta^l$.
	Let $\fvec \in \RR^{k}$ the vector encoding of the value of $f$ at all classes.
	The potential function minimizer $f^*$ can now be written as:
	\begin{align}
		\fvec^* &\in \argmin_\fvec \max_{\qvec \in \Delta^k} \min_{\pvec \in \Delta^l} \cbr{\fvec^\intercal \qvec + \pvec^\intercal \Lbf \qvec - \dvec^\intercal \fvec}. \label{eq:proof_consistency_f2}
	\end{align}
	
	As noted in our previous analysis, since $\fvec^*$ is an optimal solution,
	the objective function must have a zero subgradient at $\fvec^*$.
	That means $\zero = \qvec^* - \dvec$,
	where $\qvec^*$ is an optimal solution in Eq. \eqref{eq:proof_consistency_f2} under $\fvec^*$. 
	
	Here we use the probabilistic prediction scheme as mentioned in Eq. \eqref{eq:predict-vector}. The consistency condition in Eq. \eqref{eq:consistency-general} requires  that the loss of this prediction scheme under the optimal potential $\fvec^*$ and the true probability $\dvec$ reaches the Bayes optimal risk, i.e.,
	\begin{align}
    &{\pvec^\diamond}^\intercal \Lbf \dvec = \min_{y} (\Lbf \dvec)_{y},
    \quad \text{where } \quad \pvec^\diamond = \argmin_{\pvec \in \Delta^l}
    \max_{\qvec \in \Delta^k}
    \pvec^\intercal \Lbf \qvec 
    + {\fvec^*}^\intercal \qvec.
    \end{align}
    
    Since the maximization over $\qvec$ in Eq. \eqref{eq:proof_consistency_f2} does not depend on $\dvec^\intercal \fvec$, 
    we know that $\dvec$ is also an optimal solution of $
    \argmax_{\qvec \in \Delta^k}
    \min_{\pvec \in \Delta^l} \pvec^\intercal \Lbf \qvec + {\fvec^*}^\intercal \qvec
    $.
    Then, based on the minimax duality theorem \citep{von1945theory}, we know that:
    \begin{align}
    {\pvec^\diamond}^\intercal \Lbf \dvec
    + {\fvec^*}^\intercal \dvec
    = 
    \min_{\pvec \in \Delta^l}
    \pvec^\intercal \Lbf \dvec
    + {\fvec^*}^\intercal \dvec.
    \end{align}
    This implies that:
    %\begin{align}
${\pvec^\diamond}^\intercal \Lbf \dvec
    = 
    \min_{\pvec \in \Delta^l}
    \pvec^\intercal \Lbf \dvec = \min_{y} (\Lbf \dvec)_{y},
$
%\end{align}
    which concludes our proof.
\end{proof}

\subsection{Generalization Guarantee}

We now analyze the generalization guarantee of the adversarial prediction framework. Let us write the constraint set of the adversarial prediction formulation in Eq. \eqref{eq:def} as the set $\Xi$,
\begin{align}
    \Xi \triangleq \left\{ \check{P}(\check{Y}|\Xbf) \mid \mathbb{E}_{{\bf X} \sim 
    \tilde{P};
    	\check{Y}|{\bf X}\sim \check{P}}[\phi({\bf X},\check{Y})]
    = \mathbb{E}_{{\bf X},{Y} \sim \tilde{P}}\left[\phi({\bf X},{Y}) \right] \right\}.
\end{align}
Since the predictor player in our formulation robustly minimizes the loss metric against an adversary player, the adversarial loss upper-bounds the generalization loss, so long as the evaluation distribution is similar to training data properties, as described in Theorem \ref{thm:generalization}.

\begin{theorem}
\label{thm:generalization}
Let $\Phat^*(\Yhat|\Xbf)$ and $\check{P}^*(\check{Y}|\Xbf)$ be the predictor player's solution and adversarial player's solutions of the adversarial prediction formulation (Eq. \eqref{eq:def}) respectively.
Given the underlying true distribution $P(Y|\Xbf)$ resides in the uncertainty set $\Xi$, the generalization loss is upper-bounded by the adversarial loss:
\begin{align}
    \mathbb{E}_{{\bf X} \sim \tilde{P}; \hat{Y}|{\bf X}\sim\hat{P}^*;
    	Y|{\bf X}\sim P} \left[\text{loss}(\hat{Y}, Y) \right] \leq
    \mathbb{E}_{{\bf X} \sim \tilde{P}; \hat{Y}|{\bf X}\sim\hat{P}^*;
    	\check{Y}|{\bf X}\sim \check{P}^*} \left[\text{loss}(\hat{Y}, \check{Y}) \right].
\end{align}
\end{theorem}

\begin{proof}
By definition, the solution of the adversarial conditional label distribution, $\check{P}^*(\check{Y}|\Xbf)$, is a Nash equilibrium and it provides
the worst possible loss for the estimator of all conditional
label distributions from set $\Xi$. So long as the underlying true label
distribution used for evaluation, $P(Y|\Xbf)$, is similar to training data properties (i.e, a member of $\Xi$), 
then
the generalization loss resulted from evaluation distribution  $P(Y|\Xbf)$ must be 
no worse than $\check{P}^*(\check{Y}|\Xbf)$. Otherwise,  $P(Y|\Xbf)$ is  a
better choice from $\Xi$ than $\check{P}^*(\check{Y}|\Xbf)$ for maximizing the predictor’s loss, a contradiction.
\end{proof}
\section{Optimization}

The goal of a learning algorithm in the adversarial prediction framework is to obtain the optimal Lagrange dual variable $\theta$ that enforces the adversary's probability distribution to reside within the moment matching constraints in Eq \eqref{eq:def}. In the risk minimization perspective (Eq. \eqref{eq:dual}), it is equivalent to finding the parameter $\theta$ that minimizes the adversarial surrogate loss (AL) in Eq. \eqref{eq:al-hat-check}. To find the optimal $\theta$, we employ (sub)-gradient methods to optimize our convex objective.

\subsection{Subgradient-Based Convex Optimization}

The risk minimization perspective of adversarial prediction framework  (Eq. \eqref{eq:dual}) can be written as:
\begin{align}
\min_{\theta} & \; \mathbb{E}_{{\bf X},{Y}\sim \tilde{P}} \; \left[ AL({\bf X}, Y, \theta) \right]\\
\text{where:  } & AL({\bf x}, y, \theta) = \max_{\qvec \in \Delta}
    \min_{\pvec \in \Delta}
    \pvec^\intercal \Lbf \qvec 
    + \theta^\intercal \left[ \textstyle\sum_{j} q_{j} \phi(\xvec, j)
    - \phi(\xvec, y) \right].
\end{align}
The subdifferential of the expected adversarial loss in the objective above is equal to the expected subdifferential of the loss for each sample \cite[Corollary 23.8,][]{Rockafellar70}:
\begin{align}
\partial_{\theta} \; \mathbb{E}_{{\bf X},{Y}\sim \tilde{P}} \; \left[ AL({\bf X}, Y, \theta) \right] 
     = \mathbb{E}_{{\bf X},{Y}\sim \tilde{P}} \; \left[ \partial_{\theta} \; AL({\bf X}, Y, \theta) \right].
\end{align}
Theorem \ref{thm:subgradient-lp} describes the subgradient of the adversarial surrogate loss with respect to $\theta$.

\begin{theorem}
\label{thm:subgradient-lp}
Given $\theta$, suppose the set of optimal $\qvec$ for the maximin inside the AL is $Q^*$:
\[
Q^*  = \argmax_{\qvec \in \Delta}
    \min_{\pvec \in \Delta}
    \cbr{\pvec^\intercal \Lbf \qvec 
    + \theta^\intercal \left[ \textstyle\sum_{j} q_{j} \phi(\xvec, j)
    - \phi(\xvec, y) \right]}.
\]
Then the subdifferential of the adversarial loss AL($\xvec, y, \theta$) with respect to the parameter $\theta$ can be fully characterized by
\[
\partial_{\theta} \; AL({\bf x}, y, \theta) =
 \text{\bf conv} \cbr{\textstyle\sum_{j} q^*_{j} \phi(\xvec, j)
    - \phi(\xvec, y) \ \middle\vert \  \qvec \in Q^*}.
\]
\end{theorem}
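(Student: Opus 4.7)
My plan is to recognize this theorem as a direct application of Danskin's theorem (the subdifferential version of the envelope theorem) to the maximin representation of the AL function. The key observation is that although AL is written as a nested maximin, the dependence on $\theta$ is extremely simple because $\theta$ enters only in the outer layer.

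First I would rewrite $AL(\xvec, y, \theta) = \max_{\qvec \in \Delta} g(\qvec, \theta)$ where
\[
g(\qvec, \theta) \;=\; \min_{\pvec \in \Delta} \pvec^\intercal \Lbf \qvec \;+\; \theta^\intercal \Bigl[\sum_{j} q_j \phi(\xvec, j) - \phi(\xvec, y)\Bigr],
\]
using the fact that $\pvec$ does not interact with $\theta$, so the inner minimum over $\pvec$ can be pushed past the $\theta$-term. For fixed $\qvec$, $g(\qvec, \cdot)$ is then \emph{affine} in $\theta$, with constant gradient
\[
\nabla_\theta g(\qvec, \theta) \;=\; \sum_{j} q_j \phi(\xvec, j) - \phi(\xvec, y).
\]
In particular $g(\qvec, \cdot)$ is convex and everywhere differentiable in $\theta$, and jointly continuous in $(\qvec, \theta)$.

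Next I would verify the hypotheses needed to invoke Danskin's theorem: the index set $\Delta$ is a compact (nonempty) subset of $\mathbb{R}^k$; $g$ is continuous on $\Delta \times \mathbb{R}^m$; $g(\qvec, \cdot)$ is convex (indeed affine) in $\theta$ for each $\qvec$; and the optimal set $Q^*$ at the current $\theta$ is nonempty (since $g(\cdot, \theta)$ is concave and upper semicontinuous on the compact simplex, so the max is attained). Under these conditions, Danskin's theorem (in the convex-function form, see e.g.\ Bertsekas, \emph{Nonlinear Programming}, Prop.~B.25, or Corollary~4.5.3 of Hiriart-Urruty and Lemar\'echal) gives
\[
\partial_\theta AL(\xvec, y, \theta) \;=\; \mathrm{\bf conv}\bigl\{\nabla_\theta g(\qvec^*, \theta) \,\big|\, \qvec^* \in Q^*\bigr\}.
\]
Substituting the explicit gradient computed above yields exactly the claimed characterization.

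The main thing to be careful about is ensuring the correct form of Danskin's theorem is invoked, since several variants exist with slightly different assumptions. The version for parametric maxima of convex functions over a compact set applies cleanly here because the affine (hence convex and smooth) dependence on $\theta$ avoids the usual subtleties about subdifferential selections at each optimal $\qvec^*$: every $\qvec^* \in Q^*$ contributes a single gradient vector rather than a set, so the overall subdifferential is just the convex hull of these vectors. One small subtlety worth noting is that $Q^*$ itself is automatically a convex set (as an argmax of a concave function over a convex set), so the convex hull in the statement is only needed to take the image under the affine map $\qvec \mapsto \sum_j q_j \phi(\xvec,j) - \phi(\xvec,y)$ of the set $Q^*$; equivalently, since the map is affine, $\{\sum_j q^*_j \phi(\xvec,j) - \phi(\xvec,y) : \qvec^* \in Q^*\}$ is already convex, and the $\mathrm{\bf conv}$ in the statement is redundant but harmless.
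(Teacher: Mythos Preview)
Your proposal is correct and follows essentially the same approach as the paper: both rewrite $AL$ as a pointwise maximum over $\qvec \in \Delta$ of a function that is affine in $\theta$, verify the continuity and compactness hypotheses, and then invoke Danskin's theorem (the paper cites it as Proposition~A.22 of Bertsekas' 1971 thesis, while you cite later textbook versions). Your additional remark that $Q^*$ is convex and the $\mathrm{\bf conv}$ is therefore redundant is a correct bonus observation not present in the paper's proof.
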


\begin{proof}
Denote $\varphi(\theta, \qvec) \triangleq \min_{\pvec \in \Delta} \cbr{\pvec^\intercal \Lbf \qvec 
    + \theta^\intercal \left[ \textstyle\sum_{j} q_{j} \phi(\xvec, j)
    - \phi(\xvec, y)\right]} $.
Then for any fixed $\qvec$,
$\varphi(\theta,\qvec)$
is a closed proper convex function in $\theta$.
Denote $g(\theta) \triangleq \max_{\qvec \in \Delta} \varphi(\theta, \qvec)$.
Then the interior of its domain $\text{int}(\dom g)$ is the entire Euclidean space of $\theta$,
and $\varphi$ is continuous on $\text{int}(\dom g) \times \Delta$.
Using the obvious fact that $\partial_\theta \varphi(\theta,\qvec) = \cbr{\sum_{j} q_{j} \phi(\xvec, j) - \phi(\xvec, y)}$,
the desired conclusion follows directly from Proposition A.22 of \cite{Bertsekas71}.
\end{proof}

The runtime complexity to calculate the subgradient of AL for one example above is $\Ocal(k^{3.5})$ due to the need to solve the inner minimiax using linear program (Karmarkar's algorithm). 
For the loss metrics that we have studied in Section 3 we construct faster ways to compute the subgradient as follows.

\begin{corollary}
The subdifferential of AL$^\text{0-1}$($\xvec, y, \theta$) with respect to $\theta$ includes:
\[
 \partial_{\theta} \; AL^\text{0-1}({\bf x}, y, \theta) \ni \tfrac{1}{|S^*|} \textstyle\sum_{j \in S^*} \phi(\xvec, j)
    - \phi(\xvec, y),
\]
where $S^*$ is an optimal solution set of the maximization inside the AL$^\text{0-1}$, i.e.:
\[
S^* \in 
\argmax_{
    {S \subseteq
			[k], %\\ 
            \; S \neq \emptyset } }
	\frac{\sum_{j \in S}  \theta^\intercal \phi(\xvec, j)
		+|S|-1}{|S|}.
\]
\end{corollary}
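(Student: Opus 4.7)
The plan is to instantiate Theorem \ref{thm:subgradient-lp} with the specific structure of the zero-one loss, for which Theorem \ref{thm:ermloss-zo} already gives us a clean description of the optimal adversarial distributions. By that theorem, the inner maximin for the zero-one loss achieves its value at distributions $\qvec$ of the form $\qvec = \frac{1}{|S|}\sum_{j \in S}\evec_j$ for some nonempty $S \subseteq [k]$, with objective value $\frac{1}{|S|}\bigl(\sum_{j \in S} f_j + |S|-1\bigr) - f_y$, where $f_i = \theta^\intercal \phi(\xvec,i)$. Hence if $S^*$ is any maximizer of
\[
\max_{\emptyset \neq S \subseteq [k]} \frac{\sum_{j \in S} \theta^\intercal \phi(\xvec,j) + |S| - 1}{|S|},
\]
then the corresponding uniform $\qvec^* = \frac{1}{|S^*|}\sum_{j \in S^*} \evec_j$ lies in the optimal set $Q^*$ from Theorem \ref{thm:subgradient-lp}.

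Next, I would plug this particular $\qvec^* \in Q^*$ into the subdifferential formula
\[
\partial_\theta AL(\xvec, y, \theta) = \text{\bf conv}\cbr{\textstyle\sum_j q^*_j \phi(\xvec, j) - \phi(\xvec, y) \ \middle\vert \ \qvec^* \in Q^*}.
\]
The substitution $q^*_j = \frac{1}{|S^*|}\mathbf{1}[j \in S^*]$ immediately yields the vector $\frac{1}{|S^*|}\sum_{j \in S^*}\phi(\xvec, j) - \phi(\xvec, y)$ as a specific element of the convex hull, which is all that the $\ni$ in the corollary claims. Since any element of the generating set of the convex hull is itself in the hull, this step is essentially a one-line verification once the right $\qvec^*$ is identified.

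The only subtle point worth double-checking is that the maximizer $\qvec^*$ of the LP form (Eq. \eqref{eq:al-lp}) agrees with the $\qvec$ achieving $\text{AL}^{\text{0-1}}$ in Theorem \ref{thm:ermloss-zo}, i.e., that the extreme point $\frac{1}{|S^*|}\sum_{j \in S^*}\evec_j$ paired with $v^* = \frac{|S^*|-1}{|S^*|}$ is indeed an optimal $\qvec$ (not merely a candidate extreme point). This was already established in the proof of Theorem \ref{thm:ermloss-zo} via the extreme-point argument on the polytope $\Cbb$, so no new work is required. I do not anticipate any real obstacles here: the corollary is a direct specialization of Theorem \ref{thm:subgradient-lp} using the explicit extreme-point characterization of $Q^*$ already proved for the zero-one loss, and the $\ni$ (as opposed to $=$) keeps us from having to enumerate the full convex hull of optimal $\qvec$'s (which would require handling ties between multiple maximizing subsets $S^*$).
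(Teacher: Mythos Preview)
Your proposal is correct and matches the paper's approach exactly: the corollary is stated without proof in the paper precisely because it is an immediate specialization of Theorem \ref{thm:subgradient-lp} using the extreme-point characterization from Theorem \ref{thm:ermloss-zo}, which is exactly what you spell out. Your attention to the $\ni$ versus $=$ distinction and to why the uniform $\qvec^*$ on $S^*$ genuinely lies in $Q^*$ is appropriate and slightly more explicit than the paper itself.
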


\begin{corollary}
The subdifferential of AL$^\text{ord}$($\xvec, y, \theta$) with respect to $\theta$ includes:
\[
 \partial_{\theta} \; AL^\text{ord}({\bf x}, y, \theta) \ni \tfrac12 \left( \phi(\xvec, i^*) + \phi(\xvec, j^*) \right)
    - \phi(\xvec, y),
\]
where $i^*, j^*$ is the solution of:
\[
(i^*, j^*) \in \argmax_{i,j \in [k] }  \frac{\theta^\intercal \phi(\xvec, i) + \theta^\intercal \phi(\xvec, j) + j - i}{2} .
\]
\end{corollary}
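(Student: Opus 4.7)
The plan is to instantiate Theorem~\ref{thm:subgradient-lp} using the explicit optimal adversarial distribution identified in the proof of Theorem~\ref{thm:ermloss-abs}. Since the corollary only asserts a membership ($\ni$) rather than giving the full subdifferential, it suffices to exhibit a single $\qvec^* \in Q^*$ and read off the corresponding subgradient.

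First, I would recall the extreme-point structure from Theorem~\ref{thm:ermloss-abs}: the maximin defining $AL^\text{ord}$ attains its optimum at extreme points of the polytope $\Cbb$ of the form $\qvec = \tfrac12(\evec_i + \evec_j)$ (paired with slack $v = (j - i)/2$), for $i,j \in [k]$, possibly with $i = j$. Writing $f_l = \theta^\intercal \phi(\xvec, l)$, the value of the inner maximin at such an extreme point is $\tfrac{f_i + f_j + j - i}{2} - f_y$, matching the closed form of $AL^\text{ord}$. Therefore, if $(i^*, j^*)$ is any maximizer of $\tfrac{f_i + f_j + j - i}{2}$ over $i, j \in [k]$, then $\qvec^* := \tfrac12(\evec_{i^*} + \evec_{j^*})$ lies in the optimal set $Q^*$ from Theorem~\ref{thm:subgradient-lp}.

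Next, I would substitute this $\qvec^*$ into the subgradient formula $\sum_j q^*_j \, \phi(\xvec, j) - \phi(\xvec, y)$ provided by Theorem~\ref{thm:subgradient-lp}. The only nonzero entries of $\qvec^*$ are $q^*_{i^*} = q^*_{j^*} = \tfrac12$ (or $q^*_{i^*} = 1$ in the degenerate case $i^* = j^*$), yielding exactly $\tfrac12\bigl(\phi(\xvec, i^*) + \phi(\xvec, j^*)\bigr) - \phi(\xvec, y)$ as an element of $\partial_{\theta} AL^\text{ord}(\xvec, y, \theta)$.

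I do not anticipate a real obstacle: once Theorems~\ref{thm:subgradient-lp} and~\ref{thm:ermloss-abs} are in hand, the argument is pure bookkeeping. The only mild subtlety is the collapsed case $i^* = j^*$, where the pair degenerates to a single Dirac mass, but the formula for the subgradient remains well-defined and the derivation goes through unchanged.
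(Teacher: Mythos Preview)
Your proposal is correct and matches the paper's intended approach: the paper states this result as an immediate corollary of Theorem~\ref{thm:subgradient-lp} (without a separate proof), and your instantiation via the extreme-point form $\qvec^* = \tfrac12(\evec_{i^*} + \evec_{j^*})$ from Theorem~\ref{thm:ermloss-abs} is exactly the bookkeeping needed to make that inference explicit.
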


\begin{corollary}
The subdifferential of AL$^\text{abstain}$($\xvec, y, \theta, \alpha$) where $0 \le \alpha \le \frac12$ with respect to $\theta$ includes:
\[
 \partial_{\theta} \; AL^\text{abstain}({\bf x}, y, \theta, \alpha) \ni 
 \begin{cases} 
      (1-\alpha) \phi(\xvec, i^*) + \alpha \phi(\xvec, j^*) - \phi(\xvec, y) & 
      g({\bf x}, y, \theta, \alpha) > h({\bf x}, y, \theta, \alpha) \\
      \phi(\xvec, l^*) - \phi(\xvec, y) & \text{otherwise},
   \end{cases}
\]
where:
\begin{align}
g({\bf x}, y, \theta, \alpha) &= \max_{i,j \in [k], i \neq j }  \left(1 - \alpha \right) f_i + \alpha f_j + \alpha, \quad
h({\bf x}, y, \theta, \alpha) = \max_l f_l,
\\
(i^*, j^*) &\in \argmax_{i,j \in [k], i \neq j }  \left(1 - \alpha \right) f_i + \alpha f_j + \alpha, \quad
l^* = \argmax_l f_l,
\end{align}
and the potential $f_i$ is defined as $f_i = \theta^\intercal \phi({\xvec, i})$.
\end{corollary}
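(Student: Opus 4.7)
The plan is to derive this as an immediate consequence of Theorem \ref{thm:subgradient-lp} combined with the extreme-point characterization already obtained in the proof of Theorem \ref{thm:ermloss-abstain}. By Theorem \ref{thm:subgradient-lp}, any vector of the form $\sum_j q^*_j \phi(\xvec,j) - \phi(\xvec,y)$ with $\qvec^* \in Q^*$ lies in $\partial_\theta AL^\text{abstain}(\xvec,y,\theta,\alpha)$, so it suffices to exhibit a single $\qvec^* \in Q^*$ in each of the two cases and compute the corresponding vector.

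First I would recall from the proof of Theorem \ref{thm:ermloss-abstain} that, for $0 \le \alpha \le \tfrac12$, every extreme point of the constraint polytope $\Cbb$ (and hence every optimizer of the inner maximin) lies in the set
\begin{align}
D = \left\{(1-\alpha)\begin{bmatrix}\evec_i\\0\end{bmatrix} + \alpha \begin{bmatrix}\evec_j\\1\end{bmatrix} \;\middle\vert\; i \neq j \right\} \cup \left\{\begin{bmatrix}\evec_i\\0\end{bmatrix} \;\middle\vert\; i \in [k]\right\}.
\end{align}
Substituting the $\qvec$-parts of $D$ into the objective $\fvec^\intercal \qvec + v$ of the LP \eqref{eq:al-lp} (before subtracting $f_y$) gives exactly two candidate values: $(1-\alpha)f_i + \alpha f_j + \alpha$ for some pair $i \neq j$, and $f_l$ for some $l$. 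Maximizing each over its indices yields $g(\xvec,y,\theta,\alpha)$ and $h(\xvec,y,\theta,\alpha)$, so $AL^\text{abstain} + f_y = \max\{g,h\}$, consistent with Eq.~\eqref{eq:al-abstain}.

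Next I would split into the two cases of the corollary. When $g > h$, the optimal extreme point must be of the two-element type, and any pair $(i^*,j^*) \in \argmax_{i\neq j}(1-\alpha)f_i + \alpha f_j + \alpha$ yields $\qvec^* = (1-\alpha)\evec_{i^*} + \alpha\evec_{j^*} \in Q^*$. Applying Theorem \ref{thm:subgradient-lp} gives
\begin{align}
\sum_j q^*_j \phi(\xvec,j) - \phi(\xvec,y) = (1-\alpha)\phi(\xvec,i^*) + \alpha\phi(\xvec,j^*) - \phi(\xvec,y) \in \partial_\theta AL^\text{abstain}.
\end{align}
When $g \le h$, the single-element extreme point $\qvec^* = \evec_{l^*}$ with $l^* \in \argmax_l f_l$ attains the maximum (since it achieves $h$, and $h \ge g$), so $\qvec^* \in Q^*$ and plugging in yields the second branch $\phi(\xvec,l^*) - \phi(\xvec,y)$.

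The only mildly delicate point, and the place where I would be careful in writing it up, is the tie case $g = h$. Here both extreme points belong to $Q^*$, so both candidate subgradients lie in $\partial_\theta AL^\text{abstain}$; the corollary only claims ``$\ni$'' rather than equality, and by the convex hull statement in Theorem \ref{thm:subgradient-lp} any convex combination is also a valid subgradient. Because the corollary uses a single-valued branch with an ``otherwise'' clause, choosing the second branch at equality is legitimate, and no further case analysis is needed. The argument is otherwise purely a substitution, so I do not anticipate any serious technical obstacle beyond re-using the polytope analysis from Theorem \ref{thm:ermloss-abstain}.
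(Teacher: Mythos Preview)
Your proposal is correct and matches the paper's approach: the paper states this corollary without proof, as an immediate consequence of Theorem~\ref{thm:subgradient-lp} applied with the optimal $\qvec^*$ identified in Theorem~\ref{thm:ermloss-abstain}. Your write-up, including the handling of the tie case $g=h$, is exactly the intended derivation.
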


The runtime of the subgradient computation algorithms above are the same as the runtime of computing the adversarial surrogate losses, i.e., $\Ocal(k \log k)$ for AL$^\text{0-1}$, $\Ocal(k)$ for AL$^\text{ord}$, and $\Ocal(k)$ for AL$^\text{abstain}$. This is a significant speed-up compared to the technique that uses a linear program solver.

Since we already have algorithms for computing the subgradient of AL, any subgradient based optimization techniques can be used to optimize $\theta$ including some stochastic (sub)-gradient techniques like SGD, AdaGrad, and ADAM or batch (sub)-gradient techniques like L-BFGS. Some regularization techniques such as L1 and L2 regularizations, can also be added to the objective function.
The optimization is guaranteed to converge to the global optimum as the objective is convex.

\subsection{Incorporating Rich Feature Spaces via the Kernel Trick}

Considering large feature spaces is important for developing an expressive
classifier that can learn from large amounts of training data.
Indeed, Fisher consistency requires such feature spaces for its guarantees to
be meaningful.
However, na\"ively projecting from the original feature space, $\phi(\xvec, y)$,
to a richer (or possibly infinite) feature space $\omega(\phi(\xvec, y))$,
can be computationally burdensome.
Kernel 
methods enable this feature expansion by allowing the dot products of certain feature functions to be computed implicitly, i.e., $K(\phi(\xvec_i, y_i) , \phi(\xvec_j, y_j)) = \omega(\phi(\xvec_i, y_i)) \cdot \omega(\phi(\xvec_j, y_j)) $. 

To formulate a learning algorithm for adversarial surrogate losses that can incorporate richer feature spaces via kernel trick, we apply the PEGASOS algorithm \citep{shalev2011pegasos} to our losses. Instead of optimizing the problem in the dual formulation as in many kernel trick algorithms, PEGASOS allows us to incorporate the kernel trick into its primal stochastic subgradient optimization technique. The algorithm works on L2 penalized risk minimization, 
\begin{align}
\min_{\theta} & \; \mathbb{E}_{{\bf X},{Y}\sim \tilde{P}} \; \frac{\lambda}{2} \|\theta \|^2 + AL({\bf X}, Y, \theta) ,
\end{align}
where $\lambda$ is the regularization penalty parameter.
Since we want to perform stochastic optimization, we replace the objective above with an approximation based on a single training example:
\begin{align}
\frac{\lambda}{2} \|\theta \|^2 + AL({\bf x}_{i_t}, y_{i_t}, \theta) ,
\end{align}
where $i_t$ indicates the index of the example randomly selected at iteration $t$. Therefore at iteration $t$, the subgradient of our objective function with respect to the parameter $\theta$ is:
\begin{align}
\partial_\theta^{(t)} =& \lambda \theta^{(t)} + \textstyle\sum_{j} {q^*_{j}}^{(t)} \phi(\xvec_{i_t}, j)
    - \phi(\xvec_{i_t}, y_{i_t}) , \label{eq:grad-pegasos} \\
    & \text{where: }  {\qvec^*}^{(t)}  = \argmax_{\qvec \in \Delta}
    \min_{\pvec \in \Delta}
    \pvec^\intercal \Lbf \qvec 
    + {\fvec^{(t)}}^\intercal \qvec 
    - f_{y_{i_t}}^{(t)}, \\
    & \qquad \quad f_{j}^{(t)} = {\theta^{(t)}}^\intercal \phi(\xvec_{i_t},j).
\end{align}

The algorithm starts with zero initialization, i.e., $\theta^{(1)} = {\bf 0}$ and uses a pre-determined learning rate scheme $\eta^{(t)} = \frac{1}{\lambda t}$ to take optimization steps, 
\begin{align}
\theta^{(t+1)} = \theta^{(t)} - \eta^{(t)} \partial_\theta^{(t)} = \theta^{(t)} - \tfrac{1}{\lambda t} \partial_\theta^{(t)}.
\end{align}
Let us denote $\gvec^{(t)} = \sum_{j} {q^*_{j}}^{(t)} \phi(\xvec_{i_t}, j)
    - \phi(\xvec_{i_t}, y_{i_t})$ from Eq. \eqref{eq:grad-pegasos}, then the update steps can be written as:
\begin{align}
\theta^{(t+1)} = (1 - \tfrac{1}{t}) \theta^{(t)} - \tfrac{1}{\lambda t} \gvec^{(t)}.
\end{align}
By accumulating the weighted contribution of $\gvec$ for each step, the value of $\theta$ at iteration $t+1$ is:
\begin{align}
\theta^{(t+1)} = -\frac{1}{\lambda t} \sum_{l=1}^{t} \gvec^{(l)},
\end{align}
which can be expanded to the original formulation of our subgradient:
\begin{align}
\theta^{(t+1)} &= -\frac{1}{\lambda t} \sum_{l=1}^{t} \sum_{j=1}^k {q^*_{j}}^{(l)} \phi(\xvec_{i_l}, j)
    - \phi(\xvec_{i_l}, y_{i_l}) ,  \label{eq:theta-pegasos}  \\
    \where  {\qvec^*}^{(l)} &= \argmax_{\qvec \in \Delta}
    \min_{\pvec \in \Delta}
    \pvec^\intercal \Lbf \qvec 
    + {\fvec^{(l)}}^\intercal \qvec 
    - f^{(l)}_{y_{i_l}}, \\
    f_{j}^{(l)} &= {\theta^{(l)}}^\intercal \phi(\xvec_{i_l},j).
\end{align}

Let $\zvec$ be the one-hot vector representation of the ground truth label $y$ where its elements are $z_y = 1$, and $z_j = 0$ for all $j \neq y$. From the definition of $\gvec^{(t)}$, let us denote $\rvec^{(t)} = {\qvec^*}^{(t)} - \zvec_{i_t}$, then $\gvec^{(t)}$ can be equivalently written as $\gvec^{(t)} = \sum_{j} {r_{j}}^{(t)} \phi(\xvec_{i_t}, j)$. 
We denote $\alphavec_{i}^{(t+1)}$ as a vector that accumulates the value of $\rvec$ for the $i$-th example each time it is selected until iteration $t$. Then, the value of $\theta^{(t+1)}$ in Eq. \eqref{eq:theta-pegasos} can be equivalently written as:
\begin{align}
\theta^{(t+1)} =& -\frac{1}{\lambda t} \sum_{i=1}^{n} \sum_{j=1}^k \alpha_{(i,j)}^{(t+1)} \phi(\xvec_{i}, j),    
\end{align}
where $\alpha_{(i,j)}^{(t+1)}$ indicates the $j$-th element of the vector $\alphavec_{i}^{(t+1)}$. Using this notation, the potentials $\fvec^{(t)}$ used to calculate the adversarial loss can be computed as:
\begin{align}
f_j^{(t)} = {\theta^{(t)}}^\intercal \phi(\xvec_{i_t},j) =& -\frac{1}{\lambda t} \sum_{i'}^n \sum_{j'}^k \alpha_{(i',j')}^{(t)} \; \phi(\xvec_{i'},j') \cdot \phi(\xvec_{i_t},j). 
\end{align}
Note that the computation of the potentials above only depends on the dot product between the feature functions weighted by the $\alphavec$ variables.

Since the algorithm only depends on the dot products, to incorporate a richer feature spaces $\omega(\phi(\xvec, y))$, we can directly apply kernel function in the computation of the potentials, 
\begin{align}
f_j^{(t)} = {\theta^{(t)}}^\intercal \omega(\phi(\xvec_{i_t},j)) =& -\frac{1}{\lambda t} \sum_{i'}^n \sum_{j'}^k \alpha_{(i',j')}^{(t)} \omega(\phi(\xvec_{i'},j')) \cdot \omega(\phi(\xvec_{i_t},j)) \\
=& -\frac{1}{\lambda t} \sum_{i'}^n \sum_{j'}^k \alpha_{(i',j')}^{(t)} K(\phi(\xvec_{i'},j'), \phi(\xvec_{i_t},j)).
\end{align}
The detailed algorithm for our adversarial surrogate loss is described in Algorithm \ref{alg:pegasos-adv}.

\begin{algorithm}[ht]
	\caption{PEGASOS algorithm for adversarial surrogate losses with kernel trick}
	\label{alg:pegasos-adv}
	\begin{algorithmic}[1]
		\STATE {\bfseries Input:} Training data $(\mathbf{x}_1, y_1), \ldots (\mathbf{x}_n, y_n)$, $\Lbf, \lambda, T, k$
		\STATE $\alphavec_i^{(1)} \gets {\bf 0}, \forall i \in \{ 1, \hdots, n \} $ 
		\STATE Let $\zvec_{i}$ be the one-hot encoding of $y_i$ for all $i \in \{ 1, \hdots, n \}$
		\FOR{$t \gets 1,2,\ldots, T$}
		\STATE Choose $i_t \in \{ 1, \hdots, n \} $  uniformly at random
		\STATE Compute $\fvec^{(t)}$, where $f_j^{(t)} \gets -\frac{1}{\lambda t} \sum_{i'}^n \sum_{j'}^k \alpha_{(i',j')}^{(t)} K(\phi(\xvec_{i'},j'), \phi(\xvec_{i_t},j))$ 
		\STATE ${\qvec^*}^{(t)} \gets \argmax_{\qvec \in \Delta}
            \min_{\pvec \in \Delta}
            \pvec^\intercal \Lbf \qvec 
            + {\fvec^{(t)}}^\intercal \qvec 
             - f_{y_{i_t}}^{(t)}$ 
		\STATE $\alphavec_{i_t}^{(t+1)} \gets \alphavec_{i_t}^{(t)} + {\qvec^*}^{(t)} - \zvec_{i_t}$ 
		\ENDFOR		
		\RETURN $\alphavec_i^{(t+1)}, \; \forall i \in \{ 1, \hdots, n \}$
	\end{algorithmic}
\end{algorithm}

\section{Experiments}

We conduct experiments on real data to investigate the empirical performance of the adversarial surrogate losses in several prediction tasks.

\subsection{Experiments for Multiclass Zero-One Loss Metric}

We evaluate the performance of the AL$^{\text{0-1}}$ classifier and 
compare it with the three most popular multiclass SVM formulations: WW \citep{weston1999support}, CS \citep{crammer2002algorithmic}, and LLW \citep{lee2004multicategory}. We use 12 datasets from the UCI machine learning repository \citep{lichman2013UCIML} with various sizes and numbers of classes (details in Table \ref{tbl:zo-dataset}). 
For each dataset, we consider the methods using the original feature space
(linear kernel) and a kernelized feature space using the 
Gaussian radial basis function kernel.
We also aim to highlight the sub-optimal performance of the LLW's formulation for multiclass SVM in the case of datasets with low dimensional features \citep{dogan2016unified}, and how our method performs in these datasets. Therefore, most of the datasets we selected have this property.

\begin{table}[ht]
    \small
	\centering
	\begin{tabular}{@{}l@{}rrrrr@{}}
		\toprule
		\multicolumn{1}{c}{\multirow{2}{*}{Dataset}} & \multicolumn{4}{c}{Properties}                                 
\\ \cmidrule(r){2-5}  
		\multicolumn{1}{c}{}                         & \multicolumn{1}{c}{\#class} & \multicolumn{1}{c}{\#train} & \multicolumn{1}{c}{\# test} & \multicolumn{1}{c}{\#feature
}  
\\ \midrule \midrule
		(1) \, iris                                          & 3                              & 105                             & 45                             & 4                               
		                      \\
		(2) \, glass                                         & 6                              & 149                             & 65                             & 9                               
		             \\
		(3) \, redwine                                       & 10                             & 1119                            & 480                            & 11                             
		                     \\
		(4) \, ecoli                                     & 8                             & 235                            & 101                           & 7                              
		                       \\
		(5) \, vehicle                                       & 4                              & 592                             & 254                            & 18                             
		                    \\
		(6) \, segment                                       & 7                              & 1617                            & 693                            & 19                              
		                    \\
		(7) \, sat                                           & 7                              & 4435                            & 2000                           & 36                              
		                    \\
		(8) \, optdigits                                     & 10                             & 3823                            & 1797                           & 64                              
		                          \\
		(9) \, pageblocks                                    & 5                              & 3831                            & 1642                           & 10                              
		              \\
		(10) libras                                        & 15                             & 252                             & 108                            & 90                              
		                     \\
		(11) vertebral                                     & 3                              & 217                             & 93                             & 6                               
		                              \\
		(12) breasttissue                                  & 6                              & 74                              & 32                             & 9                               
		                                                            \\ \bottomrule
	\end{tabular}
	\caption{Properties of the datasets for the zero-one loss metric experiments.}
	\label{tbl:zo-dataset}
\end{table} 

For our experimental methodology, 
we follow the experiment constructions in \citet{dogan2016unified}.
We first perform two-stages parameter selections using five-fold cross validation on the training set of a random split of the dataset to tune each model's parameter $C$ and the kernel parameter $\gamma$ under the kernelized formulation. 
 In the first stage, the values for $C$ are $2^i, i=\{0,3,6,9,12\}$ and the values for $\gamma$ are $2^i, i=\{-12,-9,-6,-3,0\}$. 
We select final values for $C$ from $2^i C_0, i=\{-2,-1,0,1,2\}$ and values for $\gamma$ from $2^i \gamma_0 , i=\{-2,-1,0,1,2\}$ in the second stage, where $C_0$ and $\gamma_0$ 
are the best parameters obtained in the first stage.
%Let $C_0$ and $\gamma_0$ be the best parameters in the first stage. In the second stage, the values for $C$ are $2^i C_0, i=\{-2,-1,0,1,2\}$ and the values for $\gamma$ are $2^i \gamma_0 , i=\{-2,-1,0,1,2\}$. 
We then create 20 independent random splits of each dataset into training and testing sets.
Using the selected parameters, we train each model on the training sets and evaluate the performance on the corresponding testing set. 
We use the Shark machine learning library \citep{shark08} for the implementation of the three multiclass SVM formulations.

\begin{table}[ht]
        \footnotesize
	\centering

	\begin{tabular}{@{}lrrrrrrrr@{}}
		\toprule 
		\multicolumn{1}{c}{\multirow{2}{*}{D}} & \multicolumn{4}{c}{Linear Kernel}                                                                       & \multicolumn{4}{c}{Gaussian Kernel}                                                                 \\ \cmidrule(r){2-5} \cmidrule(l){6-9}
		\multicolumn{1}{c}{}                         & \multicolumn{1}{c}{AL$^{\text{0-1}}$} & \multicolumn{1}{c}{WW} & \multicolumn{1}{c}{CS} & \multicolumn{1}{c}{LLW} & \multicolumn{1}{c}{AL$^{\text{0-1}}$} & \multicolumn{1}{c}{WW} & \multicolumn{1}{c}{CS} & \multicolumn{1}{c}{LLW} \\ \midrule \midrule
		(1) %iris                                          
		& \textbf{96.3} (3.1)              & \textbf{96.0} (2.6)             & \textbf{96.3} (2.4)             & 79.7 (5.5)               & \textbf{96.7} (2.4)              & \textbf{96.4} (2.4)             & \textbf{96.2} (2.3)             & 95.4 (2.1)               \\
		(2) %glass                                         
		& \textbf{62.5} (6.0)              & \textbf{62.2} (3.6)             & \textbf{62.5} (3.9)             & 52.8 (4.6)
		              & \textbf{69.5} (4.2)              & 66.8 (4.3)             & \textbf{69.4} (4.8)             & \textbf{69.2} (4.4)               \\
		(3) %redwine                                       
		& \textbf{58.8} (2.0)              & \textbf{59.1} (1.9)             & 56.6 (2.0)             & 57.7 (1.7)               & 63.3 (1.8)              & 64.2 (2.0)             & 64.2 (1.9)             &  \textbf{64.7} (2.1)        \\
		(4) %ecoli                                     
		& \textbf{86.2} (2.2)              & 85.7 (2.5)             & \textbf{85.8} (2.3)             & 74.1 (3.3)               & \textbf{86.0} (2.7)              & 84.9 (2.4)             & \textbf{85.6} (2.4)              & \textbf{86.0} (2.5)  \\
		(5) %vehicle                                       
		& \textbf{78.8} (2.2)              & \textbf{78.8} (1.7)             & \textbf{78.4} (2.3)             & 69.8 (3.7)               & \textbf{84.3} (2.5)              & \textbf{84.4} (2.6)             & 83.8 (2.3)             & \textbf{84.4} (2.6)               \\
		(6) %segment                                       
		& 94.9 (0.7)              & 94.9 (0.8)             & \textbf{95.2} (0.8)             & 75.8 (1.5)               & \textbf{96.5} (0.6)              & \textbf{96.6} (0.5)             & 96.3 (0.6)             &  \textbf{96.4} (0.5)               \\
		(7) %sat                                           
		& 84.9 (0.7)              & \textbf{85.4} (0.7)             & 84.7 (0.7)             & 74.9 (0.9)               & \textbf{91.9} (0.5)              & \textbf{92.0} (0.6)             & \textbf{91.9} (0.5)             & \textbf{91.9} (0.4)               \\
		(8) %optdigits                                     
		& \textbf{96.6} (0.6)              & \textbf{96.5} (0.7)             & 96.3 (0.6)             & 76.2 (2.2)               & 98.7 (0.4)              & \textbf{98.8} (0.4)             & \textbf{98.8} (0.3)             &  \textbf{98.9} (0.3)               \\
		(9) %pageblocks                                    
		& 96.0 (0.5)              & 96.1 (0.5)             & \textbf{96.3} (0.5)             & 92.5 (0.8)               & \textbf{96.8} (0.5)              & 96.6 (0.4)             & 96.7 (0.4)             & 96.6 (0.4)               \\
		(10) %libras                                        
		& \textbf{74.1} (3.3)              & 72.0 (3.8)             & 71.3 (4.3)             & 34.0 (6.4)               & 83.6 (3.8)              & 83.8 (3.4)             & \textbf{85.0} (3.9)             & 83.2 (4.2)               \\
		(11) %vertebral                                     
		& \textbf{85.5} (2.9)              & \textbf{85.9} (2.7)             & \textbf{85.4} (3.3)             & 79.8 (5.6)               & \textbf{86.0} (3.1)              & \textbf{85.3} (2.9)             & 85.5 (3.3)             & 84.4 (2.7)               \\
		(12) %breasttissue                                  
		& \textbf{64.4} (7.1)              & 59.7 (7.8)             & \textbf{66.3} (6.9)             & 58.3 (8.1)               & \textbf{68.4} (8.6)              & \textbf{68.1} (6.5)             & \textbf{66.6} (8.9)             & \textbf{68.0} (7.2)               \\ \midrule \midrule
		{\footnotesize avg}
		& {\small 81.59} \hspace{0.2cm}             & {\small 81.02} \hspace{0.2cm}            & {\small 81.25} \hspace{0.2cm}            & {\small 68.80} \hspace{0.2cm}              & {\small 85.14} \hspace{0.2cm}             & {\small 84.82} \hspace{0.2cm}            & {\small 85.00} \hspace{0.2cm}            & {\small 84.93} \hspace{0.2cm}              \\ 
		{\footnotesize \#b} 
		& {\small 9} \hspace{0.5cm}             & {\small 7} \hspace{0.5cm}            & {\small 8} \hspace{0.5cm}            & {\small 0} \hspace{0.5cm}              & {\small 9} \hspace{0.5cm}             & {\small 7} \hspace{0.5cm}            & {\small 7} \hspace{0.5cm}            & {\small 8} \hspace{0.5cm}                \\ \bottomrule
		
	\end{tabular}
    \caption{The mean and (in parentheses) standard deviation of the accuracy for each model with linear kernel and Gaussian kernel feature representations. Bold numbers in each case indicate that the result is the best or not significantly worse than the best (Wilcoxon signed-rank test with $\alpha = 0.05$).}
	\label{tbl:zo-result}
\end{table}

We report the accuracy of each method averaged over the $20$ dataset splits 
for both linear feature representations and Gaussian kernel feature 
representations in Table \ref{tbl:zo-result}.
%, with the standard deviation shown in the parentheses. 
We denote the results that are either the best of all four methods or not 
worse than the best with statistical significance (under the non-parametric Wilcoxon signed-rank test with $\alpha = 0.05$) using bold font. 
We also show the accuracy averaged over all of the datasets for each method 
and the number of dataset for which each method is 
``indistinguishably best'' (bold numbers) in the last row. 
As we can see from the table, the only alternative model that is Fisher consistent---the LLW model---performs poorly on all datasets 
when only linear features are employed. This matches with previous experimental results conducted by \citet{dogan2016unified} and demonstrates a weakness of
using an absolute margin for the loss function (rather than the relative
margins of all other methods). 
The AL$^{\text{0-1}}$ classifier performs competitively with the WW and CS models 
with a 
slight advantages on overall average accuracy and a larger number of 
``indistinguishably best'' performances on datasets---or, equivalently,
fewer statistically significant losses to any other method. 

The kernel trick in the Gaussian kernel case 
provides access to much richer feature spaces, improving the performance of 
all models, and the LLW model especially. 
%The LLW model gains a lot of improvements compared with the result in linear case. 
In general, all models provide competitive results in the Gaussian kernel case.
The AL$^{\text{0-1}}$ classifier maintains a similarly slight advantage 
%from
%the linear feature representation setting 
and only provides performance
that is sub-optimal (with statistical significance) in three of the twelve
datasets versus six of twelve and five of twelve for the other methods.
We conclude that the multiclass adversarial method performs well in both low and high dimensional feature spaces. Recalling the theoretical analysis of the adversarial method, it is a well-motivated (from the adversarial zero-one loss minimization) multiclass classifier that enjoys both strong theoretical properties (Fisher consistency) and empirical performance.

\subsection{Experiments for Multiclass Ordinal Classification}

% \subsubsection{Experiment Setup}

We conduct our ordinal classification experiments on a benchmark dataset for ordinal regression \citep{chu2005gaussian}, evaluate the performance using mean absolute error (MAE), and perform statistical tests on the results of different hinge loss surrogate 
methods. The benchmark contains %12 
datasets taken from the UCI machine learning repository \citep{lichman2013UCIML}, which range from relatively small to relatively large datasets. The characteristic of the datasets, i.e., the number of classes, the training set size, the testing set size, and the number of features is described in Table \ref{table:dataset}. 

\begin{table}[ht]
% \vspace{-2mm
%\vskip 0.15in
\small
\centering
\begin{tabular}{lrrrr}
\toprule
% \abovespace\belowspace
Dataset & \!\!\!\!\!\!\!\!\!\#class & \!\!\!\#train & \!\!\!\#test & \!\!\!\#features \\
\midrule \midrule
% \abovespace
diabetes & 5        & 30       & 13      & 2          \\
pyrimidines & 5        & 51       & 23      & 27         \\
triazines & 5        & 130      & 56      & 60         \\
wisconsin & 5        & 135      & 59      & 32         \\
machinecpu & 10       & 146      & 63      & 6          \\
autompg & 10       & 274      & 118     & 7          \\
boston & 5        & 354      & 152     & 13         \\
stocks & 5        & 665      & 285     & 9          \\
abalone & 10       & 2923     & 1254    & 10         \\
bank & 10       & 5734     & 2458    & 8          \\
% bank2 (ba2) & 10       & 5734     & 2458    & 32         \\
% computer1 (co1) & 10       & 5734     & 2458    & 12         \\
computer  & 10       & 5734     & 2458    & 21         \\
calhousing  & 10       & 14447    & 6193    & 8          \\
% census1 (ce1) & 10       & 15948    & 6836    & 8          \\
% \belowspace
% census2 (ce2) & 10       & 15948    & 6836    & 16        \\
\bottomrule
\end{tabular}
\caption{Properties of the datasets for the ordinal classification experiments.}
\label{table:dataset}
\end{table}

In the experiment, we consider the methods using the original feature space and using a Gaussian radial basis function kernel feature space.
The methods that we compare include two variations of our approach, the threshold based ({$\text{AL}^{\text{ord-th}}$}), and the multiclass-based ({$\text{AL}^{\text{ord-mc}}$}). The baselines we use for the threshold-based models include an SVM-based reduction framework algorithm ({$\text{RED}^{\text{th}}$}) \citep{li2007ordinal}, the \emph{all threshold} method with hinge loss ({AT}) \citep{shashua2003ranking,chu2005new}, and the \emph{immediate threshold} method with hinge loss ({IT}) \citep{shashua2003ranking,chu2005new}. For the multiclass-based models, we compare our method with an SVM-based reduction framework algorithm using multiclass features ({$\text{RED}^{\text{mc}}$})  \citep{li2007ordinal}, cost-sensitive one-sided support vector regression ({CSOSR}) \citep{tu2010one}, cost-sensitive one-versus-one SVM ({CSOVO}) \citep{lin2014reduction}, and cost-sensitive one-versus-all SVM ({CSOVA}) \citep{lin2008ordinal}. For our Gaussian kernel experiment, we compare our threshold-based model ({$\text{AL}^{\text{ord-th}}$}) with {SVORIM} and {SVOREX} \citep{chu2005new}.

In our experiments, 
we performed two stages of five-fold cross validation on the training set of a random split of the dataset to tune each model's regularization constant $\lambda$. 
In the first stage, the possible values for $\lambda$ are $2^{-i}, i=\{1,3,5,7,9,11,13\}$. Using the best $\lambda$ in the first stage, we set the possible values for $\lambda$ in the second stage as  $2^{\frac{i}{2}} \lambda_0, i=\{-3,-2,-1,0,1,2,3\}$, where $\lambda_0$ 
is the best parameter obtained in the first stage.
We then create 20 independent random splits of each dataset into training and testing sets.
Using the selected parameter, we train each model on the 20 training sets and evaluate the MAE performance on the corresponding testing set. We then perform a statistical test to find whether the performance of a model is different with statistical significance from other models.
Similarly, we perform the Gaussian kernel experiments with the same model parameter settings as in the multiclass zero-one experiments.

We report the mean absolute error (MAE) averaged over the dataset splits as shown in Table \ref{table:result-linear} and Table \ref{table:result-kernel}. We highlight the results that are either the best or not worse than the best with statistical significance (under the non-parametric Wilcoxon signed-rank test with $\alpha = 0.05$) in boldface font.
We also provide the summary for each model in terms of the averaged MAE over all datasets and the number of datasets for which each model marked with boldface font in the bottom of the table.

\begin{table}[ht]
\centering
\footnotesize
\begin{tabular}{@{}lccccccccc@{}}
\toprule
\multicolumn{1}{c}{\multirow{2}{*}{Dataset}} & \multicolumn{4}{c}{Threshold-based models}                                                                       & \multicolumn{5}{c}{Multiclass-based models}                                                                 \\ \cmidrule(r){2-5} \cmidrule(l){6-10}
\multicolumn{1}{c}{}                         & \multicolumn{1}{c}{$\text{AL}^{\text{ord-th}}$} & \multicolumn{1}{c}{$\text{RED}^{\text{th}}$} & \multicolumn{1}{c}{AT} & \multicolumn{1}{c}{IT} & \multicolumn{1}{c}{$\text{AL}^{\text{ord-mc}}$} & \multicolumn{1}{c}{$\text{RED}^{\text{mc}}$} & \multicolumn{1}{c}{CSOSR} & \multicolumn{1}{c}{CSOVO} & \multicolumn{1}{c}{CSOVA}\\ \midrule \midrule
% \hline
% dataset           &  $\text{AL}^{\text{ord-th}}$         & Ext TH         & AT hinge & IT hinge & $\text{AL}^{\text{ord-mc}}$        & Ext MC         & CSOSR          & CSOVO          \\ \hline
diabetes         & \makecell{\textbf{0.696} \\ (0.13) } & \makecell{\textbf{0.715} \\ (0.19)} & \makecell{0.731 \\ (0.15)}    & \makecell{0.827 \\ (0.28)}    & \makecell{\textbf{0.692} \\ (0.14)} & \makecell{\textbf{0.700} \\ (0.15)} & \makecell{\textbf{0.715} \\ (0.19)} & \makecell{0.738 \\ (0.16)} & \makecell{\textbf{0.762} \\ (0.19)}         \\ 
pyrimidines        & \makecell{0.654 \\ (0.12)}          & \makecell{0.678 \\ (0.15)}         & \makecell{0.615 \\ (0.3)}    & \makecell{0.626 \\ (0.14)}    & \makecell{\textbf{0.509} \\ (0.12)} & \makecell{0.565 \\ (0.13)}          & \makecell{\textbf{0.520} \\ (0.13)} & \makecell{0.576 \\ (0.16)} & \makecell{\textbf{0.526} \\ (0.16)}          \\
triazines  & \makecell{\textbf{0.607} \\ (0.09)} & \makecell{0.683\\(0.11)}          & \makecell{0.649\\(0.11)}    & \makecell{0.654\\(0.12)}    & \makecell{0.670\\(0.09)}          & \makecell{0.673\\(0.11)}          & \makecell{0.677\\(0.10)}          & \makecell{0.738\\(0.10)} & \makecell{0.732\\(0.10)}         \\
wisconsin         & \makecell{\textbf{1.077}\\(0.11)} & \makecell{\textbf{1.067}\\(0.12)} & \makecell{\textbf{1.097}\\(0.11)}    & \makecell{1.175\\(0.14)}    & \makecell{1.136\\(0.11)}          & \makecell{1.141\\(0.10)}          & \makecell{1.208\\(0.12)}          & \makecell{1.275\\(0.15)} & \makecell{1.338\\(0.11)}         \\
machinecpu        & \makecell{\textbf{0.449}\\(0.09)} & \makecell{\textbf{0.456}\\(0.09)} & \makecell{\textbf{0.458}\\(0.09)}    & \makecell{\textbf{0.467}\\(0.10)}    & \makecell{0.518\\(0.11)}          & \makecell{0.515\\(0.10)}          & \makecell{0.646\\(0.10)}          & \makecell{0.602\\(0.09)} & \makecell{0.702\\(0.14)}         \\
autompg           & \makecell{\textbf{0.551}\\(0.06)} & \makecell{\textbf{0.550}\\(0.06)} & \makecell{\textbf{0.550}\\(0.06)}    & \makecell{0.617\\(0.07)}    & \makecell{0.599\\(0.06)}          & \makecell{0.602\\(0.06)}          & \makecell{0.741\\(0.07)}          & \makecell{0.598\\(0.06)}   & \makecell{0.731\\(0.07)}       \\
boston     & \makecell{0.316\\(0.03)}          & \makecell{\textbf{0.304}\\(0.03)} & \makecell{\textbf{0.306}\\(0.03)}    & \makecell{\textbf{0.298}\\(0.04)}    & \makecell{0.311\\(0.03)} & \makecell{0.311\\(0.04)}          & \makecell{0.353\\(0.05)}          & \makecell{\textbf{0.294}\\(0.04)} & \makecell{0.363\\(0.04)} \\
stocks            & \makecell{0.324\\(0.02)}          & \makecell{0.317\\(0.02)}          & \makecell{0.315\\(0.02)}    & \makecell{0.324\\(0.02)}    & \makecell{0.168\\(0.02)} & \makecell{0.175\\(0.03)} & \makecell{0.204\\(0.02)}          & \makecell{\textbf{0.147}\\(0.02)} & \makecell{0.213\\(0.02)} \\
abalone           & \makecell{0.551\\(0.02)}          & \makecell{0.547\\(0.02)}          & \makecell{0.546\\(0.02)}    & \makecell{0.571\\(0.02)}    & \makecell{\textbf{0.521}\\(0.02)} & \makecell{\textbf{0.520}\\(0.02)} & \makecell{0.545\\(0.02)}          & \makecell{0.558\\(0.02)}    & \makecell{0.556\\(0.02)}      \\
bank      & \makecell{0.461\\(0.01)}          & \makecell{0.460\\(0.01)}          & \makecell{0.461\\(0.01)}    & \makecell{0.461\\(0.01)}    & \makecell{\textbf{0.445}\\(0.01)}          & \makecell{\textbf{0.446}\\(0.01)} & \makecell{0.732\\(0.02)}          & \makecell{0.448\\(0.01)}   & \makecell{0.989\\(0.02)}       \\
computer & \makecell{0.640\\(0.02)}          & \makecell{0.635\\(0.02)}          & \makecell{0.633\\(0.02)}    & \makecell{0.683\\(0.02)}    & \makecell{\textbf{0.625}\\(0.01)}          & \makecell{\textbf{0.624}\\(0.02)}          & \makecell{0.889\\(0.02)}          & \makecell{0.649\\(0.02)}    & \makecell{1.055\\(0.02)}      \\
calhousing & \makecell{1.190\\(0.01)}          & \makecell{1.183\\(0.01)}          & \makecell{1.182\\(0.01)}    & \makecell{1.225\\(0.01)}    & \makecell{1.164\\(0.01)}          & \makecell{\textbf{1.144}\\(0.01)}          & \makecell{1.237\\(0.01)}          & \makecell{1.202\\(0.01)}       & \makecell{1.601\\(0.02)}   \\ \midrule \midrule
{\small average}           & {\small 0.626}          & {\small 0.633}          & {\small 0.629}    & {\small 0.661}    & {\small 0.613}          & {\small 0.618}          & {\small 0.706}          & {\small 0.652}     & {\small 0.797}    \\
\# bold               & 5              & 5              & 4        & 2        & 5              & 5              & 2              & 2      & 2        \\ \bottomrule
\end{tabular}
\caption{The average and (in parenthesis) standard deviation of the mean absolute error (MAE) for each model. Bold numbers in each case indicate that the result is the best or not significantly worse than the best (Wilcoxon signed-rank test with $\alpha = 0.05$).}
\label{table:result-linear}

\end{table}

As we can see from Table \ref{table:result-linear}, in the experiment with the original feature space, threshold-based models perform well on relatively small datasets, whereas multiclass-based models perform well on relatively large datasets. A possible explanation for this result is that multiclass-based models have more flexibility in creating decision boundaries, hence perform better if the training data size is sufficient. However, since multiclass-based models have many more parameters than threshold-based models ($m k$ parameters rather than $m + k-1$ parameters), multiclass methods may need more data, and hence, may not perform well on relatively small datasets.

In the threshold-based models comparison, {$\text{AL}^{\text{ord-th}}$}, {$\text{RED}^{\text{th}}$}, and {AT} perform competitively on relatively small datasets like \texttt{triazines}, \texttt{wisconsin}, \texttt{machinecpu}, and \texttt{autompg}. {$\text{AL}^{\text{ord-th}}$} has a slight advantage over {$\text{RED}^{\text{th}}$} on the overall accuracy, and a slight advantage over {AT} on the number of ``indistinguishably best'' performance on all datasets. We can also see that {AT} is superior to {IT} in the experiments under the original feature space. 
Among the multiclass-based models, {$\text{AL}^{\text{ord-mc}}$} and {$\text{RED}^{\text{mc}}$} perform competitively on datasets like \texttt{abalone}, \texttt{bank}, and \texttt{computer}, with a slight advantage of {$\text{AL}^{\text{ord-mc}}$} model on the overall accuracy. In general, the cost-sensitive models perform poorly compared with {$\text{AL}^{\text{ord-mc}}$} and {$\text{RED}^{\text{mc}}$}. A notable exception is the {CSOVO} model which perform very well on the \texttt{stocks}, and \texttt{boston} datasets.

\begin{table}[ht]
% \vspace{-5mm}
\centering
\small
\begin{tabular}{lccc}
\toprule
Dataset     & $\text{AL}^{\text{ord-th}}$         & SVORIM         & SVOREX         \\ \midrule \midrule
diabetes    & \textbf{0.696} (0.13) & \textbf{0.665} (0.14) & \textbf{0.688} (0.18)  \\
pyrimidines & \textbf{0.478} (0.11) & 0.539 (0.11)          & 0.550 (0.11)          \\
triazines   & \textbf{0.608} (0.08) & \textbf{0.612} (0.09) & \textbf{0.604} (0.08) \\
wisconsin   & \textbf{1.090} (0.10)          & 1.113 (0.12)         & \textbf{1.049} (0.09) \\
machinecpu  & \textbf{0.452} (0.09) & 0.652 (0.12)         & 0.628 (0.13)         \\
autompg     & \textbf{0.529} (0.04) & 0.589 (0.05)         & 0.593 (0.05)         \\
boston      & \textbf{0.278} (0.04) & 0.324 (0.03)         & 0.316 (0.03)          \\
stocks      & \textbf{0.103} (0.02) & \textbf{0.099} (0.01) & \textbf{0.100} (0.02) \\ \midrule \midrule
average         & 0.531          & 0.574          & 0.566          \\
\# bold     & 8              & 3              & 4              \\ \bottomrule
\end{tabular}
\caption{The mean and (in parenthesis) standard deviation of the MAE for models with Gaussian kernel. Bold numbers in each case indicate that the result is the best or not significantly worse than the best (Wilcoxon signed-rank test with $\alpha = 0.05$).}
\label{table:result-kernel}
\end{table}

In the Gaussian kernel experiment, we can see from Table \ref{table:result-kernel} that the kernelized version of {$\text{AL}^{\text{ord-th}}$} performs significantly better than the threshold-based models SVORIM and SVOREX in terms of both the overall accuracy and the number of ``indistinguishably best'' performance on all datasets. We also note that immediate-threshold-based model (SVOREX) performs better than all-threshold-based model (SVORIM) in our experiment using Gaussian kernel.
We can conclude that our proposed adversarial losses for ordinal regression perform competitively compared to the state-of-the-art ordinal regression models using both original feature spaces and kernel feature spaces with a significant performance improvement in the Gaussian kernel experiments.

\subsection{Experiments for the Classification with Abstention}

We conduct experiments for classification with abstention tasks using the same dataset as in the multiclass zero-one experiments (Table \ref{tbl:zo-dataset}). 
We compare the performance of our adversarial surrogate loss (AL$^{\text{abstain}}$) with the SVM's one-vs-all (OVA) and Crammer \& Singer (CS) formulations for classification with abstention \citep{ramaswamy2018consistent}.
We evaluate the prediction performance for a $k$-class classification using the abstention loss: 
\begin{align}
    \text{loss}(\hat{y},y) = 
    \begin{cases} 
    \alpha & \hat{y} = k+1 \\
    I(\hat{y} \neq y) & \text{otherwise},
    \end{cases}
\end{align}
where $\hat{y} = k+1$ indicates an abstain prediction, and $\alpha$ is a fixed value for the penalty for making abstain prediction. Throughout the experiments, we use the standard value of $\alpha = \frac12$.

Similar to the setup in the previous experiments, we perform two-stage, five-fold cross validation on the training set of a random split of the dataset to tune each model's parameter ($C$ or $\lambda$) and the kernel parameter $\gamma$ under the kernelized formulation. 
We then make 20 random splits of each dataset into training and testing sets. 
Using the selected parameters, we train each model on the 20 training sets and evaluate the performance on the corresponding testing set. 
In the prediction step, we use a non-probabilistic prediction scheme for AL$^{\text{abstain}}$ as presented in Corollary \ref{col:abstain}.
For the baseline methods, we use a threshold base prediction scheme as presented in \citep{ramaswamy2018consistent} with the default value of the threshold $\tau$ for each model  ($\tau = 0.5$ for the SVM-CS, and $\tau = 0$ for the SVM-OVA).

\begin{table}[ht]
        \footnotesize
	\centering

	\begin{tabular}{@{}lrrrrrrrr@{}}
		\toprule 
		\multicolumn{1}{c}{\multirow{2}{*}{Dataset}} & \multicolumn{3}{c}{Linear Kernel}                                                                       & \multicolumn{3}{c}{Gaussian Kernel}                                                                 \\ \cmidrule(r){2-4} \cmidrule(l){5-7}
		\multicolumn{1}{c}{}                         & \multicolumn{1}{c}{AL$^{\text{abstain}}$} & \multicolumn{1}{c}{OVA} & \multicolumn{1}{c}{CS} & \multicolumn{1}{c}{AL$^{\text{abstain}}$} & \multicolumn{1}{c}{OVA} & \multicolumn{1}{c}{CS}  \\ \midrule \midrule
		iris                                          
		& \makecell{\textbf{0.037} (0.02) \\ {[7\%]} }              & \makecell{0.122 (0.04) \\ {[13\%]}}            & \makecell{\textbf{0.038} (0.02) \\ {[6\%]}}            & \makecell{\textbf{0.051} (0.03) \\ {[6\%]} }              & \makecell{0.120 (0.04) \\ {[14\%]} }              & \makecell{\textbf{0.043} (0.03) \\ {[1\%]} }                \\
		glass                                         
		& \makecell{\textbf{0.380} (0.04) \\ {[40\%]}}             & \makecell{\textbf{0.393} (0.04) \\ {[27\%]}}            & \makecell{\textbf{0.379} (0.04) \\ {[38\%]}}             & \makecell{\textbf{0.302} (0.03) \\ {[37\%]}}
		              & \makecell{0.393 (0.04) \\ {[35\%]}}              & \makecell{\textbf{0.317} (0.03) \\ {[25\%]} }           \\
		redwine                                       
		& \makecell{\textbf{0.418} (0.01) \\ {[58\%]} }              & \makecell{0.742 (0.04) \\ {[50\%]}}             & \makecell{0.423 (0.01) \\ {[54\%]}}     & \makecell{\textbf{0.373} (0.01) \\ {[42\%]}}         & \makecell{0.742 (0.04) \\ {[50\%]}} & \makecell{0.391 (0.01) \\ {[58\%]}}   \\
		ecoli                                     
		& \makecell{\textbf{0.165} (0.02)\\ {[17\%]} }              & \makecell{0.222 (0.10) \\ {[11\%]} }            & \makecell{\textbf{0.213} (0.10) \\ {[15\%]} }      & \makecell{0.160 (0.03) \\ {[17\%]}}      & \makecell{0.221 (0.10) \\ {[11\%]}} & \makecell{\textbf{0.144} (0.02) \\ {[5\%]}} \\
		vehicle                                       
		& \makecell{\textbf{0.214} (0.02) \\ {[23\%]} }             & \makecell{0.231 (0.02) \\ {[17\%]} }            & \makecell{\textbf{0.216} (0.02) \\ {[20\%]} }      & \makecell{\textbf{0.206} (0.03) \\ {[20\%]}}     & \makecell{0.226 (0.03) \\ {[15\%]}} & \makecell{0.300 (0.02) \\ {[31\%]}} \\
		segment                                       
		& \makecell{0.061 (0.01) \\ {[7\%]}}              & \makecell{0.082 (0.01) \\ {[11\%]}}             & \makecell{\textbf{0.052} (0.01) \\ {[6\%]} }         & \makecell{\textbf{0.042} (0.01) \\ {[5\%]}}       & \makecell{0.084 (0.01) \\ {[11\%]}} & \makecell{0.102 (0.01) \\ {[13\%]}} \\
		sat                                           
		& \makecell{\textbf{0.147} (0.01) \\ {[14\%]}}             & \makecell{0.356 (0.01) \\ {[20\%]} }              & \makecell{0.337 (0.01) \\ {[14\%]} }       & \makecell{\textbf{0.094} (0.01) \\ {[9\%]}}       & \makecell{0.356 (0.01) \\ {[20\%]}} & \makecell{0.181 (0.01) \\ {[4\%]}} \\
		optdigits                                     
		& \makecell{\textbf{0.037} (0.01) \\ {[4\%]} }              & \makecell{0.045 (0.01) \\ {[5\%]} }             & \makecell{0.038 (0.01) \\ {5\%} }      & \makecell{0.062 (0.01) \\ {[12\%]}}      & \makecell{\textbf{0.051} (0.01) \\ {[5\%]}}  & \makecell{0.072 (0.01) \\ {[8\%]}} \\
		pageblocks                                    
		& \makecell{\textbf{0.040} (0.01) \\ {[3\%]} }              & \makecell{0.042 (0.01)  \\ {[1\%]} }            & \makecell{\textbf{0.045} (0.02) \\ {[4\%]} }      & \makecell{\textbf{0.037} (0.01) \\ {[4\%]}}      & \makecell{0.042 (0.01) \\ {[1\%]}}  & \makecell{0.060 (0.01) \\ {[4\%]}} \\
		libras                                        
		& \makecell{\textbf{0.260} (0.03) \\ {[36\%]} }             & \makecell{\textbf{0.253} (0.02) \\ {[36\%]} }            & \makecell{\textbf{0.253} (0.02) \\ {[36\%]} }     & \makecell{0.263 (0.02) \\ {[50\%]}}      & \makecell{0.362 (0.04) \\ {[4\%]}} & \makecell{\textbf{0.207} (0.03) \\ {[14\%]}}     \\
		vertebral                                     
		& \makecell{0.154 (0.02) \\ {[16\%]} }              & \makecell{\textbf{0.147} (0.02) \\ {[7\%]} }            & \makecell{0.159 (0.02) \\ {[14\%]} }       & \makecell{0.181 (0.02) \\ {[22\%]}}      & \makecell{\textbf{0.147} (0.03) \\ {[7\%]}} & \makecell{0.220 (0.04) \\ {[4\%]}}   \\
		breasttissue                                  
		& \makecell{\textbf{0.315} (0.04) \\ {[51\%]} }              & \makecell{\textbf{0.316} (0.05) \\ {[37\%]} }             & \makecell{\textbf{0.326} (0.06) \\ {[32\%]} }      & \makecell{\textbf{0.330} (0.04) \\ {[54\%]}}      & \makecell{\textbf{0.313} (0.06) \\ {[32\%]}} & \makecell{0.367 (0.03) \\ {[67\%]}}   \\ \midrule \midrule
		{\footnotesize average}
		& {\small 0.186} \hspace{4mm}             & {\small 0.246} \hspace{4mm}            & {\small 0.207} \hspace{4mm} & {\small 0.175} \hspace{4mm}    & {\small 0.255} \hspace{4mm}  & {\small 0.200} \hspace{4mm}              \\ 
		{\footnotesize \# bold} 
		& {\small 10} \hspace{6mm}             & {\small 4} \hspace{6mm}            & {\small 8} \hspace{6mm}  & {\small 8} \hspace{6mm}             & {\small 3} \hspace{6mm}            & {\small 4} \hspace{6mm}     \\ \bottomrule
		
	\end{tabular}
    \caption{The mean and (in parentheses) standard deviation of the abstention loss, and (in square bracket) the percentage of abstain predictions for each model with linear kernel and Gaussian kernel feature representations. Bold numbers in each case indicate that the result is the best or not significantly worse than the best (Wilcoxon signed-rank test with $\alpha = 0.05$).}
	\label{table-abstain}
\end{table}

We report the abstention loss averaged over the dataset splits as shown in Table \ref{table-abstain}. We highlight the results that are either the best or not worse than the best with statistical significance (under the non-parametric Wilcoxon signed-rank test with $\alpha = 0.05$) in boldface font.
We also report the average percentage of abstain predictions produced by each model in each dataset.
Finally, we provide the summary for each model in terms of the averaged abstention loss over all datasets and the number of datasets for which each model is marked with boldface font in the bottom of the table.

The results from Table \ref{table-abstain} indicates that all models output more abstain predictions in the case of the dataset with higher noise (i.e., bigger value of loss). 
The percentage of abstain predictions of AL$^{\text{abstain}}$, SVM-OVA, and SVM-CS are fairly similar. In some datasets like \texttt{segment} and \texttt{pageblocks}, all models output very rarely abstain, whereas in some datasets like \texttt{redwine} and \texttt{breasttissue}, some of the models abstain for more than 50\% of the total number of testing examples. The results show that this percentage does not depend on the number of classes. For example, both \texttt{redwine} and \texttt{optdigits} are 10-class classification problems. However, the percentage of abstain prediction for \texttt{optdigits} is far less than the one for \texttt{redwine}.

In the linear kernel experiments, the AL$^{\text{abstain}}$ performs best compared the baselines in terms of the overall abstention loss and the number of ``indistinguishably best'' performance, followed by SVM-CS and then SVM-OVA. 
The AL$^{\text{abstain}}$ has a slight advantage compared with the SVM-CS in most of the datasets in the linear kernel experiments except in few datasets that the AL$^{\text{abstain}}$ outperfoms the SVM-CS by significant margins. Overall, the SVM-OVA performs poorly on most datasets except in a few datasets (\texttt{libras}, \texttt{vertebral}, and \texttt{breasttissue}).

The introduction of non-linearity via the Gaussian kernel improves the performance of both AL$^{\text{abstain}}$ and SVM-CS as we see from Table \ref{table-abstain}. The AL$^{\text{abstain}}$ method maintains its advantages over the baselines in terms of the overall abstention loss and the number of ``indistinguishably best'' performances. 
We can conclude that AL$^{\text{abstain}}$ performs competitively compared to the baseline models using both original feature spaces and the Gaussian kernel feature spaces. We note that these competitive advantages do not have any drawbacks in terms of the computational cost compared to the baselines. As described in Section 3.5 and Section 4.3, the surrogate loss function and prediction rule are relatively simple and easy to compute.

\section{Conclusions}

In this paper, we proposed an adversarial prediction framework for general multiclass classification that seeks a predictor distribution that robustly optimizes non-convex and non-continuous multiclass loss metrics against the worst-case conditional label distributions (the adversarial distribution) constrained to (approximately) match the statistics of the training data.
The dual formulation of the framework 
resembles a risk minimization model with a convex surrogate loss we call \emph{the adversarial surrogate loss}.
These adversarial surrogate losses provide desirable properties of surrogate losses for multiclass classification. 
For example, in the case of multiclass zero-one classification, our surrogate loss fills the long-standing gap in multiclass classification by simultaneously: 
guaranteeing Fisher consistency, enabling computational efficiency via the kernel trick, and  providing competitive performance in practice. 
Our formulations for the ordinal classification problem provide novel consistent surrogate losses that have not previously been considered in the literature. 
Lastly, our surrogate loss for the classification with abstention problem provides a unique consistent method that is applicable to binary and multiclass problems, fast to compute, and also competitive in practice. 

In general, we showed that the adversarial surrogate losses for general multiclass classification problems enjoy the nice theoretical property of Fisher consistency. 
We also developed efficient algorithms for optimizing the surrogate losses and a way to incorporate rich feature representation via kernel tricks.
Finally, we demonstrated that the adversarial surrogate losses provide competitive performance in practice on several datasets taken from UCI machine learning repository.
We will investigate the adversarial prediction framework for more general loss metrics (e.g., multivariate loss metrics), and also for different prediction settings (e.g., active learning and multitask learning) in our future works.

% Acknowledgements should go at the end, before appendices and references

\acks{This research was supported as part of the Future of Life Institute (futureoflife.org) FLI-RFP-AI1 program, grant\#2016-158710 and by NSF grant RI-\#1526379.}

% Manual newpage inserted to improve layout of sample file - not
% needed in general before appendices/bibliography.
% \input{appendix.tex}

\vskip 0.2in
\setlength{\bibsep}{6pt}
\bibliography{biblio}

\begin{thebibliography}{52}
\providecommand{\natexlab}[1]{#1}
\providecommand{\url}[1]{\texttt{#1}}
\expandafter\ifx\csname urlstyle\endcsname\relax
  \providecommand{\doi}[1]{doi: #1}\else
  \providecommand{\doi}{doi: \begingroup \urlstyle{rm}\Url}\fi

\bibitem[Andr{\'e}asson et~al.(2005)Andr{\'e}asson, Evgrafov, Patriksson,
  Gustavsson, and {\"O}nnheim]{andreasson2005introduction}
Niclas Andr{\'e}asson, Anton Evgrafov, Michael Patriksson, Emil Gustavsson, and
  Magnus {\"O}nnheim.
\newblock \emph{An introduction to continuous optimization: foundations and
  fundamental algorithms}, volume~28.
\newblock Studentlitteratur Lund, 2005.

\bibitem[Asif et~al.(2015)Asif, Xing, Behpour, and
  Ziebart]{asif2015adversarial}
Kaiser Asif, Wei Xing, Sima Behpour, and Brian~D. Ziebart.
\newblock Adversarial cost-sensitive classification.
\newblock In \emph{Proceedings of the Conference on Uncertainty in Artificial
  Intelligence}, 2015.

\bibitem[Baccianella et~al.(2009)Baccianella, Esuli, and
  Sebastiani]{baccianella2009evaluation}
Stefano Baccianella, Andrea Esuli, and Fabrizio Sebastiani.
\newblock Evaluation measures for ordinal regression.
\newblock In \emph{2009 Ninth International Conference on Intelligent Systems
  Design and Applications}, pages 283--287. IEEE, 2009.

\bibitem[Bartlett and Wegkamp(2008)]{bartlett2008classification}
Peter~L Bartlett and Marten~H Wegkamp.
\newblock Classification with a reject option using a hinge loss.
\newblock \emph{The Journal of Machine Learning Research}, 9:\penalty0
  1823--1840, 2008.

\bibitem[Bartlett et~al.(2006)Bartlett, Jordan, and McAuliffe]{bartlett05}
Peter~L.\ Bartlett, Michael~I.\ Jordan, and Jon~D.\ McAuliffe.
\newblock Convexity, classification, and risk bounds.
\newblock \emph{Journal of the American Statistical Association}, 101\penalty0
  (473):\penalty0 138--156, 2006.

\bibitem[Bertsekas(1971)]{Bertsekas71}
Dimitri~P. Bertsekas.
\newblock \emph{Control of Uncertain Systems with a Set-Membership Description
  of Uncertainty}.
\newblock PhD thesis, MIT, 1971.

\bibitem[Binder et~al.(2012)Binder, M{\"u}ller, and
  Kawanabe]{binder2012taxonomies}
Alexander Binder, Klaus-Robert M{\"u}ller, and Motoaki Kawanabe.
\newblock On taxonomies for multi-class image categorization.
\newblock \emph{International Journal of Computer Vision}, 99\penalty0
  (3):\penalty0 281--301, 2012.

\bibitem[Boser et~al.(1992)Boser, Guyon, and Vapnik]{boser1992training}
Bernhard~E Boser, Isabelle~M Guyon, and Vladimir~N Vapnik.
\newblock A training algorithm for optimal margin classifiers.
\newblock In \emph{Proceedings of the Workshop on Computational Learning
  Theory}, pages 144--152, 1992.

\bibitem[Boyd and Vandenberghe(2004)]{boyd2004convex}
Stephen Boyd and Lieven Vandenberghe.
\newblock \emph{Convex optimization}.
\newblock Cambridge University Press, 2004.

\bibitem[Chu and Ghahramani(2005)]{chu2005gaussian}
Wei Chu and Zoubin Ghahramani.
\newblock Gaussian processes for ordinal regression.
\newblock \emph{Journal of Machine Learning Research}, 6\penalty0
  (Jul):\penalty0 1019--1041, 2005.

\bibitem[Chu and Keerthi(2005)]{chu2005new}
Wei Chu and S~Sathiya Keerthi.
\newblock New approaches to support vector ordinal regression.
\newblock In \emph{Proceedings of the 22nd International Conference on Machine
  Learning}, pages 145--152. ACM, 2005.

\bibitem[Cortes and Vapnik(1995)]{cortes1995support}
Corinna Cortes and Vladimir Vapnik.
\newblock Support-vector networks.
\newblock \emph{Machine Learning}, 20\penalty0 (3):\penalty0 273--297, 1995.

\bibitem[Cortes et~al.(2016)Cortes, DeSalvo, and Mohri]{cortes2016boosting}
Corinna Cortes, Giulia DeSalvo, and Mehryar Mohri.
\newblock Boosting with abstention.
\newblock In \emph{Advances in Neural Information Processing Systems}, pages
  1660--1668, 2016.

\bibitem[Crammer and Singer(2002)]{crammer2002algorithmic}
Koby Crammer and Yoram Singer.
\newblock On the algorithmic implementation of multiclass kernel-based vector
  machines.
\newblock \emph{The Journal of Machine Learning Research}, 2:\penalty0
  265--292, 2002.

\bibitem[Dantzig(1963)]{dantzig1963linear}
George Dantzig.
\newblock \emph{Linear programming and extensions}.
\newblock RAND Corporation, 1963.

\bibitem[Dantzig(1948)]{dantzig1948programming}
George~B Dantzig.
\newblock Programming in a linear structure.
\newblock \emph{Washington, DC}, 1948.

\bibitem[Delage and Ye(2010)]{delage2010distributionally}
Erick Delage and Yinyu Ye.
\newblock Distributionally robust optimization under moment uncertainty with
  application to data-driven problems.
\newblock \emph{Operations research}, 58\penalty0 (3):\penalty0 595--612, 2010.

\bibitem[Deng et~al.(2012)Deng, Tian, and Zhang]{deng2012support}
Naiyang Deng, Yingjie Tian, and Chunhua Zhang.
\newblock \emph{Support vector machines: optimization based theory, algorithms,
  and extensions}.
\newblock CRC press, 2012.

\bibitem[Do\u{g}an et~al.(2016)Do\u{g}an, Glasmachers, and
  Igel]{dogan2016unified}
{\"U}r{{\"u}}n Do\u{g}an, Tobias Glasmachers, and Christian Igel.
\newblock A unified view on multi-class support vector classification.
\newblock \emph{Journal of Machine Learning Research}, 17\penalty0
  (45):\penalty0 1--32, 2016.

\bibitem[Fathony et~al.(2016)Fathony, Liu, Asif, and
  Ziebart]{fathony2016adversarial}
Rizal Fathony, Anqi Liu, Kaiser Asif, and Brian Ziebart.
\newblock Adversarial multiclass classification: A risk minimization
  perspective.
\newblock In \emph{Advances in Neural Information Processing Systems}, pages
  559--567, 2016.

\bibitem[Fathony et~al.(2017)Fathony, Bashiri, and
  Ziebart]{fathony2017adversarial}
Rizal Fathony, Mohammad~Ali Bashiri, and Brian Ziebart.
\newblock Adversarial surrogate losses for ordinal regression.
\newblock In \emph{Advances in Neural Information Processing Systems}, pages
  563--573, 2017.

\bibitem[Fathony et~al.(2018)Fathony, Behpour, Zhang, and
  Ziebart]{fathony2018efficient}
Rizal Fathony, Sima Behpour, Xinhua Zhang, and Brian Ziebart.
\newblock Efficient and consistent adversarial bipartite matching.
\newblock In \emph{International Conference on Machine Learning}, pages
  1456--1465, 2018.

\bibitem[Freund and Schapire(1997)]{freund1997decision}
Yoav Freund and Robert~E Schapire.
\newblock A decision-theoretic generalization of on-line learning and an
  application to boosting.
\newblock \emph{Journal of Computer and System Sciences}, 55\penalty0
  (1):\penalty0 119--139, 1997.

\bibitem[Grandvalet et~al.(2009)Grandvalet, Rakotomamonjy, Keshet, and
  Canu]{grandvalet2009support}
Yves Grandvalet, Alain Rakotomamonjy, Joseph Keshet, and Stephane Canu.
\newblock Support vector machines with a reject option.
\newblock In \emph{Advances in Neural Information Processing Systems}, pages
  537--544, 2009.

\bibitem[Gr\"unwald and Dawid(2004)]{grunwald2004game}
Peter~D. Gr\"unwald and A.~Phillip Dawid.
\newblock Game theory, maximum entropy, minimum discrepancy, and robust
  {B}ayesian decision theory.
\newblock \emph{Annals of Statistics}, 32:\penalty0 1367--1433, 2004.

\bibitem[Hoffgen et~al.(1995)Hoffgen, Simon, and Vanhorn]{hoffgen1995robust}
Klaus-Uwe Hoffgen, Hans-Ulrich Simon, and Kevin~S Vanhorn.
\newblock Robust trainability of single neurons.
\newblock \emph{Journal of Computer and System Sciences}, 50\penalty0
  (1):\penalty0 114--125, 1995.

\bibitem[Igel et~al.(2008)Igel, Heidrich-Meisner, and Glasmachers]{shark08}
Christian Igel, Verena Heidrich-Meisner, and Tobias Glasmachers.
\newblock Shark.
\newblock \emph{Journal of Machine Learning Research}, 9:\penalty0 993--996,
  2008.

\bibitem[Karmarkar(1984)]{karmarkar1984new}
Narendra Karmarkar.
\newblock A new polynomial-time algorithm for linear programming.
\newblock In \emph{Proceedings of the Sixteenth Annual ACM Symposium on Theory
  of Computing}, pages 302--311. ACM, 1984.

\bibitem[Lee et~al.(2004)Lee, Lin, and Wahba]{lee2004multicategory}
Yoonkyung Lee, Yi~Lin, and Grace Wahba.
\newblock Multicategory support vector machines: Theory and application to the
  classification of microarray data and satellite radiance data.
\newblock \emph{Journal of the American Statistical Association}, 99\penalty0
  (465):\penalty0 67--81, 2004.

\bibitem[Li and Lin(2007)]{li2007ordinal}
Ling Li and Hsuan-Tien Lin.
\newblock Ordinal regression by extended binary classification.
\newblock \emph{Advances in Neural Information Processing Systems},
  19:\penalty0 865, 2007.

\bibitem[Lichman(2013)]{lichman2013UCIML}
M.~Lichman.
\newblock {UCI} machine learning repository, 2013.
\newblock URL \url{http://archive.ics.uci.edu/ml}.

\bibitem[Lin(2008)]{lin2008ordinal}
Hsuan-Tien Lin.
\newblock \emph{From ordinal ranking to binary classification}.
\newblock PhD thesis, California Institute of Technology, 2008.

\bibitem[Lin(2014)]{lin2014reduction}
Hsuan-Tien Lin.
\newblock Reduction from cost-sensitive multiclass classification to
  one-versus-one binary classification.
\newblock In \emph{Proceedings of the Sixth Asian Conference on Machine
  Learning}, pages 371--386, 2014.

\bibitem[Lin and Li(2006)]{lin2006large}
Hsuan-Tien Lin and Ling Li.
\newblock Large-margin thresholded ensembles for ordinal regression: Theory and
  practice.
\newblock In \emph{International Conference on Algorithmic Learning Theory},
  pages 319--333. Springer, 2006.

\bibitem[Lin(2002)]{lin2002support}
Yi~Lin.
\newblock Support vector machines and the bayes rule in classification.
\newblock \emph{Data Mining and Knowledge Discovery}, 6\penalty0 (3):\penalty0
  259--275, 2002.

\bibitem[Liu(2007)]{liu2007fisher}
Yufeng Liu.
\newblock Fisher consistency of multicategory support vector machines.
\newblock In \emph{International Conference on Artificial Intelligence and
  Statistics}, pages 291--298, 2007.

\bibitem[McCullagh and Nelder(1989)]{mccullagh1989generalized}
Peter McCullagh and John~A Nelder.
\newblock \emph{Generalized linear models}, volume~37.
\newblock CRC press, 1989.

\bibitem[Pedregosa et~al.(2017)Pedregosa, Bach, and
  Gramfort]{pedregosa2017consistency}
Fabian Pedregosa, Francis Bach, and Alexandre Gramfort.
\newblock On the consistency of ordinal regression methods.
\newblock \emph{Journal of Machine Learning Research}, 18\penalty0
  (55):\penalty0 1--35, 2017.

\bibitem[Ramaswamy and Agarwal(2012)]{ramaswamy2012classification}
Harish~G Ramaswamy and Shivani Agarwal.
\newblock Classification calibration dimension for general multiclass losses.
\newblock In \emph{Advances in Neural Information Processing Systems}, pages
  2078--2086, 2012.

\bibitem[Ramaswamy and Agarwal(2016)]{ramaswamy2016convex}
Harish~G Ramaswamy and Shivani Agarwal.
\newblock Convex calibration dimension for multiclass loss matrices.
\newblock \emph{The Journal of Machine Learning Research}, 17\penalty0
  (1):\penalty0 397--441, 2016.

\bibitem[Ramaswamy et~al.(2018)Ramaswamy, Tewari, Agarwal,
  et~al.]{ramaswamy2018consistent}
Harish~G Ramaswamy, Ambuj Tewari, Shivani Agarwal, et~al.
\newblock Consistent algorithms for multiclass classification with an abstain
  option.
\newblock \emph{Electronic Journal of Statistics}, 12\penalty0 (1):\penalty0
  530--554, 2018.

\bibitem[Rennie and Srebro(2005)]{rennie2005lossfunctions}
Jason D.~M. Rennie and Nathan Srebro.
\newblock Loss functions for preference levels: Regression with discrete
  ordered labels.
\newblock In \emph{Proceedings of the IJCAI Multidisciplinary Workshop on
  Advances in Preference Handling}, pages 180--186, 2005.

\bibitem[Rockafellar(1970)]{Rockafellar70}
Ralph~Tyrell Rockafellar.
\newblock \emph{Convex Analysis}.
\newblock Princeton Mathematics Series. Princeton University Press, Princeton,
  NJ, 1970.

\bibitem[Shalev-Shwartz et~al.(2011)Shalev-Shwartz, Singer, Srebro, and
  Cotter]{shalev2011pegasos}
Shai Shalev-Shwartz, Yoram Singer, Nathan Srebro, and Andrew Cotter.
\newblock Pegasos: Primal estimated sub-gradient solver for {SVM}.
\newblock \emph{Mathematical Programming}, 127\penalty0 (1):\penalty0 3--30,
  2011.

\bibitem[Shashua and Levin(2003)]{shashua2003ranking}
Amnon Shashua and Anat Levin.
\newblock Ranking with large margin principle: Two approaches.
\newblock In \emph{Advances in Neural Information Processing Systems 15}, pages
  961--968. MIT Press, 2003.

\bibitem[Sion(1958)]{sion1958general}
M.~Sion.
\newblock On general minimax theorems.
\newblock \emph{Pacific Journal of Mathematics}, 8\penalty0 (1):\penalty0
  171--176, 1958.

\bibitem[Steinwart and Christmann(2008)]{Steinwart2008SVM}
Ingo Steinwart and Andreas Christmann.
\newblock \emph{Support Vector Machines}.
\newblock Springer Publishing Company, Incorporated, 1st edition, 2008.
\newblock ISBN 0387772413.

\bibitem[Tewari and Bartlett(2007)]{tewari2007consistency}
Ambuj Tewari and Peter~L Bartlett.
\newblock On the consistency of multiclass classification methods.
\newblock \emph{The Journal of Machine Learning Research}, 8:\penalty0
  1007--1025, 2007.

\bibitem[Tu and Lin(2010)]{tu2010one}
Han-Hsing Tu and Hsuan-Tien Lin.
\newblock One-sided support vector regression for multiclass cost-sensitive
  classification.
\newblock In \emph{Proceedings of the 27th International Conference on Machine
  Learning (ICML-10)}, pages 1095--1102, 2010.

\bibitem[Vapnik(1992)]{vapnik1992principles}
Vladimir Vapnik.
\newblock Principles of risk minimization for learning theory.
\newblock In \emph{Advances in Neural Information Processing Systems}, pages
  831--838, 1992.

\bibitem[Von~Neumann and Morgenstern(1945)]{von1945theory}
John Von~Neumann and Oskar Morgenstern.
\newblock Theory of games and economic behavior.
\newblock \emph{Bulletin of the American Mathematical Society}, 51\penalty0
  (7):\penalty0 498--504, 1945.

\bibitem[Weston et~al.(1999)Weston, Watkins, et~al.]{weston1999support}
Jason Weston, Chris Watkins, et~al.
\newblock Support vector machines for multi-class pattern recognition.
\newblock In \emph{ESANN}, volume~99, pages 219--224, 1999.

\end{thebibliography}

\end{document}